\theoremstyle{definition}
\newtheorem{thm}{Theorem}
\newtheorem{prop}{Proposition} 
\newtheorem{lem}{Lemma}
\newtheorem{cor}{Corollary}
\newcommand{\pairp}[2]{\left\langle #1, #2 \right\rangle_\mathbf{P}}
\newcommand{\pairps}[2]{\left\langle #1, #2 \right\rangle_{\mathbf{P}_s}}
\newcommand{\E}{\mathbb{E}}
\newcommand{\R}{\mathbb{R}}
\newcommand{\sS}{\mathbb{S}}
\newcommand{\NN}{{\mathcal{N}}}
\newcommand{\lmax}{\ell_{\text{max}}}
\newcommand{\norm}[1]{\left\lVert#1\right\rVert}
\newcommand{\normc}[1]{\left\lVert#1\right\rVert_{\mathbf{c}}}
\newcommand{\normp}[1]{\left\lVert#1\right\rVert_{\mathbf{P}}}
\newcommand{\normps}[1]{\left\lVert#1\right\rVert_{\mathbf{P}_s}}
\newcommand{\abs}[1]{\left|#1\right|}
\newcommand{\ceil}[1]{\lceil#1\rceil}
\newcommand{\comment}[1]{}
\newcommand{\dist}[2]{\text{dist}(#1,\Pi^d_{#2})}
\newcommand{\revise}[1]{\textcolor{black}{#1}}
\newcommand{\bxi}{{\boldsymbol{\xi}}}
\newcommand{\relu}{{\rm ReLU}}
\newcommand{\Dc}{\mathbf{D}_{\boldsymbol{c}}}
\def\eqref#1{equation~\ref{#1}}
\def\ceil#1{\lceil #1 \rceil}
\def\1{\bm{1}}
\DeclareMathAlphabet{\mathsfit}{\encodingdefault}{\sfdefault}{m}{sl}
\SetMathAlphabet{\mathsfit}{bold}{\encodingdefault}{\sfdefault}{bx}{n}
\def\sS{{\mathbb{S}}}
\DeclareMathOperator*{\argmin}{arg\,min}
\title{Tuning Frequency Bias in Neural Network Training with Nonuniform Data}
\author{Annan Yu \\
Center for Applied Mathematics \\ 
Cornell University \\
\texttt{ay262@cornell.edu}
\And
Yunan Yang\\
Mathematics Department\\ 
Cornell University \\
\texttt{yy837@cornell.edu}
\And 
Alex Townsend\\
Mathematics Department\\ 
Cornell University \\
\texttt{townsend@cornell.edu}
}
\begin{document}

\maketitle

\begin{abstract}
Small generalization errors of over-parameterized neural networks (NNs) can be partially explained by the frequency biasing phenomenon, where gradient-based algorithms minimize the low-frequency misfit before reducing the high-frequency residuals. Using the Neural Tangent Kernel (NTK), one can provide a theoretically rigorous analysis for training where data are drawn from constant or piecewise-constant probability densities. Since most training data sets are not drawn from such distributions, we use the NTK model and a data-dependent quadrature rule to theoretically quantify the frequency biasing of NN training given fully nonuniform data. By replacing the loss function with a carefully selected Sobolev norm, we can further amplify, dampen, counterbalance, or reverse the intrinsic frequency biasing in NN training.
\end{abstract}

\section{Introduction}
Neural networks (NNs) are often trained in supervised learning on a small data set. They are observed to provide accurate predictions for a large number of test examples that are not seen during training. A mystery is how training can achieve quite small generalization errors in an overparameterized NN and a so-called ``double-descent'' risk curve~\citep{belkin2019reconciling}. In recent years, a potential answer has emerged called ``frequency biasing,'' which is the phenomenon that in the early epochs of training, an overparameterized NN finds a low-frequency fit of the training data while higher frequencies are learned in later epochs~\citep{rahaman2019spectral,yang2019fine,xu2020frequency}. Currently, frequency biasing is theoretically understood via the Neural Tangent Kernel (NTK)~\citep{jacot2018neural} for uniform  training data~\citep{arora,cao2019towards,basri} and data distributed according to a piecewise constant probability measure~\citep{basri2020frequency}. However, most training data sets in practice are highly clustered and not uniform. Yet, frequency biasing is still observed during NN training~\citep{fridovich2021spectral}, even though the theory is absent.  This paper proves that frequency biasing is present when there is nonuniform training data by using a new viewpoint based on a data-dependent quadrature rule. We use this theory to propose new loss functions for NN training that accelerate its convergence and improve stability with respect to noise. 

A NN function is a map $\NN : \mathbb{R}^d \rightarrow \mathbb{R}$ given by 
\begin{align*}
    \NN(\mathbf{x}) = \mathbf{W}_{N_L}\sigma\left( \mathbf{W}_{N_L-1}\sigma\left(\cdots\left( \mathbf{W}_2\sigma\left(\mathbf{W}_1\mathbf{x} + {\mathbf{b}_1}\right) + {\mathbf{b}_2}\right)+\cdots\right) + {\mathbf{b}_{N_L-1}}\right) + {\mathbf{b}_{N_L}},
\end{align*}
where $\mathbf{W}_{i} \in \R^{m_{i}\times m_{i-1}}$ are weights, $m_0 = d$, $\mathbf{b}_i \in \R^{m_i}$ are biases, and $N_L$ is the number of layers. Here,
$\sigma$ is the activation function and applied entry-wise to a vector, i.e., $\sigma(\mathbf a)_j = \sigma( \mathbf{a}_{j})$. In supervised learning, the goal of NN training is to learn weights and bias terms in a NN function, which we denote by $\NN(\mathbf x)$, given training data $(\mathbf{x}_i,y_i)$ for $1\leq i\leq n$, where $\mathbf{x}_i\in\mathbb{R}^d$ and $y\in\mathbb{R}$. To introduce a continuous perspective, we assume that there is an underlying function $g: \mathbb{R}^d\rightarrow \mathbb{R}$ such that $y_i = g(\mathbf{x}_i)$ for $1\leq i\leq n$ and the $\mathbf{x}_i$'s are distributed according to a distribution $\mu(\mathbf{x})$. The training procedure is often a gradient-based optimization algorithm that minimizes the residual in the squared $L^2(d\mu)$ norm, i.e.,  
\begin{equation}\label{eq:framework1}
    \Phi(\mathbf{W}) = \frac{1}{2}   \int_{\R^d}  | g(\mathbf{x}) - \NN(\mathbf{x};\mathbf{W})|^2 d\mu(\mathbf{x}) \approx \frac{1}{2n} \sum_{i=1}^n  |y_i - \NN(\mathbf{x}_i;\mathbf{W})|^2,
\end{equation}
where $\mathbf{W}$ represents the weights and bias terms. In this paper, we consider ReLU NNs, which are NNs for which the activation function is the ReLU function given by ${\rm ReLU}(t) = \max(t,0)$. The ReLU activation function has a useful multiplicative property that ${\rm ReLU}(\alpha t) = \alpha {\rm ReLU}(t)$ for any $\alpha > 0$. Due to this property, we assume that the training samples are normalized so that $\mathbf{x}_1,\ldots,\mathbf{x}_n\in\mathbb{S}^{d-1}$. Similar to most theoretical studies investigating frequency biasing, we restrict ourselves to 2-layer NNs~\citep{arora,basri,suyang,cao2019towards}.

To study NN training, it is common to consider the dynamics of $\Phi(\mathbf{W})$ as one optimizes the weights and biases in $\mathbf{W}$. For example, the gradient flow of the NN weights is given by $\frac{d\mathbf{W}}{dt} = - \frac{\partial \Phi}{\partial \mathbf{W}}$. Define the residual $z(\mathbf{x};\mathbf{W}) =  g(\mathbf{x}) - \mathcal{N}(\mathbf{x};\mathbf{W}) $. We have
\begin{equation}\label{eq:residual_dynamics}
    \frac{d z(\mathbf{x};\mathbf{W}) }{dt} = -  \int_{\sS^{d-1}} K(\mathbf{x},\mathbf{x}';\mathbf{W}) z(\mathbf{x}';\mathbf{W}) d\mu(\mathbf{x}'),
\end{equation}
where $K(\mathbf{x},\mathbf{x}';\mathbf{W}) = \big  \langle \frac{\partial \mathcal{N}(\mathbf{x};\mathbf{W}) }{\partial \mathbf{W}},\frac{ \partial \mathcal{N}(\mathbf{x}';\mathbf{W})}{\partial \mathbf{W}}  \big \rangle$. Under the assumptions that the weights do not change much during training, one can consider the NTK in the mean-field regime given the underlying time-independent distribution of $\mathbf{W}$, i.e., $K^\infty(\mathbf{x},\mathbf{x}') = \mathbb{E}_\mathbf{W} [K(\mathbf{x},\mathbf{x}';\mathbf{W})]$~\citep{du}. Based on~\cref{eq:residual_dynamics}, one can understand the decay of the residual by studying the reproducing kernel Hilbert space (RKHS)
through a spectral decomposition of the integral operator $\mathcal{L}$ defined by $(\mathcal{L} z)(\mathbf{x}) = \int K^\infty(\mathbf{x},\mathbf{x}') z(\mathbf{x}') d\mu(\mathbf{x}')$. 
Most results in the literature require $\mu(\mathbf{x})$ to be the uniform distribution over the sphere so that the eigenfunctions of $\mathcal{L}$ are spherical harmonics and the eigenvalues have explicit forms~\citep{cao2019towards,basri,scetbon2021spectral}. These explicit formulas for the eigenvalues and eigenfunctions of $\mathcal{L}$ rely on the Funk--Hecke theorem, which provides a formula allowing one to express an integral over a hypersphere by an integral over an interval~\citep{seeley1966spherical}.  The frequency biasing of NN training can be explained by the fact that low-degree spherical harmonic polynomials are eigenfunctions of $\mathcal{L}$ associated with large eigenvalues~\citep{basri}. Thus, for uniform training data, the optimization of the weights and biases of an NN tends to fit the low-frequency components of the residual first.

When $\mu(\mathbf{x})$ is nonuniform, it is difficult to analyze the spectral properties of $\mathcal{L}$ and thus the frequency biasing properties of NN training. Since the Funk--Hecke formula no longer holds, there are only a few special cases where frequency biasing is understood~\cite[Sec.~4.3]{williams2006gaussian}. Although one may derive asymptotic bounds for the eigenvalues~\citep{widom1963asymptotic,widom1964asymptotic,bach2002kernel}, it is very hard to obtain formulas for the eigenfunctions, and one usually relies on numerical approximations~\citep{baker1977numerical}.  For the ReLU-based NTK, \citep{basri2020frequency} provided explicit eigenfunctions assuming that the $\mu(\mathbf{x})$ is piecewise constant on $\mathbb{S}^1$, but this analysis does not generalize to higher dimension. To study the frequency biasing of NN training, one needs to understand both the eigenvalues and eigenfunctions of $\mathcal{L}$, and this remains a significant challenge for a general $\mu(\mathbf{x})$ due to the absence of the Funk--Hecke formula.

To overcome this challenge, we take a radically different point-of-view. While it is standard to discretize the integral in~\cref{eq:framework1} using a Monte Carlo-like average, we discretize it using a data-dependent quadrature rule where the nodes are at the training data. That is, we investigate frequency biasing of NN training when minimizing the residual in the standard squared $L^2$ norm: 
\begin{equation}\label{eq:framework2}
    \widetilde \Phi(\mathbf{W}) = \frac{1}{2}   \int_{\sS^{d-1}}  | g(\mathbf{x}) - \NN(\mathbf{x};\mathbf{W})|^2 d\mathbf{x} \approx \frac{1}{2} \sum_{i=1}^n  c_i |y_i - \NN(\mathbf{x}_i;\mathbf{W})|^2,
\end{equation}
where $c_1,\ldots,c_n$ are the quadrature weights associated with the (nonuniform) input data $\mathbf{x}_1,\ldots,\mathbf{x}_n$. If $\mathbf{x}_1,\ldots,\mathbf{x}_n$ are drawn from a uniform distribution over the hypersphere, then one can select $c_i = A_d/n$ for $1\leq i\leq n$, where $A_d$ is the Lebesgue measure of the hypersphere; otherwise, one can choose any quadrature weights so that the integration rule is accurate (see~\cref{sec:quaderr}). If $\mathbf{x}_1,\ldots,\mathbf{x}_n$ are drawn at random from $\mu(\mathbf{x})$, then it is often reasonable to select $c_i = 1/(n p(\mathbf{x}_i))$, where $d\mu(\mathbf{x}) = p(\mathbf{x}) d\mathbf{x}$. While $c_1,\ldots,c_n$ depend on $\mathbf{x}_1,\ldots,\mathbf{x}_n$, the continuous expression for $\widetilde \Phi(\mathbf{W})$ is always unaltered in~\cref{eq:framework2}. Therefore, we can use the Funk--Hecke formula to analyze the eigenvalues and eigenfunctions of $\tilde{\mathcal{L}}$ defined by $(\tilde{\mathcal{L}} z)(\mathbf{x}) = \int_{\sS^{d-1}} K^\infty(\mathbf{x},\mathbf{x}') z(\mathbf{x}') d\mathbf{x}'$, allowing us to understand frequency biasing.

In this paper, we propose to minimize the residual in a squared Sobolev $H^s$ norm for a carefully selected $s\in\mathbb{R}$. Unlike the $L^2$ norm (the case of $s=0$), the $H^s$ norm for $s\neq 0$ has its own frequency bias. For $s>0$, $H^s$ penalizes high frequencies more than low, while for $s<0$, low frequencies are penalized the most. We implement the squared $H^s$ norm using a quadrature rule, which induces a different integral operator $\mathcal{L}_s$. We analyze the eigenvalues and eigenfunctions of $\mathcal{L}_s$, and consequently, the frequency biasing in the NN training using the Funk--Hecke formula. Given our new understanding of frequency biasing, we select $s$ so that the $H^s$ norm amplifies, dampens, counterbalances, or reverses the natural frequency biasing from an overparameterized NN training.

{\bf Contributions.} 
We have three main contributions. 

(1) From our quadrature point-of-view, we analyze the frequency biasing in training a 2-layer overparameterized ReLU NN with nonuniform training data. In Theorem~\ref{thm.freqbias}, we show that the theory of frequency biasing in~\citep{basri} for uniform training data continues to hold in the nonuniform case up to quadrature errors. In Theorem~\ref{thm.quaderr}, we provide control of the quadrature errors.

(2) We use our knowledge of frequency biasing to modify the usual squared $L^2$ loss function to a squared $H^s$ norm. By carefully selecting $s$, we can amplify, dampen, counterbalance, or reverse the intrinsic frequency biasing in NN training and accelerate the observed convergence of gradient-based optimization procedures. 

(3) A potential issue with the $H^s$ norm is the difficulties of implementing with high-dimensional training data. Using an image dataset of dimension $28^2=784$, we show how to use an encoder-decoder architecture to implement a practical version of the squared $H^s$ norm loss and adjust the frequency biasing in NN training to suppress noises of different frequencies. (see Figure~\ref{fig:autoencoder}). 

\section{Preliminaries and notation}\label{section.prelim}
For $d>1$, let $g: \sS^{d-1} \rightarrow \R$ be a square-integrable function defined on $\sS^{d-1}$. The function $g$ can be expressed in a spherical harmonic expansion given by 
\begin{equation}\label{eq:g_lp}
g(\mathbf{x}) = \sum_{\ell=0}^\infty \sum_{p=1}^{N(d,\ell)} \hat{g}_{\ell,p} Y_{\ell,p}(\mathbf{x}), \qquad \hat{g}_{\ell,p} = \int_{\mathbb{S}^{d-1}}g(\mathbf{x}) {Y_{\ell,p}}(\mathbf{x}) d\mathbf{x},
\end{equation}
where $Y_{\ell,p}$ is the spherical harmonic function of degree $\ell$ and order $p$. Here, $N(d,\ell)$ is the number of spherical harmonic functions of degree $\ell$ so that $N(d,0) = 1$ and $N(d,\ell) = (2\ell+d-2)\Gamma(\ell+d-2)/(\Gamma(\ell+1)\Gamma(d-1))$ for $\ell\geq 1$. The set $\{Y_{\ell,p}\}_{\ell \geq 0,1 \leq p \leq N(d,\ell)}$ is an orthonormal basis for $L^2(\sS^{d-1})$. Let $\mathcal{H}^d_\ell$ be the span of $\{Y_{\ell,p}\}_{p=1}^{N(d,\ell)}$. Let $\Pi_\ell^d = \bigoplus_{j=0}^\ell \mathcal{H}_j^d$ be the space of spherical harmonics of degree $\leq\ell$.

Given distinct training data $\{\mathbf{x}_i\}_{i=1}^n$ from $\sS^{d-1}$ and evaluations $y_i = g(\mathbf{x}_i)$ for $1\leq i\leq n$, our goal is to understand the intrinsic  frequency-biasing behavior of training a 2-layer ReLU NN given by   
\begin{equation}\label{eq:NN}
    \NN(\mathbf{x}) = \frac{1}{\sqrt{m}} \sum_{r=1}^m a_r {\rm ReLU}(\mathbf{w}_r^\top \mathbf{x} + {b}_r), \qquad {\rm ReLU}(t) = \max(t,0),
\end{equation}
where $m$ is the number of neurons, $\mathbf{w}_1,\ldots,\mathbf{w}_m$ and $a_1,\ldots,a_m$ are weights, and ${b}_1,\ldots,{b}_m$ are the bias terms. We begin with the same setup as in~\cite{basri2020frequency}: assuming that (1) $\mathbf{w}_1,\ldots,\mathbf{w}_m$ are initialized independently and identically distributed (iid) from Gaussian random variables with covariance matrix $\kappa^2\mathbf{I}$, where $\kappa > 0$, (2) the bias terms are initialized to zero, and (3) $a_1,\ldots,a_m$ are initialized iid as $+1$ with probability $1/2$ and $-1$ otherwise, and $\{a_r\}$ are not updated during the training process.

We use a gradient-based optimization scheme to train for the weights and biases and aim to minimize the residual defined by a symmetric positive definite matrix $\mathbf{P}$, which can be neatly written as 
\begin{equation}\label{eq:GeneralLossFunction}
    \Phi_{\mathbf{P}}(\mathbf{W}) = \frac{1}{2} (\mathbf{y} - \mathbf{u})^\top \mathbf{P} (\mathbf{y} - \mathbf{u}),
\end{equation} 
where $\mathbf{y} = (g(\mathbf{x}_1), \ldots, g(\mathbf{x}_n))^\top$ and $\mathbf{u} = (\NN(\mathbf{x}_1), \ldots, \NN(\mathbf{x}_n))^\top$. For example, we have $\mathbf{P} = n^{-1} \mathbf{I}$ in~\cref{eq:framework1} and $\mathbf{P} = {\rm diag}(c_1,\ldots,c_n)$ in~\cref{eq:framework2}. Recall that $\mathbf{W}$ represents all the weights and biases of the NN. Given the loss function, we train the NN based on the gradient descent algorithm:
\begin{equation}\label{eq.gradientdescent}
 {\mathbf{w}}_r(k+1) - {\mathbf{w}}_r(k) = -\eta \frac{\partial \Phi_{\mathbf{P}}}{\partial {\mathbf{w}}_r},\qquad 
    {b}_r(k+1) - {b}_r(k) = -\eta \frac{\partial \Phi_{\mathbf{P}}}{\partial {b}_r},
\end{equation}
where $k$ is the iteration number and $\eta > 0$ is the learning rate. The matrix $\mathbf{P}$ induces an inner product $\pairp{\boldsymbol{\xi}}{\boldsymbol{\zeta}} = \boldsymbol{\xi}^\top \mathbf{P} \boldsymbol{\zeta}$, which leads to a finite-dimensional Hilbert space with the norm $\norm{\boldsymbol{\xi}}_{\mathbf{P}} = \sqrt{\pairp{\boldsymbol{\xi}}{\boldsymbol{\xi}}}$. Given a matrix $\mathbf{A} \in \R^{n \times n}$, we define its operator norm 
$\normp{\mathbf{A}} = \sup_{\boldsymbol{\xi} \in \R^n, \normp{\boldsymbol{\xi}} = 1} \normp{\mathbf{A} \boldsymbol{\xi}}$. We also define a finite positive number that depends on $\mathbf{P}$:
\revise{
\begin{equation}
    M_{\mathbf{P}} \!=\!\!\! \sup_{\boldsymbol{\xi} \in \R^n, \norm{\boldsymbol{\xi}}_2 = 1} \!\!\! \normp{\boldsymbol{\xi}} \!=\!\!\! \sup_{\boldsymbol{\xi} \in \R^n \setminus \{\mathbf{0}\}} \!\! \sqrt{\!\frac{\boldsymbol{\xi}^\top \mathbf{P} \boldsymbol{\xi}}{\boldsymbol{\xi}^\top\boldsymbol{\xi}}} \!=\!\!\! \sup_{\boldsymbol{\zeta} \in \R^n \setminus \{\mathbf{0}\}}\!\! \sqrt{\!\frac{\boldsymbol{\zeta}^\top \mathbf{P}^{1/2} \mathbf{P} \mathbf{P}^{1/2} \boldsymbol{\zeta}}{\boldsymbol{\zeta}^\top \mathbf{P}^{1/2}\mathbf{P}^{1/2} \boldsymbol{\zeta}}} \!=\!\!\! \sup_{\boldsymbol{\zeta} \in \R^n, \normp{\boldsymbol{\zeta}} = 1} \!\!\! \norm{\mathbf{P}\boldsymbol{\zeta}}_2.
    \label{eq:constants}
\end{equation}
Note that by the third expression in~\cref{eq:constants}, we also have $M_{\mathbf{P}} \!=\! \norm{\mathbf{P}^{1/2}}_2 \!=\! \sqrt{\norm{\mathbf{P}}_2}$.} Furthermore, we define the matrix $\mathbf{H}^\infty \in \R^{n \times n}$ by
\begin{equation}\label{eq.Hinf}
    H^\infty_{ij} = \mathbb{E}_{\substack{\mathbf{w} \sim \mathcal{N}(\mathbf{0},\kappa^2\mathbf{I})}}\left[\frac{{\mathbf{x}}_i^\top{\mathbf{x}}_j + 1}{2} \mathbb{I}_{\{\mathbf{w}^\top \mathbf{x}_i, \mathbf{w}^\top \mathbf{x}_j \geq 0\}}\right] = \frac{({\mathbf{x}}_i^\top {\mathbf{x}}_j + 1) (\pi - \arccos(\mathbf{x}_i^\top \mathbf{x}_j))}{4\pi}.
\end{equation}
Note that due to the introduction of the biases, $\mathbf{H}^\infty$ is slightly different than the one in~\citep{du,arora}. In fact, in contrast with~\citep{du}, $\mathbf{H}^\infty$ defined in~\cref{eq.Hinf} is positive definite regardless of the distribution of the training data, as shown in the supplementary material.
\begin{prop}\label{thm.HisSPD}
If $\mathbf{x}_1, \ldots, \mathbf{x}_n$ are distinct, then $\mathbf{H}^\infty$ in~\cref{eq.Hinf} is symmetric and positive definite.
\end{prop}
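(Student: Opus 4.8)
The plan is to obtain symmetry for free and to establish positive definiteness by combining the integral representation of $\mathbf{H}^\infty$ with the geometry of the central hyperplane arrangement cut out by the data. Symmetry is immediate because $H^\infty_{ij}$ depends on $i,j$ only through $\mathbf{x}_i^\top\mathbf{x}_j$. For the rest, I would introduce the augmented points $\tilde{\mathbf{x}}_i=(\mathbf{x}_i^\top,1)^\top/\sqrt2\in\R^{d+1}$, so that $\tilde{\mathbf{x}}_i^\top\tilde{\mathbf{x}}_j=(\mathbf{x}_i^\top\mathbf{x}_j+1)/2$ and $\tilde{\mathbf{x}}_i\ne\mathbf{0}$, and rewrite the quadratic form using~\cref{eq.Hinf}: for every $\mathbf{v}=(v_1,\dots,v_n)^\top\in\R^n$,
\[
\mathbf{v}^\top\mathbf{H}^\infty\mathbf{v}=\mathbb{E}_{\mathbf{w}\sim\mathcal{N}(\mathbf{0},\kappa^2\mathbf{I})}\!\left[\,\norm{\textstyle\sum_{i=1}^n v_i\,\mathbb{I}_{\{\mathbf{w}^\top\mathbf{x}_i\ge0\}}\,\tilde{\mathbf{x}}_i}_2^2\,\right]\ge0,
\]
which already shows $\mathbf{H}^\infty$ is positive semidefinite. (Equivalently, $\mathbf{H}^\infty$ is the Hadamard product of the Gram matrix $[\tilde{\mathbf{x}}_i^\top\tilde{\mathbf{x}}_j]$ with the ``angle matrix'' $[(\pi-\arccos\mathbf{x}_i^\top\mathbf{x}_j)/(2\pi)]=\mathbb{E}_\mathbf{w}[\mathbf{q}(\mathbf{w})\mathbf{q}(\mathbf{w})^\top]$, where $\mathbf{q}(\mathbf{w})_i=\mathbb{I}_{\{\mathbf{w}^\top\mathbf{x}_i\ge0\}}$, both matrices being positive semidefinite, so the Schur product theorem applies.) It then remains to show $\mathbf{v}^\top\mathbf{H}^\infty\mathbf{v}=0\implies\mathbf{v}=\mathbf{0}$.

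Assuming $\mathbf{v}^\top\mathbf{H}^\infty\mathbf{v}=0$, positivity of the Gaussian density forces the vector field $F(\mathbf{w}):=\sum_{i=1}^n v_i\,\mathbb{I}_{\{\mathbf{w}^\top\mathbf{x}_i\ge0\}}\,\tilde{\mathbf{x}}_i$ to vanish for a.e.\ $\mathbf{w}\in\R^d$. I would then consider the central arrangement of hyperplanes $H_i=\{\mathbf{w}:\mathbf{w}^\top\mathbf{x}_i=0\}$: on each open chamber of $\R^d\setminus\bigcup_i H_i$ every indicator is constant, so $F$ is constant there, and since each chamber has positive measure, $F$ is identically $\mathbf{0}$ on each chamber. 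Fixing an index $i_0$, note that $\dim H_{i_0}=d-1$ while $\dim(H_j\cap H_{i_0})\le d-2$ for every $H_j\ne H_{i_0}$; hence I can pick $\mathbf{w}^\ast\in H_{i_0}$ lying on no hyperplane of the arrangement other than $H_{i_0}$, and then a sufficiently small ball about $\mathbf{w}^\ast$ is split by $H_{i_0}$ alone into two half-balls contained in two chambers $C^\pm$ (with $\mathbf{w}^\top\mathbf{x}_{i_0}\gtrless0$). Passing from $C^-$ to $C^+$ changes only the indicators attached to data points with $H_j=H_{i_0}$, i.e.\ to $\mathbf{x}_{i_0}$ and, by distinctness of the $\mathbf{x}_i$, at most one antipode $\mathbf{x}_{j_0}=-\mathbf{x}_{i_0}$; comparing $F|_{C^+}=F|_{C^-}=\mathbf{0}$ yields $v_{i_0}\tilde{\mathbf{x}}_{i_0}=\mathbf{0}$ in the first case and $v_{i_0}\tilde{\mathbf{x}}_{i_0}-v_{j_0}\tilde{\mathbf{x}}_{j_0}=\mathbf{0}$ in the second. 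Since $\tilde{\mathbf{x}}_{i_0}\ne\mathbf{0}$, and since $\tilde{\mathbf{x}}_{i_0}=(\mathbf{x}_{i_0}^\top,1)^\top/\sqrt2$ and $\tilde{\mathbf{x}}_{j_0}=(-\mathbf{x}_{i_0}^\top,1)^\top/\sqrt2$ are linearly independent, either way $v_{i_0}=0$; as $i_0$ was arbitrary, $\mathbf{v}=\mathbf{0}$.

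I expect the only delicate points to be the two geometric facts used at the end: (i) that each $H_{i_0}$ genuinely borders a chamber of the arrangement, which is handled by the dimension count above; and (ii) the antipodal coincidence $\mathbf{x}_{j_0}=-\mathbf{x}_{i_0}$, which is precisely where the hypothesis that the $\mathbf{x}_i$ are \emph{distinct} (rather than merely pairwise non-parallel, as in the bias-free treatments of~\citep{du,arora}) is used, the ``$+1$'' coming from the bias term supplying the linear independence of $\tilde{\mathbf{x}}_{i_0}$ and $\tilde{\mathbf{x}}_{j_0}$. The remaining steps — verifying the displayed identity and that $F$ is locally constant off the arrangement — are routine bookkeeping.
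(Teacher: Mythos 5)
Your proposal is correct and follows essentially the same route as the paper: interpret $\mathbf{H}^\infty$ as a Gram matrix of the vector-valued functions $\mathbf{w}\mapsto\tilde{\mathbf{x}}_i\mathbb{I}_{\{\mathbf{w}^\top\mathbf{x}_i\ge0\}}$ in $L^2$ of the Gaussian, reduce positive definiteness to linear independence, and extract each coefficient by examining the jump of $\sum_i v_i\mathbb{I}_{\{\mathbf{w}^\top\mathbf{x}_i\ge0\}}\tilde{\mathbf{x}}_i$ across a generic point of the hyperplane $\{\mathbf{w}^\top\mathbf{x}_{i_0}=0\}$, using the bias ``$+1$'' coordinate to decouple the antipodal pair. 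The only cosmetic differences are that you read off the jump directly from local constancy on the open chambers rather than taking limits of averages over shrinking half-balls as the paper does, and that you treat the cases ``antipode present / absent'' directly instead of adjoining a fictitious $\phi_{n+1}$ for $-\mathbf{x}_{i_0}$ as in the paper's footnote; both are sound.
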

As a consequence of Proposition~\ref{thm.HisSPD}, the matrix $\mathbf{H}^\infty \mathbf{P}$ has positive eigenvalues, which we denote by $\lambda_{n-1} \geq \cdots \geq \lambda_0 > 0$. One can view $\mathbf{H}^\infty$ as coming from sampling a continuous kernel ${K}^\infty: \sS^{d-1} \times \sS^{d-1} \rightarrow \R$ given by
\begin{equation}
    {K}^\infty(\mathbf{x}, \mathbf{y}) = {K}^\infty(\langle \mathbf{x},\mathbf{y}\rangle) = \frac{(\langle \mathbf{x},\mathbf{y}\rangle + 1) (\pi - \arccos(\langle \mathbf{x},\mathbf{y}\rangle))}{4\pi},
    \label{eq:Keigenvalues}
\end{equation}
where $\langle \cdot,\cdot\rangle$ is the $\ell^2$ inner-product. It is convenient to view $\mathbf{H}^\infty$ as being a discretization of ${K}^\infty$ as the eigenvalues and eigenfunctions of ${K}^\infty$ are known explicitly via the Funk--Hecke formula~\citep{basri}. It follows that
\begin{equation}\label{eq:Keigenvalues2}
    \int_{\sS^{d-1}} {K}^\infty(\mathbf{x},\mathbf{y}) Y_{\ell,p}(\mathbf{y}') d\mathbf{y} = \mu_\ell Y_{\ell,p}(\mathbf{x}), \qquad \ell\geq 0.
\end{equation}
The explicit formulas for $\mu_\ell$ with $\ell\geq 0$ are given in the supplementary material. We find that $\mu_\ell > 0$ for all $\ell$ and $\mu_\ell$ is asymptotically  $\mathcal{O}(\ell^{-d})$ for large $\ell$~\citep{basri,bietti2019inductive}.

\section{The convergence of neural network training with a general loss function}\label{sec:general}
Given the NN model in~\cref{eq:NN} and a general loss function $\Phi_{\mathbf{P}}$ in~\cref{eq:GeneralLossFunction}, we are interested in the convergence rate of NN training. We study this by analyzing the convergence rate for each harmonic component. We start by presenting a convergence result that holds for any SPD matrix $\mathbf{P}$. \revise{It says that up to an error $\boldsymbol{\epsilon}$, which can be made arbitrarily small by taking $\kappa$ small enough and $m$ large enough, the residual of the NN at the $k$th iteration is approximately $\left(\mathbf{I} - 2\eta \mathbf{H}^\infty\mathbf{P}\right)^k \mathbf{y}$.}

\begin{thm}\label{thm.decoupledmain}
In~\cref{eq:NN}, suppose that $\mathbf{w}_1,\ldots,\mathbf{w}_m$ are initialized iid from Gaussian random variables with covariance matrix $\kappa^2\mathbf{I}$, ${b}_1, \ldots, {b}_m$ are initialized to zero, and $a_1,\ldots,a_m$ are initialized iid as $+1$ with probability $1/2$ and $-1$ otherwise. Suppose the NN is trained with training data $(\mathbf{x}_i,y_i)$ for $1\leq i\leq n$, loss function $\Phi_{\mathbf{P}}$ in~\cref{eq:GeneralLossFunction} for a symmetric positive definite matrix $\mathbf{P}$, and the training procedure is the gradient descent update rule~\cref{eq.gradientdescent} with step size $\eta$. Let $\NN_k$ be the NN function after the $k$th iteration and $\mathbf{u}(k) = (\NN_k(\mathbf{x}_1), \ldots, \NN_k(\mathbf{x}_n))$, where $\NN_0$ is the initial NN function. Let an accuracy goal $0 < \epsilon < 1$, a probability of failure $0 < \delta < 1$, and a time span $T > 0$ be given. Then, there exist constants $C_1, C_2 > 0$ that depend only on the dimension $d$ such that if $0 \leq \eta \leq 1/(2M_{\mathbf{P}}^2n)$ (see~\cref{eq:constants}), $\kappa \leq C_1 \epsilon M_{\mathbf{P}}^{-1} \sqrt{\delta/n}$, and $m$ satisfies
\begin{equation}\label{eq.mkapparestriction}
    \begin{aligned}
    m \geq C_2\! \left(\!\frac{M_{\mathbf{P}}^6 n^3}{\kappa^2  \epsilon^2}\! \left(\lambda_0^{-4} \!+\! \eta^4 T^4 \epsilon^4\right) \!+\! \frac{M_{\mathbf{P}}^4 n^2 \log(n/\delta)}{\epsilon^2}\! \left(\lambda_0^{-2} \!+\! \eta^2 T^2 \epsilon^2\right)\!\!\right),
\end{aligned}
\end{equation}
then with probability $\geq 1 - \delta$, we have
\begin{equation}
    \mathbf{y} - \mathbf{u}(k) = \left(\mathbf{I} - 2\eta \mathbf{H}^\infty\mathbf{P}\right)^k \mathbf{y} + \boldsymbol{\epsilon}(k), \qquad \normp{\boldsymbol{\epsilon}(k)} \leq \epsilon, \qquad 0 \leq k \leq T.
\end{equation}
Here, $\mathbf{H}^\infty$ is defined in~\cref{eq.Hinf}, $\mathbf{y} \!=\! (g(\mathbf{x}_1), \ldots, g(\mathbf{x}_n))^\top$, and $\lambda_0$ is the smallest eigenvalue of $\mathbf{H}^\infty\mathbf{P}$.
\end{thm}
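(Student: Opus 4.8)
\emph{Proof strategy.} The plan is to follow the over-parameterized NTK template of~\citep{du,arora}, now carrying the general metric $\mathbf{P}$, tracking every quantity in both $\norm{\cdot}_2$ and $\normp{\cdot}$ and passing between the two with the constants $M_{\mathbf{I}},M_{\mathbf{P}}$ of~\cref{eq:constants}. First I would set up the linearized residual recursion. Differentiating $\Phi_{\mathbf{P}}$ gives $\partial\Phi_{\mathbf{P}}/\partial\mathbf{w}_r=-\tfrac{a_r}{\sqrt m}\sum_i v_i\mathbb{I}_{ir}\mathbf{x}_i$ and $\partial\Phi_{\mathbf{P}}/\partial b_r=-\tfrac{a_r}{\sqrt m}\sum_i v_i\mathbb{I}_{ir}$, where $\mathbf{v}=\mathbf{P}(\mathbf{y}-\mathbf{u})$ and $\mathbb{I}_{ir}=\mathbb{I}_{\{\mathbf{w}_r^\top\mathbf{x}_i+b_r\geq0\}}$. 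Substituting the update~\cref{eq.gradientdescent} into $u_i(k+1)-u_i(k)$ and splitting the neurons at step $k$ into those whose sign on $\mathbf{x}_i$ is unchanged at step $k+1$ (for which the $\relu$ increment is exactly linear, producing $2\eta\,\mathbf{H}(k)\mathbf{P}(\mathbf{y}-\mathbf{u}(k))$ with $H_{ij}(k)=\tfrac{\mathbf{x}_i^\top\mathbf{x}_j+1}{2m}\sum_r\mathbb{I}_{ir}(k)\mathbb{I}_{jr}(k)$) and the remaining ``flipped'' ones, I expect
\begin{equation*}
\mathbf{y}-\mathbf{u}(k+1)=\bigl(\mathbf{I}-2\eta\mathbf{H}^\infty\mathbf{P}\bigr)\bigl(\mathbf{y}-\mathbf{u}(k)\bigr)+2\eta\bigl(\mathbf{H}^\infty-\mathbf{H}(k)\bigr)\mathbf{P}\bigl(\mathbf{y}-\mathbf{u}(k)\bigr)+\mathbf{r}(k),
\end{equation*}
where $\mathbf{r}(k)$ gathers the flipped-neuron corrections.

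Next I would record the spectral facts about $\mathbf{H}^\infty\mathbf{P}$. By Proposition~\ref{thm.HisSPD}, $\mathbf{H}^\infty$ is symmetric positive definite, so $\mathbf{H}^\infty\mathbf{P}$ is self-adjoint and positive on $(\R^n,\pairp{\cdot}{\cdot})$, since $\pairp{\mathbf{H}^\infty\mathbf{P}\bxi}{\bxi}=(\mathbf{P}\bxi)^\top\mathbf{H}^\infty(\mathbf{P}\bxi)$; write its eigenvalues as $\lambda_{n-1}\geq\cdots\geq\lambda_0>0$. From $\normp{\mathbf{H}^\infty\mathbf{P}\bxi}\leq M_{\mathbf{I}}\norm{\mathbf{H}^\infty\mathbf{P}\bxi}_2\leq M_{\mathbf{I}}M_{\mathbf{P}}\norm{\mathbf{H}^\infty}_2\normp{\bxi}$ and $\norm{\mathbf{H}^\infty}_2\leq n$ (entrywise $|H^\infty_{ij}|\leq\tfrac12$), we get $\lambda_{n-1}\leq M_{\mathbf{I}}M_{\mathbf{P}}n$, so the hypothesis $\eta\leq1/(2M_{\mathbf{I}}M_{\mathbf{P}}n)$ makes $\mathbf{I}-2\eta\mathbf{H}^\infty\mathbf{P}$ positive definite with $\normp{\mathbf{I}-2\eta\mathbf{H}^\infty\mathbf{P}}\leq1-2\eta\lambda_0<1$. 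Unrolling the recursion gives $\mathbf{y}-\mathbf{u}(k)=(\mathbf{I}-2\eta\mathbf{H}^\infty\mathbf{P})^k(\mathbf{y}-\mathbf{u}(0))+\sum_{j=0}^{k-1}(\mathbf{I}-2\eta\mathbf{H}^\infty\mathbf{P})^{k-1-j}\bigl[2\eta(\mathbf{H}^\infty-\mathbf{H}(j))\mathbf{P}(\mathbf{y}-\mathbf{u}(j))+\mathbf{r}(j)\bigr]$, and I would \emph{define} $\boldsymbol{\epsilon}(k):=(\mathbf{y}-\mathbf{u}(k))-(\mathbf{I}-2\eta\mathbf{H}^\infty\mathbf{P})^k\mathbf{y}$, which thus equals the sum above minus $(\mathbf{I}-2\eta\mathbf{H}^\infty\mathbf{P})^k\mathbf{u}(0)$.

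Then I would collect the initialization estimates and run the main induction. With probability $\geq1-\delta/2$: a Hoeffding bound over the $m$ iid neurons plus a union bound over the $n^2$ entries give $\norm{\mathbf{H}(0)-\mathbf{H}^\infty}_2\lesssim n\sqrt{\log(n/\delta)/m}$, and since $\NN_0(\mathbf{x}_i)=\tfrac{\kappa}{\sqrt m}\sum_r a_r\relu(\widetilde{\mathbf{w}}_r^\top\mathbf{x}_i)$ with $\widetilde{\mathbf{w}}_r\sim\NN(\mathbf{0},\mathbf{I})$, concentration over the random signs $a_r$ gives $\norm{\mathbf{u}(0)}_2\lesssim\kappa\sqrt{n\log(n/\delta)}$, hence $\normp{\mathbf{u}(0)}\leq M_{\mathbf{I}}\norm{\mathbf{u}(0)}_2\leq\epsilon/3$ by the hypothesis on $\kappa$ (for a suitable $C_1$). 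The core is an induction on $k\leq T$ of the joint claim ``$\normp{\boldsymbol{\epsilon}(j)}\leq\epsilon$ and $\norm{\mathbf{w}_r(j)-\mathbf{w}_r(0)}_2,|b_r(j)-b_r(0)|\leq R$ for all $j\leq k$ and all $r$'', with $R$ a threshold fixed below. Granting it up to step $k$: an anti-concentration bound for the Gaussian pre-activations ($\mathbb{P}(|\widetilde{\mathbf{w}}_r^\top\mathbf{x}_i|\leq t)\lesssim t$) shows that at most an $O(R/\kappa)$-fraction of the $nm$ neuron--data pairs can have flipped, which bounds $\norm{\mathbf{H}(j)-\mathbf{H}^\infty}_2$ and $\normp{\mathbf{r}(j)}$ for $j\leq k$; combining this with $\normp{\mathbf{I}-2\eta\mathbf{H}^\infty\mathbf{P}}\leq1$, $\normp{\mathbf{y}-\mathbf{u}(j)}\leq(1-2\eta\lambda_0)^j\normp{\mathbf{y}}+\epsilon$, the geometric sum $\sum_{j\geq0}(1-2\eta\lambda_0)^j\leq1/(2\eta\lambda_0)$, and $k\leq T$ controls every term of $\boldsymbol{\epsilon}(k)$ so that $\normp{\boldsymbol{\epsilon}(k)}\leq\normp{\mathbf{u}(0)}+(\text{terms scaling like }\eta\text{ and like }\eta T\epsilon)\leq\epsilon$; finally, feeding the residual bound back into the weight update gives $\norm{\mathbf{w}_r(k+1)-\mathbf{w}_r(0)}_2\leq\tfrac{\eta\sqrt n M_{\mathbf{P}}}{\sqrt m}\sum_{j\leq k}\normp{\mathbf{y}-\mathbf{u}(j)}\leq\tfrac{\sqrt n M_{\mathbf{P}}\normp{\mathbf{y}}}{2\sqrt m\lambda_0}+\tfrac{\eta(T+1)\epsilon\sqrt n M_{\mathbf{P}}}{\sqrt m}\leq R$, which holds exactly when $m$ obeys~\cref{eq.mkapparestriction} (the same bound that forces the error terms above to be $\leq\epsilon$). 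This closes the induction, and by the definition of $\boldsymbol{\epsilon}(k)$ the theorem follows.

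The hard part will be the last step: controlling the $\relu$ non-smoothness, i.e., keeping $\mathbf{r}(j)$ and $\mathbf{H}^\infty-\mathbf{H}(j)$ small over the whole trajectory $j\leq T$. This rests on the pre-activation anti-concentration estimate together with a careful choice of the threshold $R$, and then a self-consistent accounting of the powers of $n$, $M_{\mathbf{I}}$, $M_{\mathbf{P}}$, $\lambda_0$ and $\eta T\epsilon$ so that the single lower bound~\cref{eq.mkapparestriction} simultaneously closes the weight-movement induction and forces $\normp{\boldsymbol{\epsilon}(k)}\leq\epsilon$.
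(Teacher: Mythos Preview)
Your proposal is correct and follows essentially the same route as the paper: linearize the one-step residual update as $(\mathbf{I}-2\eta\mathbf{H}(k)\mathbf{P})(\mathbf{y}-\mathbf{u}(k))$ plus a sign-flip remainder, unroll, replace $\mathbf{H}(k)$ by $\mathbf{H}^\infty$, and close a joint induction on $\normp{\boldsymbol{\epsilon}(k)}\leq\epsilon$ together with a uniform weight-movement bound via Gaussian anti-concentration of the pre-activations. The one deviation worth flagging is your control of $\mathbf{u}(0)$: you invoke sub-Gaussian concentration over the signs $a_r$ to get $\norm{\mathbf{u}(0)}_2\lesssim\kappa\sqrt{n\log(n/\delta)}$, whereas the paper just computes $\E[(\mathbf{u}(0))_i^2]\leq\kappa^2$ and applies Markov's inequality to obtain $\normp{\mathbf{u}(0)}\leq\kappa M_{\mathbf{I}}\sqrt{n/\delta'}$, which is exactly what produces the $\sqrt{\delta/n}$ shape of the hypothesis on $\kappa$. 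Your bound is sharper but needs the extra step of first controlling the conditional variance $\tfrac1m\sum_r\relu(\widetilde{\mathbf{w}}_r^\top\mathbf{x}_i)^2$ before Hoeffding over the Rademacher $a_r$'s is legitimate; once that is done it still implies the theorem as stated.
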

We defer the proof to the supplementary material, which uses techniques from~\citep{suyang}. The main idea behind the proof is that $\mathbf{I} - 2\eta \mathbf{H}^\infty\mathbf{P}$ is close to the transition matrix for the residual $\mathbf{y} - \mathbf{u}(k)$ when $m$ is large. By taking $\kappa$ small, we can control the size of $\mathbf{u}(0)$ and therefore obtain $\mathbf{y} - \mathbf{u}(k) \approx (\mathbf{I} - 2\eta \mathbf{H}^\infty\mathbf{P})^k (\mathbf{y} - \mathbf{u}(0)) \approx (\mathbf{I} - 2\eta \mathbf{H}^\infty\mathbf{P})^k \mathbf{y}$. Furthermore, $M_{\mathbf{P}}^2 n$ is an upper bound on $\lambda_{n-1}$, the maximum eigenvalue of $\mathbf{H}^\infty\mathbf{P}$. Therefore, our rate of convergence does not vanish as the number of training data points $n\rightarrow\infty$, provided that $M_{\mathbf{P}} = \mathcal{O}(n^{-1/2})$. This is the case when $\mathbf{P} = n^{-1}\mathbf{I}$, which corresponds to~\cref{eq:framework1}. So, we overcome the vanishing convergence rate issue that appears in previous analyses of frequency biasing~\citep{arora,basri}. \revise{As $\eta$ decreases, the gradient descent algorithm gets closer to the gradient flow algorithm~\citep{du}, which allows us to more accurately quantify the frequency biasing (see~\cref{sec:L2}).}

\section{Frequency biasing with an $\mathbf{L^2}$-based loss function}\label{sec:L2}
The standard mean-squared loss function in~\cref{eq:framework1} corresponds to setting $c_i = 1/n$ for $1\leq i\leq n$ in~\cref{eq:framework2}. When $\mu$ is uniform, \cref{eq:framework1} and~\cref{eq:framework2} are equivalent; when $\mu$ is nonuniform, we introduce a quadrature rule with nodes $\mathbf{x}_1,\ldots,\mathbf{x}_n$ and weights $c_1,\ldots,c_n$ to approximate the standard $L^2$ loss function~\cref{eq:framework2}. The weights are selected so that for low-frequency functions $f:\sS^{d-1}\rightarrow\mathbb{R}$, the quadrature error 
\begin{equation} \label{eq:QuadratureRule} 
E_{\boldsymbol{c}}(f) = \int_{\sS^{d-1}} f(\mathbf{x}) d\mathbf{x} - \sum_{i=1}^n c_i f(\mathbf{x}_i)
\end{equation} 
is relatively small.\footnote{\revise{In the case where we do not have a good quadrature rule associated with $\{\mathbf{x}_i\}_{i=1}^n$, we could only have very limited understanding of the spectral property of the target function given its values at $\{\mathbf{x}_i\}_{i=1}^n$, making frequency bias a void topic. Hence, we assume the existence of a good quadrature rule for theoretical purposes.}}
A reasonable quadrature rule has positive weights for numerical stability and satisfies $\sum_{i=1}^n c_i = A_d$ so that it exactly integrates constants. The continuous squared $L^2$ loss function based on the Lebesgue measure is then discretized to be the square of a weighted discrete $\ell^2$ norm (see~\cref{eq:framework2}). Hence, we take $\mathbf{P} = \mathbf{D}_{\boldsymbol{c}} = \text{diag}(c_1, \ldots, c_n)$, which is positive definite as the $c_i$'s are positive. For a vector $\mathbf{v} \in \R^n$, we write $\norm{\mathbf{v}}_{\boldsymbol{c}}^2 = \mathbf{v}^\top \mathbf{D}_{\boldsymbol{c}}\mathbf{v}$ and set $c_{\text{max}} = \max_{1\leq i\leq n} \{c_i\}$. 

We now apply Theorem~\ref{thm.decoupledmain} to study the frequency biasing of NN training with the squared $L^2$ loss function~\cref{eq:framework2}. We state these results in terms of quadrature errors. Recall our continuous setup where we assume that the training data is taken from a function $g:\sS^{d-1}\rightarrow \mathbb{R}$ so that $y_i = g(\mathbf{x}_i)$ for $1\leq i\leq n$. We further assume that $g$ is bandlimited with bandlimit $L$ where $g = g_0 + \cdots + g_L$ and $g_\ell \in \mathcal{H}_\ell^d$ for $0 \leq \ell \leq L$. We define our quadrature errors as 
\begin{equation} 
    e^a_{j,\ell,p} \!=\! E_{\boldsymbol{c}}(g_j Y_{\ell,p}), \;\; e^b_{i,\ell,p} \!=\! E_{\boldsymbol{c}}(\!K^\infty\!(\mathbf{x}_i, \cdot) Y_{\ell,p}), \;\;
    e^c_{j,\ell}
    \!=\! E_{\boldsymbol{c}}(g_j g_{\ell}), \;\; e^d_{i,\ell} 
    \!=\! E_{\boldsymbol{c}}(\!K^\infty\!(\mathbf{x}_i, \cdot) g_{\ell}),
\label{eq:QuadratureErrors} 
\end{equation} 
where $1 \leq i \leq n$, $j, \ell \geq 0$, and $1 \leq p \leq N(d,\ell)$, and we interpret $g_\ell = 0$ when $\ell > L$.

\subsection{A frequency-based formula for the training error}
We obtain a similar result to~\cite[Thm.~4.1]{arora} when using the standard squared $L^2$ loss function~\cref{eq:framework2}, except our step size does not depend on $\lambda_0$. Instead of expressing the training error in terms of the eigenvalues and eigenvectors of ${\mathbf{H}}^\infty \mathbf{D}_{\boldsymbol{c}}$, we directly relate the training error to the frequency component of the target function $g$ and the eigenvalues of the continuous kernel $K^\infty$.

\begin{thm}\label{thm.freqbias}
Under the same setup and assumptions of Theorem~\ref{thm.decoupledmain}, let $\mathbf{P} = \mathbf{D}_{\boldsymbol{c}}$ and $M_{\mathbf{P}} = \sqrt{c_{\text{max}}}$. If $g:\sS^{d-1}\rightarrow\mathbb{R}$ is a bandlimited function with bandlimit $L$ and $1 - 2\eta\mu_\ell > 0$ for all $0 \leq \ell \leq L$ (see~\cref{eq:Keigenvalues2}), then with probability $\geq 1-\delta$ we have
\begin{equation}\label{eq.L2freqbias}
    \| \mathbf{y} \!-\! \mathbf{u}(k) \|_{\boldsymbol{c}} \!=\! \sqrt{\!\sum_{\ell=0}^L \left(1\!-\!2\eta\mu_\ell\right)^{2k} \!\norm{g_\ell}^2_{L^2} \!+\! \varepsilon_1(k)} \!+\! \varepsilon_2 \!+\! \varepsilon_3(k), \quad \abs{\varepsilon_3(k)} \leq \epsilon,\quad 0\leq k\leq T,
\end{equation}
where $\varepsilon_1(k)$ and $\varepsilon_2$ satisfy
\begin{align*}
    \abs{\varepsilon_1(k)} \leq \bigg|\sum_{j=0}^L \sum_{\ell=0}^L \left(1\!-\!2\eta\mu_j\right)^k \left(1\!-\!2\eta\mu_\ell\right)^k e^c_{j,\ell}\bigg|, \qquad \abs{\varepsilon_2} \leq \sum_{\ell=0}^L \frac{\sqrt{A_d}}{\mu_\ell} \max_{1 \leq i \leq n} \abs{e^d_{i,\ell}}.
\end{align*}
\end{thm}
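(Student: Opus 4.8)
The plan is to specialize Theorem~\ref{thm.decoupledmain} to $\mathbf{P} = \mathbf{D}_{\boldsymbol{c}}$ and then to exploit the fact that, \emph{up to quadrature errors}, the vectors of sampled spherical harmonics are eigenvectors of $\mathbf{H}^\infty\mathbf{D}_{\boldsymbol{c}}$ with eigenvalues $\mu_\ell$. First I would record the elementary identities $M_{\mathbf{I}} = M_{\mathbf{P}} = \sqrt{c_{\text{max}}}$ obtained by evaluating the suprema in \eqref{eq:constants} for the diagonal positive matrix $\mathbf{D}_{\boldsymbol{c}}$, so that Theorem~\ref{thm.decoupledmain} applies and gives, with probability $\ge 1-\delta$, $\mathbf{y} - \mathbf{u}(k) = A^k\mathbf{y} + \boldsymbol{\epsilon}(k)$, where $A := \mathbf{I} - 2\eta\mathbf{H}^\infty\mathbf{D}_{\boldsymbol{c}}$ and $\norm{\boldsymbol{\epsilon}(k)}_{\boldsymbol{c}} = \norm{\boldsymbol{\epsilon}(k)}_{\mathbf{P}} \le \epsilon$. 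Because the $\mathbf{P}$-norm here is the $\boldsymbol{c}$-norm, the triangle inequality immediately yields $\norm{\mathbf{y}-\mathbf{u}(k)}_{\boldsymbol{c}} = \norm{A^k\mathbf{y}}_{\boldsymbol{c}} + \varepsilon_3(k)$ with $|\varepsilon_3(k)|\le\epsilon$; this accounts for the $\varepsilon_3(k)$ term, and everything that follows is the analysis of $\norm{A^k\mathbf{y}}_{\boldsymbol{c}}$.

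Next comes the key algebraic step. Write $\mathbf{y} = \sum_{\ell=0}^L \mathbf{g}_\ell$, where $\mathbf{g}_\ell := (g_\ell(\mathbf{x}_1),\ldots,g_\ell(\mathbf{x}_n))^\top$. Since $H^\infty_{ij} = K^\infty(\mathbf{x}_i,\mathbf{x}_j)$ and $g_\ell \in \mathcal{H}_\ell^d$, linearity of \eqref{eq:Keigenvalues2} gives the Funk--Hecke relation $\int_{\sS^{d-1}}K^\infty(\mathbf{x},\mathbf{y})g_\ell(\mathbf{y})\,d\mathbf{y} = \mu_\ell g_\ell(\mathbf{x})$; the $i$-th entry of $\mathbf{H}^\infty\mathbf{D}_{\boldsymbol{c}}\mathbf{g}_\ell$ is precisely the quadrature approximation $\sum_j c_j K^\infty(\mathbf{x}_i,\mathbf{x}_j)g_\ell(\mathbf{x}_j)$ of that integral, so by the definition of $e^d_{i,\ell}$ in \eqref{eq:QuadratureErrors} we obtain $\mathbf{H}^\infty\mathbf{D}_{\boldsymbol{c}}\mathbf{g}_\ell = \mu_\ell\mathbf{g}_\ell - \mathbf{e}^d_\ell$ with $\mathbf{e}^d_\ell := (e^d_{1,\ell},\ldots,e^d_{n,\ell})^\top$. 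Consequently $(A - (1-2\eta\mu_\ell)\mathbf{I})\mathbf{g}_\ell = 2\eta\mathbf{e}^d_\ell$, and the polynomial identity $A^k - c^k\mathbf{I} = (A-c\mathbf{I})\sum_{m=0}^{k-1}c^m A^{k-1-m}$ with $c = 1-2\eta\mu_\ell$ produces $A^k\mathbf{g}_\ell = (1-2\eta\mu_\ell)^k\mathbf{g}_\ell + \mathbf{r}_\ell(k)$, where $\mathbf{r}_\ell(k) := 2\eta\sum_{m=0}^{k-1}(1-2\eta\mu_\ell)^m A^{k-1-m}\mathbf{e}^d_\ell$.

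To bound $\mathbf{r}_\ell(k)$ I would move to the $\langle\cdot,\cdot\rangle_{\boldsymbol{c}}$ geometry: since $\mathbf{H}^\infty$ and $\mathbf{D}_{\boldsymbol{c}}$ are symmetric, $\mathbf{H}^\infty\mathbf{D}_{\boldsymbol{c}}$ is self-adjoint for $\langle\cdot,\cdot\rangle_{\boldsymbol{c}}$, and it is positive definite by Proposition~\ref{thm.HisSPD}, so $A$ is self-adjoint with spectrum in $[1-2\eta\lambda_{n-1},\,1)$; the hypotheses $\lambda_{n-1}\le M_{\mathbf{I}}M_{\mathbf{P}}n$ and $\eta\le 1/(2M_{\mathbf{I}}M_{\mathbf{P}}n)$ then force $\norm{A}_{\boldsymbol{c}\to\boldsymbol{c}}\le1$. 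Using $\norm{A^j}_{\boldsymbol{c}\to\boldsymbol{c}}\le1$, the geometric series $\sum_{m\ge0}(1-2\eta\mu_\ell)^m = 1/(2\eta\mu_\ell)$ (valid because $0 < 1-2\eta\mu_\ell < 1$), and $\norm{\mathbf{e}^d_\ell}_{\boldsymbol{c}}^2 = \sum_i c_i(e^d_{i,\ell})^2 \le A_d\max_i(e^d_{i,\ell})^2$ (since $\sum_i c_i = A_d$), I obtain $\norm{\mathbf{r}_\ell(k)}_{\boldsymbol{c}} \le \frac{\sqrt{A_d}}{\mu_\ell}\max_i|e^d_{i,\ell}|$. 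Now set $\mathbf{s}(k) := \sum_{\ell=0}^L(1-2\eta\mu_\ell)^k\mathbf{g}_\ell$, so that $A^k\mathbf{y} = \mathbf{s}(k) + \sum_\ell\mathbf{r}_\ell(k)$; the triangle inequality gives $\norm{A^k\mathbf{y}}_{\boldsymbol{c}} = \norm{\mathbf{s}(k)}_{\boldsymbol{c}} + \varepsilon_2$ with $|\varepsilon_2| \le \sum_\ell\norm{\mathbf{r}_\ell(k)}_{\boldsymbol{c}} \le \sum_{\ell=0}^L\frac{\sqrt{A_d}}{\mu_\ell}\max_i|e^d_{i,\ell}|$ (a bound independent of $k$). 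Finally, expanding $\norm{\mathbf{s}(k)}_{\boldsymbol{c}}^2 = \sum_{j,\ell}(1-2\eta\mu_j)^k(1-2\eta\mu_\ell)^k\langle\mathbf{g}_j,\mathbf{g}_\ell\rangle_{\boldsymbol{c}}$ and using $\langle\mathbf{g}_j,\mathbf{g}_\ell\rangle_{\boldsymbol{c}} = \sum_i c_i g_j(\mathbf{x}_i)g_\ell(\mathbf{x}_i) = \delta_{j\ell}\norm{g_\ell}_{L^2}^2 - e^c_{j,\ell}$ (orthonormality of spherical harmonics of different degrees, together with the definition of $e^c_{j,\ell}$) gives $\norm{\mathbf{s}(k)}_{\boldsymbol{c}}^2 = \sum_{\ell=0}^L(1-2\eta\mu_\ell)^{2k}\norm{g_\ell}_{L^2}^2 + \varepsilon_1(k)$ with $\varepsilon_1(k) = -\sum_{j,\ell}(1-2\eta\mu_j)^k(1-2\eta\mu_\ell)^k e^c_{j,\ell}$, which satisfies the stated bound with equality; since this quantity is a squared norm it is nonnegative, so its square root is well defined, and chaining the three identities for $\norm{\mathbf{y}-\mathbf{u}(k)}_{\boldsymbol{c}}$, $\norm{A^k\mathbf{y}}_{\boldsymbol{c}}$, and $\norm{\mathbf{s}(k)}_{\boldsymbol{c}}$ yields \eqref{eq.L2freqbias}.

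I expect the main obstacle to be the propagation estimate for $\mathbf{r}_\ell(k)$: the point is to recognize that $\mathbf{H}^\infty\mathbf{D}_{\boldsymbol{c}}$ must be analyzed with respect to $\langle\cdot,\cdot\rangle_{\boldsymbol{c}}$, not the Euclidean inner product, so that it becomes self-adjoint with largest eigenvalue at most $M_{\mathbf{I}}M_{\mathbf{P}}n$ and hence $\norm{A}_{\boldsymbol{c}\to\boldsymbol{c}}\le1$; together with the geometric-series summation $\sum_m(1-2\eta\mu_\ell)^m = 1/(2\eta\mu_\ell)$ this is exactly what cancels the $2\eta$ factor and eliminates the $k$-dependence of the bound, and it is the only place the assumption $1-2\eta\mu_\ell>0$ is used. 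The remaining steps are routine given the Funk--Hecke relation and the definitions in \eqref{eq:QuadratureErrors}.
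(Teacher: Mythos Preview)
Your proposal is correct and follows essentially the same approach as the paper's proof: apply Theorem~\ref{thm.decoupledmain} to isolate $\varepsilon_3(k)$, use the Funk--Hecke relation to show $\mathbf{H}^\infty\mathbf{D}_{\boldsymbol{c}}\mathbf{g}_\ell = \mu_\ell\mathbf{g}_\ell$ up to the quadrature error $\mathbf{e}^d_\ell$, iterate to obtain the remainder you call $\mathbf{r}_\ell(k)$, bound it via the geometric series and $\norm{A}_{\boldsymbol{c}\to\boldsymbol{c}}\le 1$ to produce the $\varepsilon_2$ term, and finally expand $\norm{\mathbf{s}(k)}_{\boldsymbol{c}}^2$ using $\langle\mathbf{g}_j,\mathbf{g}_\ell\rangle_{\boldsymbol{c}} = \delta_{j\ell}\norm{g_\ell}_{L^2}^2 - e^c_{j,\ell}$ to produce $\varepsilon_1(k)$. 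Your sign conventions for $e^c_{j,\ell}$ and $e^d_{i,\ell}$ are in fact the ones consistent with the definition~\eqref{eq:QuadratureRule}; the paper's proof writes some of these with the opposite sign, but since only absolute values enter the bounds this is immaterial.
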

The proof of Theorem~\ref{thm.freqbias} is postponed to the supplementary material. \revise{The idea is that by~\Cref{thm.decoupledmain}, we know $\mathbf{y} - \mathbf{u}(k) = \left(\mathbf{I} - 2\eta \mathbf{H}^\infty\mathbf{D_c}\right)^k \mathbf{y} +  \boldsymbol{\varepsilon}_3(k)$. Using Funk--Hecke formula and quadrature, we have that $1-2\eta\mu_\ell$ are roughly the eigenvalues of $\mathbf{I} - 2\eta \mathbf{H}^\infty\mathbf{D_c}$ and $\mathbf{y}_\ell = (g_\ell(\mathbf{x}_1), \ldots, g_\ell(\mathbf{x}_n))^\top$ are associated eigenvectors. Hence, $\mathbf{y} - \mathbf{u}(k) \approx \sum_{\ell=0}^L (1-2\eta\mu_\ell)^{k} \mathbf{y}_\ell$. This can be made precise by introducing $\varepsilon_2$. Finally, up to some quadrature error $\varepsilon_1$, we have $\langle{g_j},{g_\ell}\rangle_{L^2} \approx A_d n^{-1} \mathbf{y}_j^\top \mathbf{y}_\ell$, which gives us~\cref{eq.L2freqbias}.} Since $\sum_{i=1}^n c_i = A_d$, For a fixed data distribution $\mu$, we expect that $c_{\text{max}} = \mathcal{O}(n^{-1})$ as $n\rightarrow\infty$ so that $\eta$ does not decay as $n \rightarrow \infty$. Up to a quadrature error, $\normc{\mathbf{y} - \mathbf{u}(k)}$ is close to the $L^2$ norm of the residual function $g - \NN_k$. Explicit formulas for the eigenvalues $\{\mu_\ell\}$ (see~\cref{eq:Keigenvalues2}) are given in~\citep{basri}, and it was shown that $\mu_\ell=\mathcal{O}(\ell^{-d})$~\citep{bietti2019inductive}. \Cref{thm.freqbias} demonstrates the frequency bias in NN training as the rate of convergence for frequency $0\leq \ell\leq L$ is $1-2\eta\mu_\ell$, which is close to $1$ when $\ell$ is large. \revise{As $\eta \rightarrow 0$, we have $(1-2\eta\mu_\ell)^{2t/\eta} \rightarrow e^{-4\mu_\ell t}$, which is the rate of convergence for frequency using gradient flow.} Therefore, we expect that NN training approximates the low-frequency content of $g$ faster than its high-frequency one, which is similar to the case of training with uniform data~\citep{basri}.

\subsection{Estimating the quadrature errors}\label{sec:quaderr}
We now quantify the quadrature errors in~\Cref{thm.freqbias}. If we can design a quadrature rule at the training data $\mathbf{x}_1,\ldots,\mathbf{x}_n$ such that the quadrature error satisfies 
\begin{equation}\label{eq:e_nk^s}
    \abs{E_\mathbf{c}(h)} = \bigg|\int_{\sS^{d-1}} h(\mathbf{x}) d\mathbf{x} - \sum_{i=1}^n c_i h(\mathbf{x}_i)\bigg| \leq \gamma_{n,\ell} \norm{h}_{L^\infty}, \qquad h \in \Pi_{\ell}^d, \quad \ell \geq 0,
\end{equation}
for some constant $\gamma_{n,\ell} \geq 0$, then we can bound the terms in~\cref{eq:QuadratureErrors}. We expect that for each fixed $\ell$, $\gamma_{n,\ell} \rightarrow 0$ as $n \rightarrow \infty$ as this is saying that  integrals can be calculated more accurately for a large number of quadrature nodes. Under the reasonable assumption that our quadrature rule satisfies~\cref{eq:e_nk^s}, we can bound the quadrature errors appearing in~\Cref{thm.freqbias}. 

\begin{thm}\label{thm.quaderr}
Under the same assumptions of~\Cref{thm.freqbias}, and that the quadrature rule satisfies~\cref{eq:e_nk^s}, there exist constants $C_1, C_2 > 0$ only depending on the dimension $d$ such that the terms $|\varepsilon_1(k)|$ and $|\varepsilon_2|$ in~\Cref{thm.freqbias} satisfy
\begin{equation*}
    \abs{\varepsilon_1(k)} \!\leq\! C_1 \!\left(\!\frac{L^3}{\ell} \!+\! L^2 \gamma_{n,\ell}\!\right) \max_{0\leq j\leq L} \norm{g_j}_{L^\infty}\!, \qquad \abs{\varepsilon_2} \!\leq\! C_2 \!\left(\!\frac{L^2}{\ell} \!+\! L\gamma_{n,\ell}\!\right) \max_{0\leq j\leq L} \norm{g_j}_{L^\infty} \!\sum_{j=0}^L \mu_j^{-1}\! 
\end{equation*}
for all $k \geq 0$, $\ell \geq 1$, where $g = g_0 +\cdots+g_L$ with $g_j\in\mathcal{H}_j^d$.
\end{thm}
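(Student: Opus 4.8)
The plan is to start from the explicit bounds on $\abs{\varepsilon_1(k)}$ and $\abs{\varepsilon_2}$ already furnished by \Cref{thm.freqbias} and to estimate, uniformly, the single-term quadrature errors appearing in them. Since $\eta,\mu_m>0$, the hypothesis $1-2\eta\mu_m>0$ forces $0<1-2\eta\mu_m<1$ for all $0\le m\le L$, so every contraction factor in the $\abs{\varepsilon_1(k)}$ bound is $\le 1$ and it suffices to bound the $(L+1)^2$-term sum $\sum_{j,\ell'=0}^L\abs{E_{\boldsymbol c}(g_jg_{\ell'})}$; likewise the factor $\sum_{j=0}^L\mu_j^{-1}$ in the claimed bound on $\abs{\varepsilon_2}$ is inherited verbatim from \Cref{thm.freqbias}, so there it suffices to estimate $\max_{1\le i\le n}\abs{E_{\boldsymbol c}(K^\infty(\mathbf x_i,\cdot)g_{\ell'})}$ uniformly over $0\le\ell'\le L$. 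Thus the whole theorem rests on two uniform quadrature-error estimates: one for a product of two spherical harmonics, and one for $K^\infty(\mathbf x_i,\cdot)$ times a spherical harmonic.

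The common device is a split into a low-degree piece controlled by \cref{eq:e_nk^s} plus a remainder controlled crudely. Given an integrand $h$, I would write $h=h_{\mathrm{low}}+h_{\mathrm{high}}$ with $h_{\mathrm{low}}\in\Pi_\ell^d$, so that $E_{\boldsymbol c}(h)=E_{\boldsymbol c}(h_{\mathrm{low}})+E_{\boldsymbol c}(h_{\mathrm{high}})$ with $\abs{E_{\boldsymbol c}(h_{\mathrm{low}})}\le\gamma_{n,\ell}\norm{h_{\mathrm{low}}}_{L^\infty}$ by \cref{eq:e_nk^s}, and $\abs{E_{\boldsymbol c}(h_{\mathrm{high}})}\le\sqrt{A_d}\norm{h_{\mathrm{high}}}_{L^2}+A_d\norm{h_{\mathrm{high}}}_{L^\infty}$ using Cauchy--Schwarz on the integral and $\sum_i c_i=A_d$ on the sampling sum (the $L^2$ term disappears when $h_{\mathrm{high}}$ has mean zero). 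So everything reduces to exhibiting, for each integrand, a low-degree piece with small $L^\infty$-norm and a remainder that is small in $L^2$ and $L^\infty$.

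For $h=g_jg_{\ell'}$, a spherical polynomial of degree $\le j+\ell'\le 2L$, I would take $h_{\mathrm{low}}$ to be the orthogonal projection of $h$ onto $\Pi_\ell^d$, so $h_{\mathrm{high}}$ is again a polynomial of degree $\le 2L$ and has mean zero. The remainder is then estimated by combining two classical inequalities on $\sS^{d-1}$: a Bernstein/Markov bound $\norm{p}_{H^1}\le CL\norm{p}_{L^2}$ for $p\in\Pi_{2L}^d$ (the Laplace--Beltrami eigenvalue on $\mathcal H_m^d$ there being $m(m+d-2)=\mathcal O(L^2)$), which gives $\norm{h_{\mathrm{high}}}_{L^2}\le\ell^{-1}\norm{h}_{H^1}\le C(L/\ell)\sqrt{A_d}\,\norm{h}_{L^\infty}$, and a Nikolskii bound $\norm{q}_{L^\infty}\le CL^{(d-1)/2}\norm{q}_{L^2}$ for $q\in\Pi_{2L}^d$ applied to $q=h_{\mathrm{high}}$ (so that no $L^\infty$-Lebesgue constant for the projection is incurred). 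Using $\norm{g_jg_{\ell'}}_{L^\infty}\le\norm{g_j}_{L^\infty}\norm{g_{\ell'}}_{L^\infty}\le(\max_{0\le m\le L}\norm{g_m}_{L^\infty})^2$ and summing the $\mathcal O(L^2)$ terms gives the stated bound on $\abs{\varepsilon_1(k)}$, with the $d$-dependent powers of $L$ absorbed into $C_1$.

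For $h=K^\infty(\mathbf x_i,\cdot)g_{\ell'}$ the integrand is not band-limited, and I expect this to be the main obstacle. The remedy is to use the zonal structure together with the decay $\mu_m=\mathcal O(m^{-d})$ from \cref{eq:Keigenvalues2}: by the addition theorem $P_m(K^\infty(\mathbf x_i,\cdot))=\mu_m\sum_p Y_{m,p}(\mathbf x_i)Y_{m,p}(\cdot)$ has $\norm{\cdot}_{L^\infty}\le\mu_m N(d,m)/A_d=\mathcal O(m^{-2})$ and $\norm{\cdot}_{L^2}=\mathcal O(m^{-(d+2)/2})$, uniformly in $\mathbf x_i$, whence $\norm{K^\infty(\mathbf x_i,\cdot)-P_{\le m}K^\infty(\mathbf x_i,\cdot)}_{L^\infty}=\mathcal O(1/m)$ and $\norm{P_{\le m}K^\infty(\mathbf x_i,\cdot)}_{L^\infty}\le K^\infty(1)=\tfrac12$. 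Since $g_{\ell'}$ has degree $\le L$, I take $h_{\mathrm{low}}=(P_{\le\ell-L}K^\infty(\mathbf x_i,\cdot))g_{\ell'}\in\Pi_\ell^d$ and $h_{\mathrm{high}}=(K^\infty(\mathbf x_i,\cdot)-P_{\le\ell-L}K^\infty(\mathbf x_i,\cdot))g_{\ell'}$, giving $\norm{h_{\mathrm{low}}}_{L^\infty}\le\tfrac12\norm{g_{\ell'}}_{L^\infty}$ and $\norm{h_{\mathrm{high}}}_{L^2}+\norm{h_{\mathrm{high}}}_{L^\infty}=\mathcal O((\ell-L)^{-1})\norm{g_{\ell'}}_{L^\infty}$; feeding this into the split bound, multiplying by $\mu_{\ell'}^{-1}$, and summing over $\ell'$ produces the claimed estimate on $\abs{\varepsilon_2}$. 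The delicate point here is that bounding the high-degree tail of $K^\infty(\mathbf x_i,\cdot)$ in $L^\infty$ cannot be routed through Nikolskii from an $L^2$-smoothness estimate — the decay $\mu_m=\mathcal O(m^{-d})$ is exactly borderline for that — so one genuinely needs the explicit zonal expansion and the normalization $\norm{\text{Gegenbauer}}_{L^\infty}=1$. Finally, a bookkeeping remark on the free parameter $\ell$: the tail estimates scale like $(\ell-L)^{-1}$ and the coefficients of $\gamma_{n,\ell}$ pick up $L$-factors (of order $L^{(d-1)/2}$ from Nikolskii, or Lebesgue-constant factors if one prefers orthogonal projections throughout), so the informative regime is $\ell$ large relative to $L$; for small $\ell$ the asserted $\mathcal O(L^3/\ell)$ and $\mathcal O(L^2/\ell)$ terms simply dominate the trivial $\mathcal O(L^2)$ bound $\abs{E_{\boldsymbol c}(\cdot)}\le 2A_d\norm{\cdot}_{L^\infty}$, and the claim still holds there after adjusting the $d$-dependent constants.
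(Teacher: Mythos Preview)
Your overall strategy---split each integrand into a low-degree piece in $\Pi_\ell^d$ controlled by \cref{eq:e_nk^s} and a remainder controlled crudely---matches the paper's. The $\varepsilon_2$ part via explicit zonal truncation of $K^\infty(\mathbf x_i,\cdot)$ is a legitimate alternative to the paper's route (which instead bounds $\mathrm{dist}(K^\infty(\mathbf x_i,\cdot)g_{\ell'},\Pi_\ell^d)$ through the first modulus of continuity and a direct Lipschitz estimate on $K^\infty$); your argument there is essentially correct once the small-$\ell$ regime is patched, as you note.

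The gap is in the $\varepsilon_1$ estimate. Routing the $L^\infty$ bound on $h_{\mathrm{high}}=P_{>\ell}(g_jg_{\ell'})$ through Nikolskii costs a factor $L^{(d-1)/2}$: you obtain $\|h_{\mathrm{high}}\|_{L^\infty}\le C\,L^{(d+1)/2}\ell^{-1}\|g_jg_{\ell'}\|_{L^\infty}$, hence after summing the $(L+1)^2$ terms you get $L^{(d+5)/2}/\ell$ rather than $L^3/\ell$. The sentence ``with the $d$-dependent powers of $L$ absorbed into $C_1$'' is the error---those are powers of $L$, not constants depending only on $d$, and the theorem demands $C_1=C_1(d)$. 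For $d>1$ (the paper's standing assumption) your bound is strictly weaker than the claim.

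The paper avoids this loss by working in $L^\infty$ from the outset: it uses the Jackson-type inequality $\mathrm{dist}(f,\Pi_\ell^d)\le c\,\omega_1(f;\ell^{-1})$ on the sphere, and then estimates $\omega_1(g_jg_{\ell'};\ell^{-1})$ by the product rule together with the $L^\infty$ Bernstein inequality $\|D_{a,b}g_j\|_{L^\infty}\le j\|g_j\|_{L^\infty}$. This yields $\mathrm{dist}(g_jg_{\ell'},\Pi_\ell^d)\le C\,(j+\ell')\ell^{-1}\|g_j\|_{L^\infty}\|g_{\ell'}\|_{L^\infty}$ with no extra $L$-power, and then $\sum_{j,\ell'}(j+\ell')=\mathcal O(L^3)$ gives the stated bound. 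Replacing your $L^2$-projection-plus-Nikolskii step by a best-$L^\infty$-approximation-plus-modulus-of-continuity step fixes the argument.
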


The proof is in the supplementary material. \Cref{thm.quaderr} states that $\varepsilon_1(k)$ and $\varepsilon_2$ can be made arbitrarily small if the quadrature errors converges to $0$ as the number of nodes $n\rightarrow\infty$. In particular, if there is a sequence $\{\ell_n\}$ that increases to $\infty$ such that the quadrature rule is exact for all functions $h \in \Pi_{\ell_n}^d$, i.e., $E_{\mathbf{c}}(h) = 0$, where $\ell_n \rightarrow \infty$ (see e.g.~\citep{mhaskar}), the rates of convergence of $\varepsilon_1(k)$ and $\varepsilon_2$ are both $\mathcal{O}(1/\ell_n)$ for a fixed $g$. Without the quadrature being exact, we still have nice convergence provided the quadrature errors are small, as the following corollary shows.
\begin{cor}\label{cor.quaderr}
Suppose there exists a sequence $\ell_n \rightarrow \infty$ such that $\gamma_{n,\ell_n} \rightarrow 0$ as $n \rightarrow \infty$. Then, for a fixed $L \geq 0$, we have $\max_{k \geq 0, g \in \Pi_L^d} \abs{\varepsilon_1(k)} / \norm{g}^2_{L^2}$ and $\max_{g \in \Pi_L^d} \abs{\varepsilon_2} / \norm{g}_{L^2} \rightarrow 0$ as $n \rightarrow \infty$.
\end{cor}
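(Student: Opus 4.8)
The plan is to deduce Corollary~\ref{cor.quaderr} directly from the quantitative bounds in Theorem~\ref{thm.quaderr} by making a good choice of the free index $\ell$ in those bounds. Recall that Theorem~\ref{thm.quaderr} provides, for every $\ell \geq 1$,
\[
\abs{\varepsilon_1(k)} \leq C_1\!\left(\frac{L^3}{\ell} + L^2 \gamma_{n,\ell}\right)\max_{0\leq j\leq L}\norm{g_j}_{L^\infty},
\qquad
\abs{\varepsilon_2} \leq C_2\!\left(\frac{L^2}{\ell} + L\gamma_{n,\ell}\right)\max_{0\leq j\leq L}\norm{g_j}_{L^\infty}\sum_{j=0}^L \mu_j^{-1}.
\]
By hypothesis there is a sequence $\ell_n \to \infty$ with $\gamma_{n,\ell_n}\to 0$. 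First I would substitute $\ell = \ell_n$ into both bounds; since $L$ is fixed, the terms $L^3/\ell_n$, $L^2/\ell_n$, $L^2\gamma_{n,\ell_n}$, $L\gamma_{n,\ell_n}$, and $\sum_{j=0}^L\mu_j^{-1}$ are either fixed constants or tend to $0$, so the right-hand sides go to $0$ as $n\to\infty$, uniformly in $k$.

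The remaining point is the passage from $\max_{0\leq j\leq L}\norm{g_j}_{L^\infty}$ to $\norm{g}_{L^2}^2$ (resp.\ $\norm{g}_{L^2}$), so that the ratios in the corollary statement are controlled. Here I would use the fact that $\Pi_L^d$ is a finite-dimensional space: all norms on it are equivalent, so there is a constant $c_{d,L}$, depending only on $d$ and $L$, with $\norm{g_j}_{L^\infty}\leq c_{d,L}\norm{g_j}_{L^2}\leq c_{d,L}\norm{g}_{L^2}$ for each harmonic component $g_j$ of $g\in\Pi_L^d$ (the second inequality uses orthogonality of the $\mathcal{H}_j^d$ in $L^2$). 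Hence $\max_{0\leq j\leq L}\norm{g_j}_{L^\infty}\leq c_{d,L}\norm{g}_{L^2}$, and therefore
\[
\frac{\abs{\varepsilon_1(k)}}{\norm{g}_{L^2}^2}\leq C_1 c_{d,L}\!\left(\frac{L^3}{\ell_n}+L^2\gamma_{n,\ell_n}\right)\frac{\norm{g}_{L^2}}{\norm{g}_{L^2}^2}.
\]
This is not yet bounded, because $\norm{g}_{L^2}$ can be arbitrarily small. I would resolve this by observing that $\varepsilon_1(k)$ and $\varepsilon_2$ are homogeneous in $g$ of degree $2$ and $1$ respectively (they are built from the bilinear quadrature errors $e^c_{j,\ell}$ and the linear errors $e^d_{i,\ell}$, which scale like $g\otimes g$ and $g$), so the ratios $\abs{\varepsilon_1(k)}/\norm{g}_{L^2}^2$ and $\abs{\varepsilon_2}/\norm{g}_{L^2}$ are scale-invariant in $g$; taking the supremum over $g\in\Pi_L^d$ is thus the same as taking it over the unit sphere of $\Pi_L^d$, on which $\norm{g}_{L^2}=1$ and $\max_j\norm{g_j}_{L^\infty}\leq c_{d,L}$ uniformly. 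Plugging this back in gives
\[
\max_{k\geq 0,\,g\in\Pi_L^d}\frac{\abs{\varepsilon_1(k)}}{\norm{g}_{L^2}^2}\leq C_1 c_{d,L}\!\left(\frac{L^3}{\ell_n}+L^2\gamma_{n,\ell_n}\right),
\qquad
\max_{g\in\Pi_L^d}\frac{\abs{\varepsilon_2}}{\norm{g}_{L^2}}\leq C_2 c_{d,L}\!\left(\frac{L^2}{\ell_n}+L\gamma_{n,\ell_n}\right)\sum_{j=0}^L\mu_j^{-1},
\]
and both right-hand sides tend to $0$ as $n\to\infty$ since $L$ is fixed, $\ell_n\to\infty$, $\gamma_{n,\ell_n}\to 0$, and $\mu_j>0$.

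The main obstacle, and the only nontrivial step, is the reconciliation of the $L^\infty$-on-components bound from Theorem~\ref{thm.quaderr} with the $L^2$-of-$g$ normalization in the corollary — i.e.\ verifying that the homogeneity/scale-invariance argument is legitimate and that the Nikolskii-type constant $c_{d,L}$ comparing $L^\infty$ and $L^2$ on $\Pi_L^d$ really depends only on $d$ and $L$ (which it does, by compactness of the unit sphere of the finite-dimensional space $\Pi_L^d$). Once that is in place, the corollary is immediate, and everything else is bookkeeping with fixed constants.
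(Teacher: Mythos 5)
Your proof follows essentially the same route as the paper's: invoke Theorem~\ref{thm.quaderr} with $\ell = \ell_n$, and control $\max_{0\leq j\leq L}\norm{g_j}_{L^\infty}$ by $\norm{g}_{L^2}$ using orthogonality of the $\mathcal{H}_j^d$ in $L^2$ together with equivalence of norms on the finite-dimensional space $\Pi_L^d$. That is exactly the argument the paper gives.

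The one place you go beyond the paper is the homogeneity/scale-invariance step you added to bridge the apparent mismatch between the stated bound on $\abs{\varepsilon_1(k)}$ in Theorem~\ref{thm.quaderr} (which displays a single factor of $\max_j\norm{g_j}_{L^\infty}$) and the normalization by $\norm{g}_{L^2}^2$ in the corollary. You are right to be suspicious here: tracing through the proof of Theorem~\ref{thm.quaderr}, the derived bound on $\abs{\varepsilon_1(k)}$ is actually proportional to $(\max_j\norm{g_j}_{L^\infty})^2$ — the displayed theorem drops the square, and the paper's one-line proof of the corollary silently assumes the correct quadratic form. With that correction the corollary follows immediately from $\max_j\norm{g_j}_{L^\infty}\leq C\norm{g}_{L^2}$, no homogeneity argument needed. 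Your patch — observing that $\varepsilon_1(k)$ is degree-$2$ homogeneous in $g$ (built from the bilinear errors $e^c_{j,\ell}$) and $\varepsilon_2$ is degree-$1$ homogeneous (built from the linear errors $e^d_{i,\ell}$), so the ratios can be taken over the $L^2$ unit sphere of $\Pi_L^d$ — is a legitimate alternative way to reach the same conclusion and has the merit of not depending on the exact power appearing in Theorem~\ref{thm.quaderr}'s bound. Both arguments are correct; yours is slightly more robust to the typo.
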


\Cref{cor.quaderr} shows that as $n$ increases, the quadrature errors $\varepsilon_1(k)$ and $\varepsilon_2$ converge to zero. Moreover, this convergence is uniform in the sense that it does not depend on the specific choice of $g \in \Pi_L^d$. Here, we normalize $\varepsilon_1(k)$ and $\varepsilon_2$ by $\norm{g}_{L^2}^2$ and $\norm{g}_{L^2}$, respectively, to obtain the ``relative" quadrature errors that do not scale when $g$ is multiplied by a scalar (see~(\ref{eq.L2freqbias})).

\section{Frequency biasing with a Sobolev-norm loss function}\label{sec:sobolev}

The frequency biasing during the training of an overparameterized NN has several consequences. In some situations, worse convergence rates for high-frequency components of a function are beneficial since the NN training procedure is less sensitive to the oscillatory noise in the data, acting as a low-pass filter. This significantly improves the generalization error of overparameterized NNs. However, in other situations, NN training struggles to accurately learn the high-frequency content of $g$, resulting in slow convergence. To precisely control the frequency biasing of NN training, we propose to train a NN with a loss function that has intrinsic spectral bias. One such example is the so-called Sobolev norm.  
\revise{Let $\mathcal{D}'(\sS^{d-1})$ be the space of distributions on $\sS^{d-1}$. Given $s \in \R$, consider $\mathcal{L}^s:\mathcal{D}'(\sS^{d-1}) \mapsto \mathcal{D}'(\sS^{d-1})$, where $\mathcal{L}^s = \big(I + \left( - \Delta \right)^{1/2}\big)^s$ and $\Delta$ is the Laplace--Beltrami operator on the sphere. We follow~\citep{barcelo2021fourier} and define the spherical Sobolev space $H^{s}(\sS^{d-1}) = \{f \in \mathcal{D}'(\sS^{d-1}): \mathcal{L}^s f \in L^2(\sS^{d-1}) \}$, equipped with a norm equivalent to eq.~(1.24) in~\citep{barcelo2021fourier},
\[
    \|f\|_{H^s(\sS^{d-1})}^2 = \sum_{\ell=0}^\infty \sum_{p=1}^{N(d,\ell)} (1+\ell)^{2s}\left|\hat{f}_{\ell,p}\right|^2,
\]
}where $\hat{f}_{\ell,p}$ are the spherical harmonic coefficients of $f$ (see~\cref{eq:g_lp}) and $N(d,\ell)$ is given in~\Cref{section.prelim}. We propose to set the loss function to be $\frac{1}{2}\norm{g-\NN}_{H^s}^2$ in replace of $\frac{1}{2}\norm{g-\NN}_{L^2}^2$ in~\cref{eq:framework2}. When $s = 0$, the $H^0$ Sobolev norm reduces to the $L^2$ norm as all spherical harmonic coefficients have an equal weight of $1$. If $s > 0$, then the high-frequency spherical harmonic coefficients are amplified by $(1+\ell)^{2s}$. The high-frequency components of the residual are then penalized more in the loss function. Hence, we expect the NN training to learn the high-frequency components faster with the squared $H^s$ loss function than the case of eq.~\eqref{eq:framework2}. Similarly, if $s < 0$, then the high-frequency spherical harmonic coefficients are dampened by $(1+\ell)^{2s}$. Consequently, one expects that the NN training process captures the high-frequency components of the residual more slowly with the squared $H^s$ loss function. However, when $s < 0$, we expect that the training is more robust to high-frequency noise in the training data. By tuning the parameter $s$, we can control the frequency biasing in NN training (see Theorem~\ref{thm.sobconvergence}). The choice of $s$ for a particular application can be determined from theory or by cross-validation.

First, we justify that the residual function is indeed in $H^s$. Since we assume that $g$ is bandlimited, $g \in H^s$ for all $s \in \R$. \Cref{prop.sobolev} shows that we could consider $s < 3/2$ for ReLU-based NN.
\begin{prop}\label{prop.sobolev}
Suppose $\NN: \sS^{d-1} \rightarrow \R$ is a 2-layer ReLU NN (see~\cref{eq:NN}). Then, we have $\NN \in H^s(\sS^{d-1})$ for all $s < 3/2$. Moreover, if $s \geq 3/2$, $\NN \in H^s(\sS^{d-1})$ if and only if $\NN$ is affine.
\end{prop}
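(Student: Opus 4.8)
The plan is to localize the singularities. Since $\NN$ is a finite linear combination of single neurons $\phi_{\mathbf w,b}(\mathbf x)=\relu(\mathbf w^\top\mathbf x+b)$ and $H^s(\sS^{d-1})$ is a vector space, it suffices to analyze one neuron together with how the singular parts of several neurons interact. For one neuron, the homogeneity $\relu(\alpha t)=\alpha\,\relu(t)$ for $\alpha>0$ lets us write $\phi_{\mathbf w,b}(\mathbf x)=\norm{\mathbf w}\,\relu(\langle\hat{\mathbf w},\mathbf x\rangle-c)$ with $\hat{\mathbf w}=\mathbf w/\norm{\mathbf w}$ and $c=-b/\norm{\mathbf w}$ (the case $\mathbf w=\mathbf 0$ being a constant, which lies in every $H^s$). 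If $c\ge 1$ the neuron vanishes identically on $\sS^{d-1}$, and if $c\le -1$ it equals the affine map $\mathbf x\mapsto\mathbf w^\top\mathbf x+b$ there; in both cases it is a spherical harmonic of degree $\le 1$, hence has finitely many nonzero coefficients and lies in $H^s(\sS^{d-1})$ for all $s\in\R$. The interesting regime is $-1<c<1$: then $\phi_{\mathbf w,b}$ is real-analytic on $\sS^{d-1}\setminus\Gamma$, where $\Gamma=\{\mathbf x\in\sS^{d-1}:\langle\hat{\mathbf w},\mathbf x\rangle=c\}$ is a $(d-2)$-sphere, a compact smooth hypersurface of $\sS^{d-1}$, and $\phi_{\mathbf w,b}$ has a corner across $\Gamma$.

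The analytic core is the one-variable model fact. In a tubular neighborhood of $\Gamma$ I would introduce coordinates in which the first coordinate is $t=\langle\hat{\mathbf w},\mathbf x\rangle-c$; the differential of $\mathbf x\mapsto\langle\hat{\mathbf w},\mathbf x\rangle$ is nonzero on $\Gamma$ precisely because $|c|<1$, so $t$ is a genuine coordinate and there $\phi_{\mathbf w,b}=\norm{\mathbf w}\,(t)_+$. Thus globally $\phi_{\mathbf w,b}$ equals a smooth function plus a smooth multiple of $(t)_+$, and by a partition of unity its $H^s(\sS^{d-1})$ membership reduces to that of $(t)_+$ near a hypersurface, with smooth transverse structure. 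Now recall that $(t)_+\in H^s_{\mathrm{loc}}(\R)$ iff $s<3/2$, because $\tfrac{d}{dt}(t)_+$ is the Heaviside function, which is in $H^\sigma_{\mathrm{loc}}(\R)$ iff $\sigma<1/2$; and attaching the $d-2$ transverse directions does not change the threshold. Quantitatively, for a function that is smooth off a smooth hypersurface $\Gamma\subset\sS^{d-1}$ with a ReLU-type transverse profile, the normal second derivative contains the surface delta $\delta_\Gamma$, which lies in $H^{-1/2-\epsilon}(\sS^{d-1})$ for every $\epsilon>0$ but not in $H^{-1/2}(\sS^{d-1})$ (the Fourier integral $\int_\R(1+\xi_1^2+|\xi'|^2)^{\beta}\,d\xi_1$ converges iff $\beta<-1/2$); this pins the order of $(t)_+$ at exactly $3/2$. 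Summing over the finitely many neurons gives the first assertion, $\NN\in H^s(\sS^{d-1})$ for all $s<3/2$. (Consistent with the rest of the paper, one could instead compute the Gegenbauer coefficients of $u\mapsto\relu(u-c)$ against the weight $(1-u^2)^{(d-3)/2}$ and check via the Funk--Hecke formula that $\sum_{\ell}(1+\ell)^{2s}\sum_p|\hat\phi_{\ell,p}|^2$ converges iff $s<3/2$.)

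For the converse I would argue by contraposition: if $\NN$ is \emph{not} affine, I exhibit a point $\mathbf p$, a smooth coordinate $t$ near $\mathbf p$ with nonvanishing differential at $\mathbf p$, and $\lambda\ne 0$ with $\NN=\lambda(t)_+ +(\text{smooth})$ near $\mathbf p$; the model fact then forces $\NN\notin H^{3/2}(\sS^{d-1})$, hence $\NN\notin H^s(\sS^{d-1})$ for every $s\ge 3/2$ since $H^s\hookrightarrow H^{3/2}$. To produce $\mathbf p$, group the (finitely many) neurons with $-1<c<1$ by their distinct corner hypersurfaces $\Gamma^{(1)},\dots,\Gamma^{(q)}$. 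On each connected component of $\sS^{d-1}\setminus\bigcup_j\Gamma^{(j)}$ the function $\NN$ is the restriction of an affine map of $\R^d$, and continuity forces the two affine pieces meeting along $\Gamma^{(j)}$ to differ by a scalar multiple $\lambda_j\,(\langle\mathbf n_j,\mathbf x\rangle-\gamma_j)$ of the affine defining function of the hyperplane carrying $\Gamma^{(j)}$ (an affine map of $\R^d$ vanishing on the $(d-2)$-sphere $\Gamma^{(j)}$ vanishes on its affine hull, which is that hyperplane). If every $\lambda_j=0$, all the affine pieces coincide — the cell-adjacency graph of the arrangement $\bigcup_j\Gamma^{(j)}$ on $\sS^{d-1}$ is connected — so $\NN$ is affine, contradicting the hypothesis. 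Hence some $\lambda_j\ne 0$; choosing $\mathbf p\in\Gamma^{(j)}$ away from the lower-dimensional set $\bigcup_{j'\ne j}\Gamma^{(j')}$, only the neurons with corner hypersurface $\Gamma^{(j)}$ fail to be smooth near $\mathbf p$, and using $\relu(t)+\relu(-t)=|t|$ and $\relu(t)-\relu(-t)=t$ to collect their two orientations shows their sum equals $\lambda_j\,(t)_+$ plus an affine term, with $t$ the defining function of $\Gamma^{(j)}$. This is the required normal form with $\lambda=\lambda_j\ne 0$.

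\textbf{Main obstacle.}
The hard part will be the converse direction: keeping the bookkeeping correct when several neurons share a single corner hyperplane (so the corner can cancel partially or fully), and then verifying that at a generic point of a \emph{surviving} corner the local picture really is ``smooth plus a nonzero ReLU of a coordinate.'' This rests on the elementary but not entirely trivial geometry of great subspheres — affine hulls of $(d-2)$-spheres, transversality, and connectivity of the cell arrangement on $\sS^{d-1}$. By contrast, the analytic heart — that the model profile $(t)_+$ sits exactly at Sobolev order $3/2$ because the codimension-one surface delta sits exactly at order $-1/2$ — is short once the tubular-neighborhood reduction and partition of unity are set up, and the first ("if") direction is then immediate from additivity of $H^s$.
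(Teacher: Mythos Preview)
Your proposal is correct and takes a genuinely different route from the paper. The paper reduces to a single neuron $f(\mathbf{x})=\relu(x_1+b)$, invokes the characterization of \cite{barcelo2020characterization} that $f\in H^s(\sS^{d-1})$ iff the cap-average functional $\mathcal{S}_s(f)$ is integrable, and then carries out an explicit asymptotic computation of spherical-cap averages near the corner set, with incomplete beta and hypergeometric estimates, to obtain $\mathcal{S}_s(h-f)(\mathbf{x})=\Theta(\rho^{2-2s}+1)$ where $\rho$ is the angular distance to $\{x_1=-b\}$. Your argument instead localizes in a tubular neighborhood, reduces via a partition of unity to the one-variable model $(t)_+$, and pins the threshold through the standard fact that a codimension-one surface measure on a compact manifold lies exactly at Sobolev order $-1/2$. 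Your approach is shorter and avoids special functions; the paper's yields explicit two-sided constants in the $\Theta$-estimate, which could be useful quantitatively but are not needed for the statement.

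One point is worth noting: the paper's reduction ``it suffices to prove that $f=\relu(\mathbf{w}^\top\mathbf{x}+b)$ is in $H^s$'' is valid for the direction $s<3/2$ (membership is preserved by finite sums), but for $s\ge 3/2$ it only establishes that a \emph{single} non-affine neuron fails to be in $H^{3/2}$. It does not by itself rule out that several non-affine neurons combine to a non-affine $\NN$ that nevertheless lies in $H^{3/2}$ through partial cancellation of kinks (e.g.\ $\relu(x_1)-\relu(-x_1)=x_1$ shows full cancellation can occur). Your grouping of neurons by their corner hypersurfaces $\Gamma^{(j)}$, together with the cell-adjacency argument that some jump $\lambda_j$ must be nonzero whenever $\NN$ is not globally affine, supplies exactly the missing step for the multi-neuron converse.
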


The proof is deferred to the supplementary material. When $s \geq 3/2$, the residual function $\NN-g$ may not be in $H^s$. However, we can still use the truncated sum to a maximum frequency $\lmax$ to train the NN, although the sum can no longer be interpreted as an approximation of some Sobolev norm at the continuous level. We discretize the Sobolev-based loss function as
\begin{equation}\label{eq.sobolevloss}
    \Phi_s(\mathbf{W}) = \frac{1}{2} \sum_{\ell = 0}^{\lmax} \sum_{p=1}^{N(d,\ell)} (1+\ell)^{2s} \left(\sum_{i=1}^n c_i Y_{\ell,p}(\mathbf{x}_i) (g-\NN)(\mathbf{x}_i)\right)^2 = 
   \frac{1}{2} (\mathbf{y} - \mathbf{u})^\top \mathbf{P}_s (\mathbf{y} - \mathbf{u}),
\end{equation}
where $\mathbf{u}$ and $\mathbf{y}$ follow~\cref{eq:GeneralLossFunction}, and $\mathbf{P}_s = \sum_{\ell=0}^{\lmax} \sum_{p=1}^{N(d,\ell)} (1+\ell)^{2s} \mathbf{P}_{\ell,p}$, $\mathbf{P}_{\ell,p} = \mathbf{a}_{\ell,p} \mathbf{a}_{\ell,p}^\top$, and $(\mathbf{a}_{\ell,p})_i = c_i Y_{\ell,p}(\mathbf{x}_i)$. We assume that $\mathbf{P}_s$ is positive definite, which requires that $(\lmax+1)^2 \geq n$. Next, we present our convergence theorem for Sobolev training.
\begin{thm}\label{thm.sobconvergence}
Suppose $g \in \Pi_L^d$ and $\Phi_s$ is the loss function in~eq.~\eqref{eq.sobolevloss}, where $\mathbf{P}_s$ is positive definite and $\lmax \geq L$. Under the assumptions of Theorem~\ref{thm.decoupledmain}, if $1 - 2\eta\mu_\ell (1+\ell)^{2s} > 0$ for all $0 \leq \ell \leq L$, then with probability $\geq 1-\delta$ over the random initialization, we have
\[
    \mathbf{y} - \mathbf{u}(k) = \sum_{\ell=0}^L \left(1-2\eta\mu_\ell(1+\ell)^{2s}\right)^k \mathbf{y}^\ell + \boldsymbol{\varepsilon}_1 + \boldsymbol{\varepsilon}_2(k), \qquad \|\boldsymbol{\varepsilon}_2(k)\|_{\mathbf{P}_s} \leq \epsilon, \qquad 0\leq k\leq T,
\]
where $\mathbf{y}^\ell = (g_\ell(\mathbf{x}_1), \ldots, g_\ell(\mathbf{x}_n))^\top$ and $\boldsymbol{\varepsilon}_1$ satisfies
\begin{align*}
   \|\boldsymbol{\varepsilon}_1\|_{\mathbf{P}_s} \!\leq\! \sum_{\ell = 0}^L \mu_\ell^{-1} \! \|\boldsymbol{\varepsilon}_1^\ell\|_{\mathbf{P}_s}, \quad (\varepsilon_1^\ell)_i \!=\! e_{i,\ell}^d \!+\! \sum_{j = 0}^{\lmax} \frac{(1\!+\!j)^{2s}}{(1\!+\!\ell)^{2s}}\!\sum_{p=1}^{N(d,j)}\! e^a_{\ell,j,p}\! \left(\mu_j Y_{j,p}(\mathbf{x}_i) \!+\! e_{i,j,p}^b\right).
\end{align*}
\end{thm}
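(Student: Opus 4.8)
The plan is to apply Theorem~\ref{thm.decoupledmain} with the specific choice $\mathbf{P} = \mathbf{P}_s$ and then translate the abstract convergence statement $\mathbf{y} - \mathbf{u}(k) = (\mathbf{I} - 2\eta\mathbf{H}^\infty\mathbf{P}_s)^k\mathbf{y} + \boldsymbol{\epsilon}(k)$ into the frequency-resolved form claimed here. The crucial structural observation is that $\mathbf{P}_s$ is built from the rank-one pieces $\mathbf{P}_{\ell,p} = \mathbf{a}_{\ell,p}\mathbf{a}_{\ell,p}^\top$ with $(\mathbf{a}_{\ell,p})_i = c_i Y_{\ell,p}(\mathbf{x}_i)$, so that $\mathbf{H}^\infty\mathbf{P}_s$ acts on a vector $\mathbf{v}$ by first forming the weighted inner products $\mathbf{a}_{\ell,p}^\top\mathbf{v} = \sum_i c_i Y_{\ell,p}(\mathbf{x}_i) v_i$ — i.e.\ discrete approximations to spherical harmonic coefficients — and then applying $\mathbf{H}^\infty$ to the combination $\sum_{\ell,p}(1+\ell)^{2s}(\mathbf{a}_{\ell,p}^\top\mathbf{v})\mathbf{a}_{\ell,p}$. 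First I would write $\mathbf{y} = \sum_{\ell=0}^L \mathbf{y}^\ell$ with $\mathbf{y}^\ell = (g_\ell(\mathbf{x}_i))_i$ (using $g\in\Pi_L^d$), and then analyze how one step $\mathbf{I} - 2\eta\mathbf{H}^\infty\mathbf{P}_s$ acts on each $\mathbf{y}^\ell$.

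The key step is an approximate eigenvector computation: I want to show $\mathbf{H}^\infty\mathbf{P}_s\,\mathbf{y}^\ell \approx \mu_\ell(1+\ell)^{2s}\,\mathbf{y}^\ell$ up to explicit quadrature errors. Here the identity~\cref{eq.Hinf}/\cref{eq:Keigenvalues} that $H^\infty_{ij} = K^\infty(\mathbf{x}_i,\mathbf{x}_j)$ is essential: applying $\mathbf{H}^\infty$ to a vector of samples of a function $f$ gives $(\sum_j K^\infty(\mathbf{x}_i,\mathbf{x}_j) f(\mathbf{x}_j))_i$, which would be $\int K^\infty(\mathbf{x}_i,\mathbf{x}')f(\mathbf{x}')\,d\mathbf{x}'$ if the quadrature $\{(\mathbf{x}_j,c_j)\}$ were exact. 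Feeding through the Funk--Hecke relation~\cref{eq:Keigenvalues2}, the sampled harmonic $Y_{j,p}$ passes through $\mathbf{H}^\infty$ with eigenvalue $\mu_j$, up to the error $e^b_{i,j,p} = E_{\boldsymbol c}(K^\infty(\mathbf{x}_i,\cdot)Y_{j,p})$. Similarly, the weighted inner product $\mathbf{a}_{\ell,p}^\top\mathbf{y}^\ell = \sum_i c_i Y_{\ell,p}(\mathbf{x}_i) g_\ell(\mathbf{x}_i)$ equals the true coefficient $\int g_\ell Y_{\ell,p}\,d\mathbf{x}$ (which is $\delta_{\ell\text{-block}}$ times an orthonormality-weighted quantity — this is where the $Y_{\ell,p}$-projection picks out exactly the degree-$\ell$ block) up to $e^a_{\ell,j,p} = E_{\boldsymbol c}(g_\ell Y_{j,p})$, with also the cross-degree terms $j\neq\ell$ contributing purely through their quadrature errors $e^a_{\ell,j,p}$. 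Collecting these, $(\mathbf{H}^\infty\mathbf{P}_s\mathbf{y}^\ell)_i = \mu_\ell(1+\ell)^{2s}g_\ell(\mathbf{x}_i)$ plus a remainder that I would package exactly as the displayed $(\varepsilon_1^\ell)_i$: the $e^d_{i,\ell}$ term comes from the diagonal/leading contribution (using $e^d_{i,\ell} = E_{\boldsymbol c}(K^\infty(\mathbf{x}_i,\cdot)g_\ell)$ together with $g_\ell = \sum_p \hat g_{\ell,p} Y_{\ell,p}$), and the double sum over $j,p$ collects the cross-terms $e^a_{\ell,j,p}(\mu_j Y_{j,p}(\mathbf{x}_i) + e^b_{i,j,p})$ weighted by the ratio $(1+j)^{2s}/(1+\ell)^{2s}$ that appears because the loss scales the $j$-block by $(1+j)^{2s}$ while we have factored $\mu_\ell(1+\ell)^{2s}$ out front.

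Once the one-step action is controlled, I would iterate: since the leading term acts diagonally on the $\mathbf{y}^\ell$ with multiplier $1 - 2\eta\mu_\ell(1+\ell)^{2s}$ (positive by hypothesis, so the powers stay bounded by $1$), expanding $(\mathbf{I} - 2\eta\mathbf{H}^\infty\mathbf{P}_s)^k = \sum$ of products shows the main term is $\sum_{\ell=0}^L (1-2\eta\mu_\ell(1+\ell)^{2s})^k\mathbf{y}^\ell$ and everything left over is a sum of at most $k$ error contributions, each bounded in $\|\cdot\|_{\mathbf{P}_s}$ by a geometric series with ratio $\leq 1$ times $\|\boldsymbol{\varepsilon}_1^\ell\|_{\mathbf{P}_s}$. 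To get the clean bound $\|\boldsymbol{\varepsilon}_1\|_{\mathbf{P}_s} \leq \sum_\ell \mu_\ell^{-1}\|\boldsymbol{\varepsilon}_1^\ell\|_{\mathbf{P}_s}$ I would sum the geometric series: each $\ell$-block's accumulated error telescopes to roughly $(2\eta)^{-1}(\mu_\ell(1+\ell)^{2s})^{-1}$ times $2\eta(1+\ell)^{2s}\|\boldsymbol{\varepsilon}_1^\ell\|_{\mathbf{P}_s}$, i.e.\ $\mu_\ell^{-1}\|\boldsymbol{\varepsilon}_1^\ell\|_{\mathbf{P}_s}$, after absorbing the $(1+\ell)^{2s}$ factors consistently with how $\boldsymbol{\varepsilon}_1^\ell$ was normalized. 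Finally, $\boldsymbol{\varepsilon}_2(k) = -\boldsymbol{\epsilon}(k)$ is just the error term from Theorem~\ref{thm.decoupledmain}, which already satisfies $\|\boldsymbol{\epsilon}(k)\|_{\mathbf{P}} \leq \epsilon$ with $\mathbf{P} = \mathbf{P}_s$, giving the sign flip because the theorem is phrased for $\mathbf{u}(k) - \mathbf{y}$ rather than $\mathbf{y} - \mathbf{u}(k)$.

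The main obstacle I anticipate is the bookkeeping in the iteration step: because $\mathbf{H}^\infty\mathbf{P}_s$ is not exactly diagonalized by the $\mathbf{y}^\ell$ (the cross-degree quadrature errors couple the blocks) and is only self-adjoint with respect to a nonstandard inner product, I need to be careful that the error terms do not compound multiplicatively over the $k$ iterations. The resolution is to treat the exact operator as (diagonal part) $+$ (error part) and bound the error part's operator norm in $\|\cdot\|_{\mathbf{P}_s}$ uniformly, then use that the diagonal part has spectral radius $\leq 1$; summing over the at-most-$k$ places where an error can enter a length-$k$ word gives a bound independent of $k$ precisely because of the $\mu_\ell^{-1}$ (equivalently, geometric-series) factor. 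A secondary subtlety is making sure the $(1+j)^{2s}/(1+\ell)^{2s}$ ratios in $\boldsymbol{\varepsilon}_1^\ell$ are matched to the correct normalization of $\|\boldsymbol{\varepsilon}_1^\ell\|_{\mathbf{P}_s}$ versus the factored-out $\mu_\ell(1+\ell)^{2s}$ — this is purely algebraic but easy to get off by a power of $(1+\ell)^{2s}$, so I would track it carefully when passing from the per-step remainder to the summed bound.
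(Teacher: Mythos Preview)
Your proposal is correct and follows essentially the same route as the paper: apply Theorem~\ref{thm.decoupledmain} with $\mathbf{P}=\mathbf{P}_s$, compute the one-step action $\mathbf{H}^\infty\mathbf{P}_s\mathbf{y}^\ell = \mu_\ell(1+\ell)^{2s}\mathbf{y}^\ell + (1+\ell)^{2s}\boldsymbol{\varepsilon}_1^\ell$ via the rank-one structure of $\mathbf{P}_s$ and the Funk--Hecke identity (with the quadrature errors $e^a,e^b,e^d$ exactly as you describe), and then iterate with a Duhamel-type expansion and sum the geometric series to get $\mu_\ell^{-1}\|\boldsymbol{\varepsilon}_1^\ell\|_{\mathbf{P}_s}$. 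One small simplification relative to your ``main obstacle'' paragraph: you do not need to isolate and bound an ``error part'' of the operator separately; the paper just uses that $\mathbf{I}-2\eta\mathbf{H}^\infty\mathbf{P}_s$ is self-adjoint and positive semidefinite in $(\R^n,\pairps{\cdot}{\cdot})$, hence $\normps{\mathbf{I}-2\eta\mathbf{H}^\infty\mathbf{P}_s}\le 1$, and applies this directly to the error vectors $\omega_\ell\boldsymbol{\varepsilon}_1^\ell$ in the expansion $A^k\mathbf{y}^\ell=\alpha_\ell^k\mathbf{y}^\ell+\sum_{t=0}^{k-1}\alpha_\ell^t A^{k-1-t}(-2\eta\omega_\ell\boldsymbol{\varepsilon}_1^\ell)$ --- which is precisely the geometric-series argument you already sketched.
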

Compared to Theorem~\ref{thm.freqbias}, Theorem~\ref{thm.sobconvergence} says that up to the level of quadrature errors, the convergence rate of the degree-$\ell$ component is $1-2\eta\mu_\ell(1+\ell)^{2s}$. In particular, since $\mu_\ell = \mathcal{O}(\ell^{-d})$, there is an $s^* > 0$, which depends on $d$, such that $(1+\ell)^{2s^*} \mu_\ell$ can be bounded from above and below by positive constant that are independent of $\ell$ for all $\ell\geq 0$. This means for any $s > s^*$, we expect to reverse the frequency biasing behavior of NN training. Figure~\ref{fig:FrequencyBiasingRainbow} shows the reversal of frequency biasing as $s$ increases from $-1$ to $4$ (see~\cref{sec:test1}). 

\begin{figure}
\centering
\begin{minipage}{.32\textwidth}
\begin{overpic}[height = 3cm]{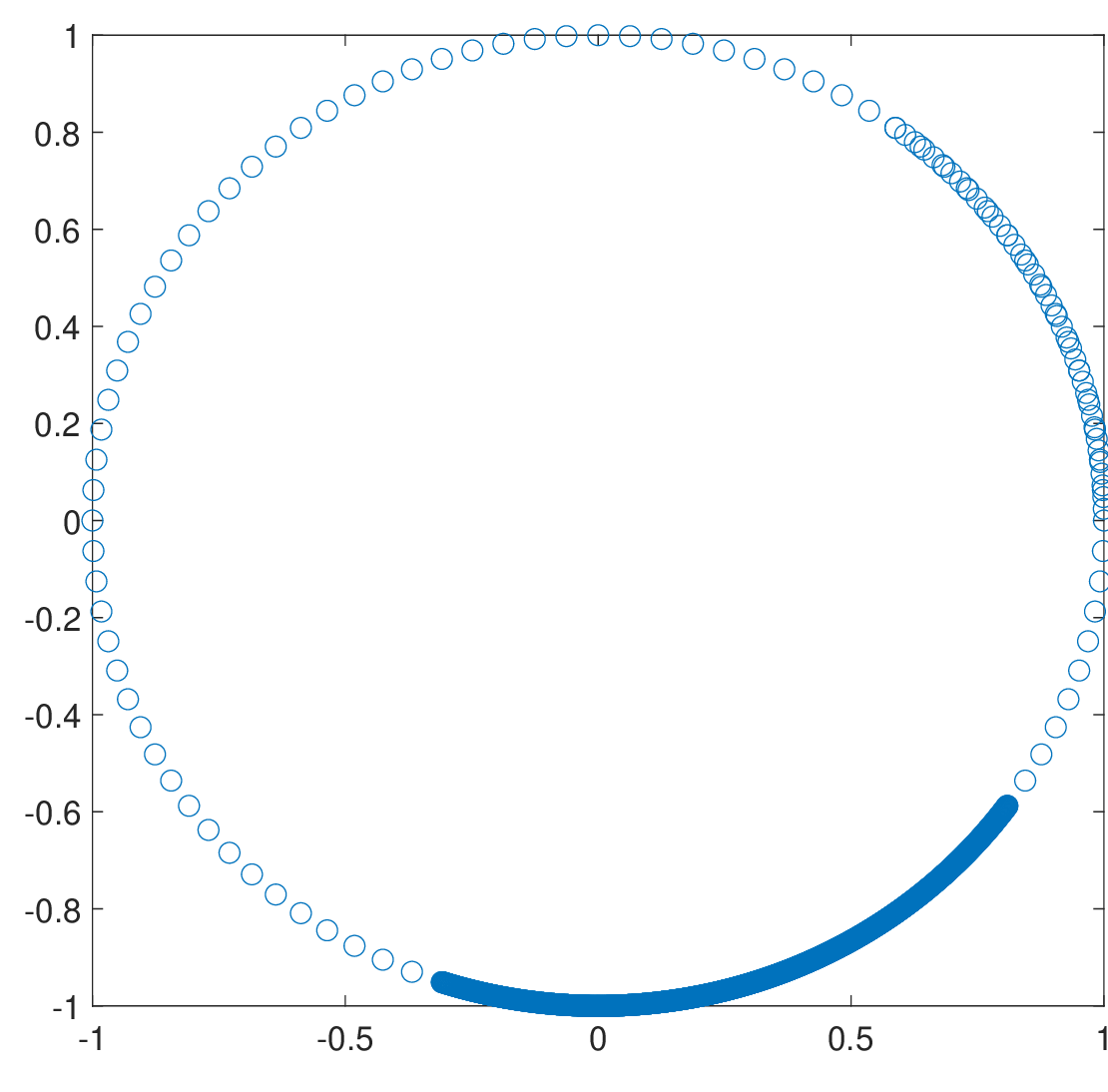}
\put(42,-4) {$\cos \theta$}
\put(-8,30) {\rotatebox{90}{$\sin \theta$}}
\end{overpic} 
\end{minipage}
\hspace{-1cm}
\begin{minipage}{.32\textwidth}
\begin{overpic}[width=.9\textwidth]{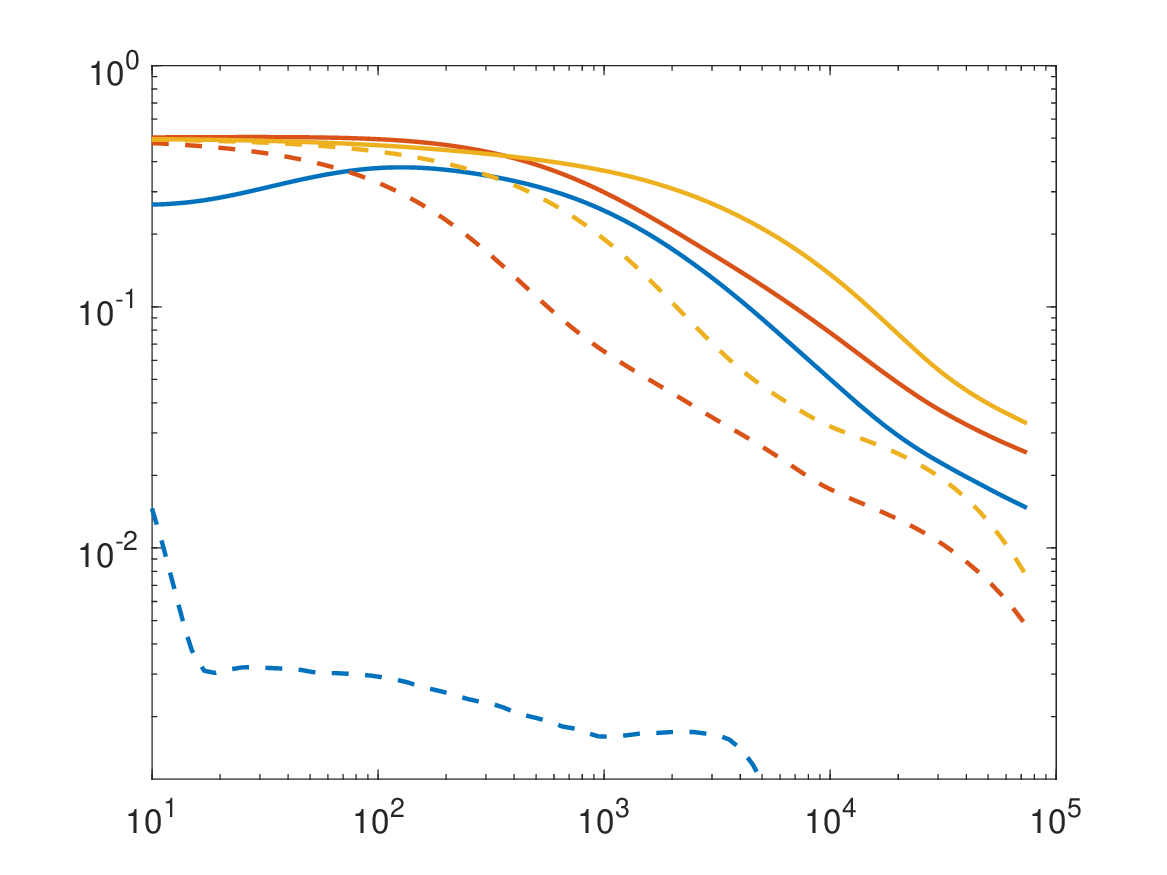}
\put(42,-2) {epochs}
\put(0,30) {\rotatebox{90}{loss}}
\end{overpic} 
\end{minipage}
\hspace{-0.5cm}
\begin{minipage}{.32\textwidth}
\begin{overpic}[width=.9\textwidth]{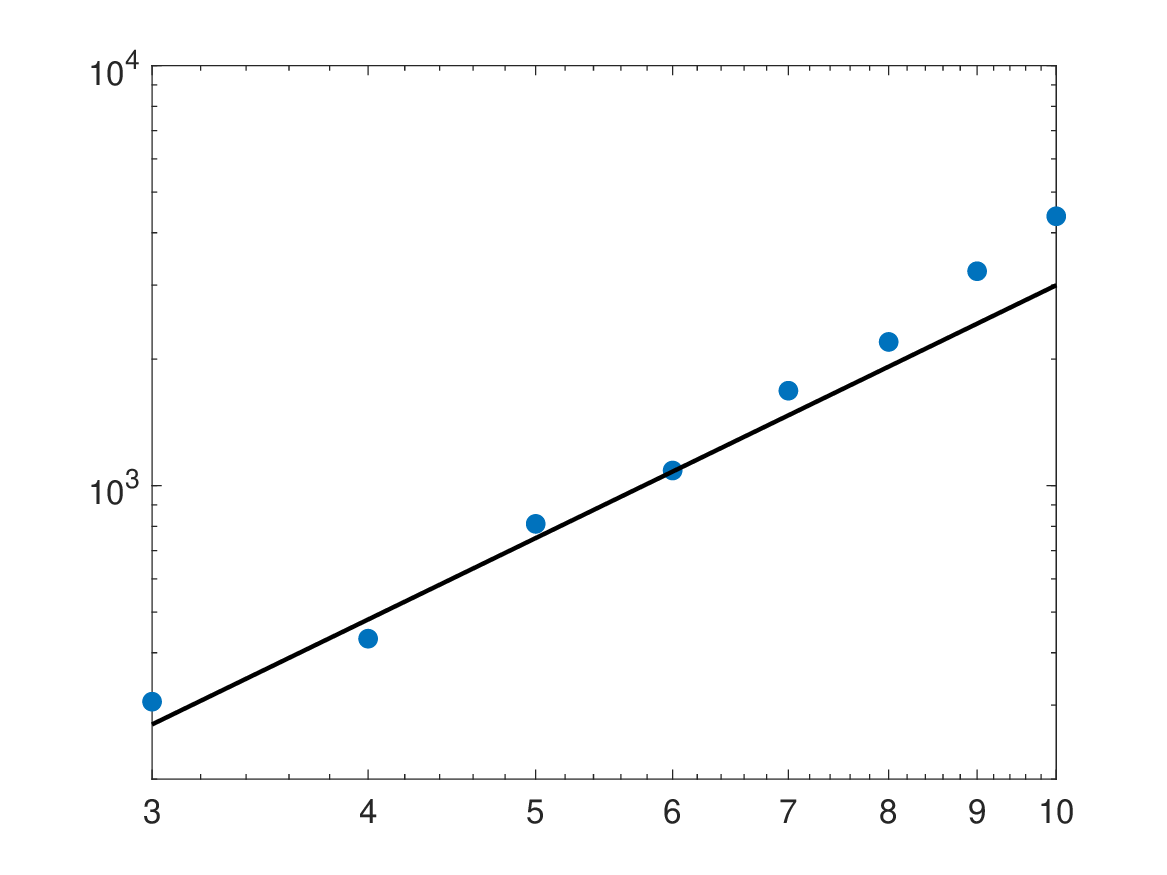}
\put(34,-2) {frequency $\ell$}
\put(0,0) {\rotatebox{90}{number of iterations}}
\end{overpic} 
\end{minipage} 
\comment{
\begin{subfigure}{0.3\textwidth}
\includegraphics[width = \textwidth]{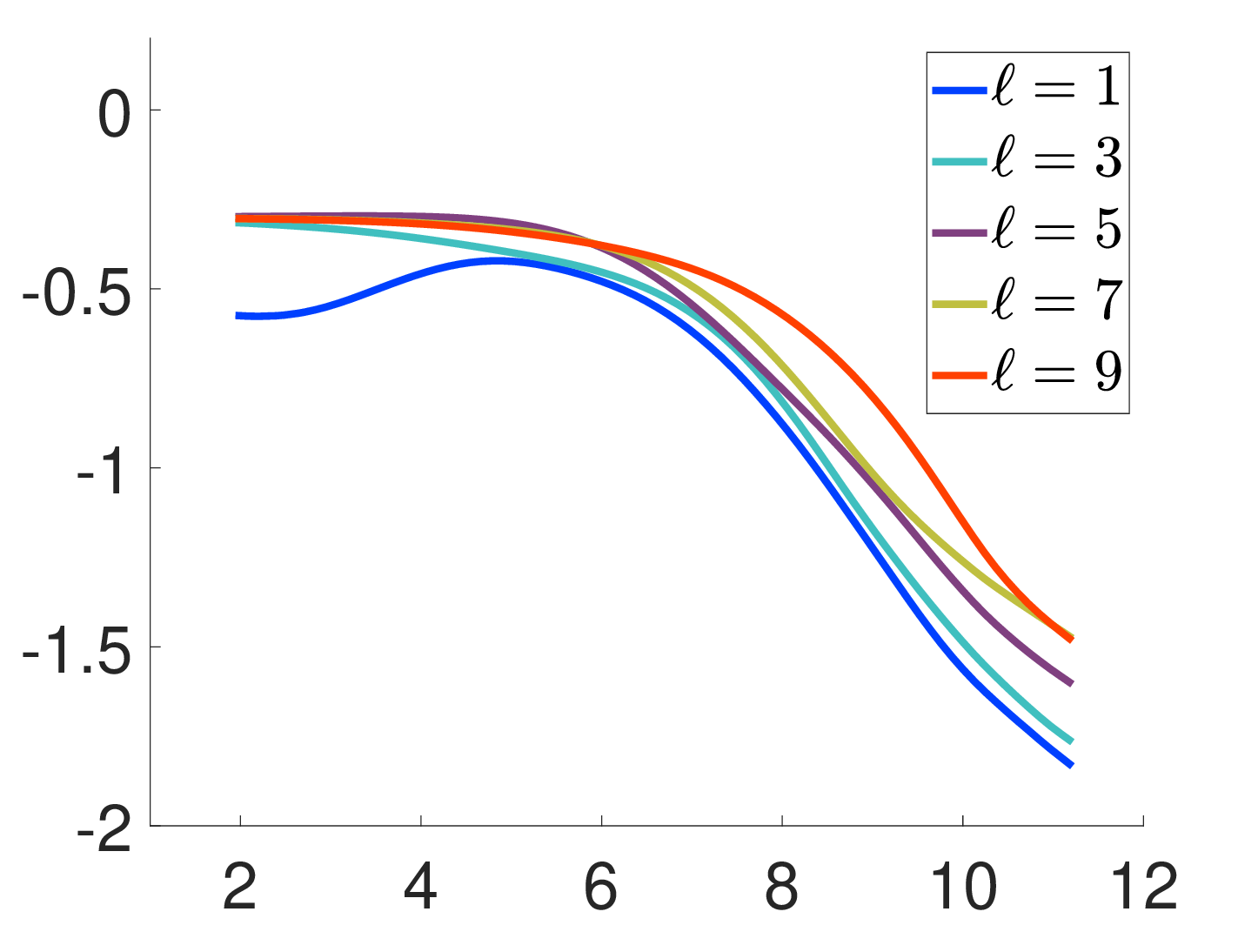}
\subcaption{$\ell^2$-loss-based training}
\label{fig:uniform_train-1D}
\end{subfigure}
\hfill
\begin{subfigure}{0.3\textwidth}
\includegraphics[width = \textwidth]{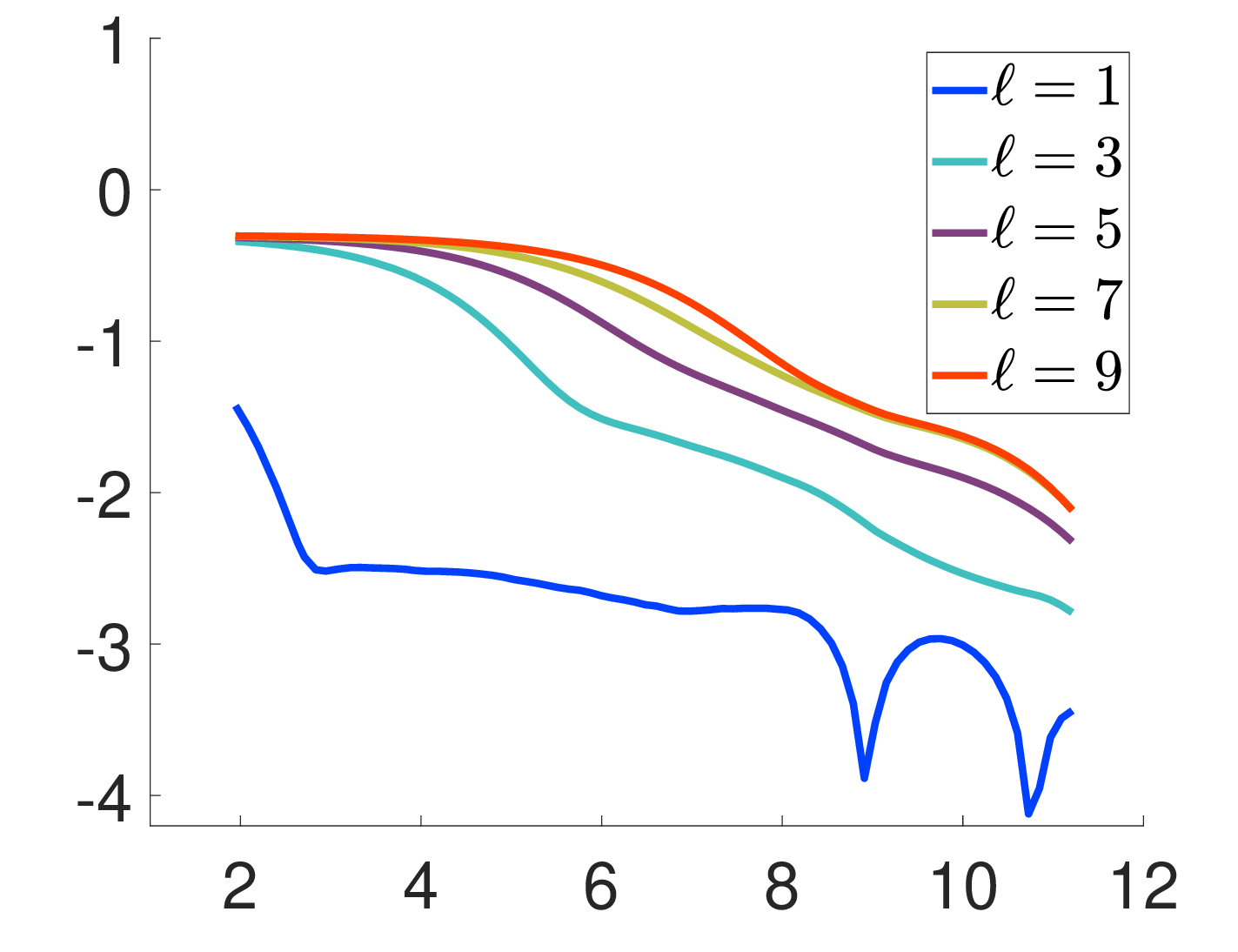}
\subcaption{$L^2$-loss-based training}
\label{fig:weighted_train-1D}
\end{subfigure}
\hfill
\begin{subfigure}{0.3\textwidth}
\includegraphics[width = \textwidth]{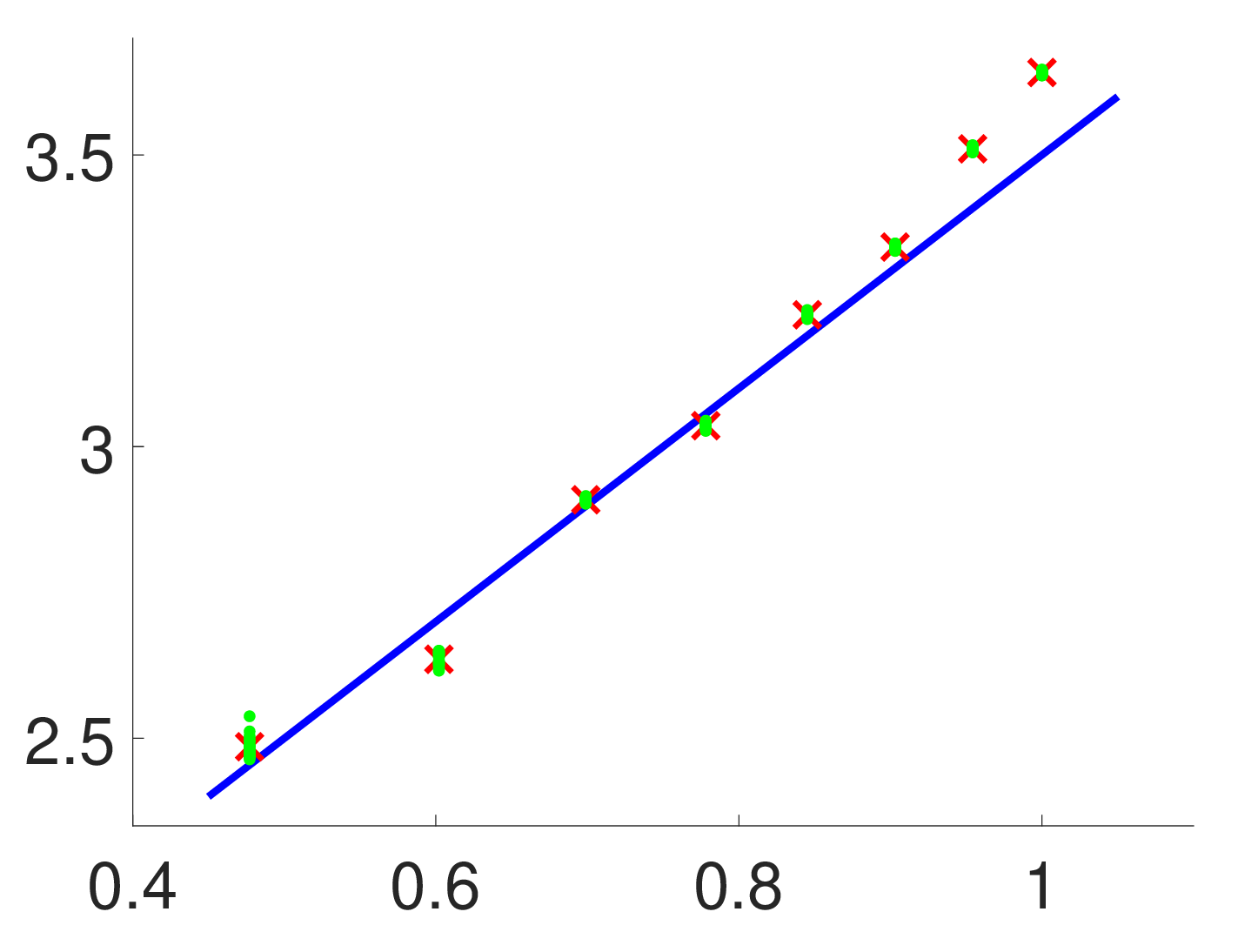}
\subcaption{Rate of convergence}
\end{subfigure}
}
\caption{Left: the nonuniform data $\mathbf{x} = (\cos\theta ,\sin \theta)$ on the unit circle ($\sS^1$). Middle: the change of frequency loss for $\ell = 1$ (blue), $5$ (red) and $9$ (yellow) against the number of iterations for loss function $\Phi$ (solid lines) and $\widetilde \Phi$ (dash lines). Right: the number of iterations for the NN training to achieve a fixed loss threshold in learning $g_\ell(\mathbf x) = \sin(\ell\theta)$ for $3\leq \ell\leq 10$ given the loss function $\widetilde \Phi$ while the black line represents the $\mathcal O(\ell^2)$ rate based on the analysis in~\citep{basri}.\label{fig:weighted-1D}}
\end{figure}

\section{Experiments and discussion}\label{sec:experiments}

\begin{figure} 
\centering
\begin{overpic}[width=.85\textwidth]{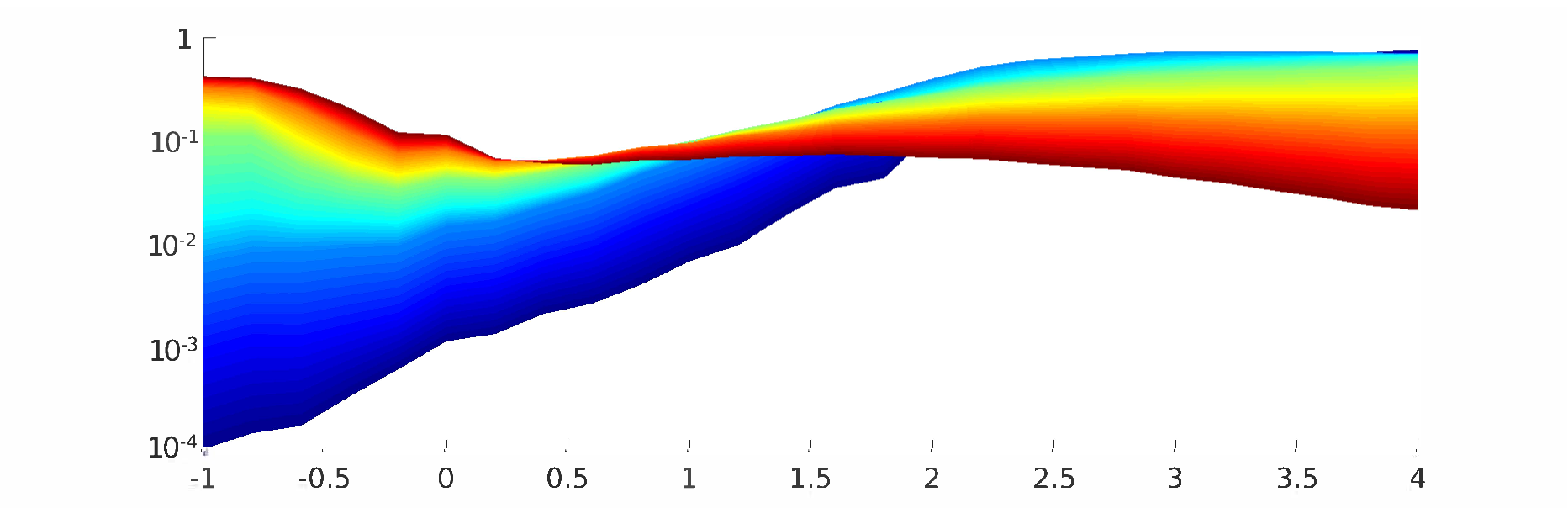}
\put(5,10) {\rotatebox{90}{$|\widehat{\mathcal{N}}(\ell) - \hat{g}(\ell)|$}}
\put(51,-1) {$s$}
\put(28.5,3.5) {\color{black}\vector(0,1){25}}
\put(57.7,3.5) {\color{black}\vector(0,1){25}}
\put(22,33) {ReLU NTK $L^2$}
\put(24,30) {freq.~bias}
\put(51,30) {min.~freq.~bias}
\put(58,14){\makebox(33,5){\upbracefill}}
\put(68,13) {high-to-low}
\put(69,10) {freq.~bias}
\put(13,14){\makebox(22,5){\upbracefill}}
\put(18,20) {low-to-high}
\put(19,17.5) {freq.~bias}
\end{overpic}
\caption{Frequency-biasing during NN training with a squared $H^s$ loss function. The blue-to-red rainbow corresponds to low-to-high frequency losses $|\widehat{\mathcal{N}}(\ell) - \hat{g}(\ell)|$ for the frequency index $\ell$ ranging from $1$ (blue) to $9$ (red) with nonuniform training data, respectively. Here, an overparameterized 2-layer ReLU NN $\mathcal{N}(\mathbf{x})$ is trained for $5000$ epochs to learn function $g(\mathbf{x}) = \tilde g(\theta) =  \sum_{\ell=1}^{9} \sin (\ell\theta)$ on $\sS^1$ given the $H^s$ loss with $-1\leq s\leq 4$. The inversion of the rainbow as $s$ increases demonstrates the reversal of the frequency biasing phenomenon with the squared Sobolev $H^s$ loss.}
\label{fig:FrequencyBiasingRainbow} 
\end{figure} 

This section presents three experiments with synthetic and real-world datasets to investigate the frequency biasing of NN training using squared $L^2$ loss and squared $H^s$ loss. The first two experiments learn functions on $\sS^{1}$ and $\sS^{2}$, respectively. In the third test, we train an autoencoder on the MNIST dataset for a denoising task. One can find more details in the supplementary material.

\subsection{Learning trigonometric polynomials on the unit circle}\label{sec:test1}
First, we consider learning a function on $\mathbb{S}^1$. We create a set of $n = 1140$ nonuniform data $\{\mathbf{x}_i\}_{i=1}^n$, as seen in~\Cref{fig:weighted-1D}, and compute the quadrature weights $\{c_i\}_{i=1}^{n}$ for the loss function $\widetilde \Phi$ in~\cref{eq:framework2}. We train a 2-layer ReLU NN to learn $g(\mathbf{x}) = \tilde g(\theta) = \sum_{\ell=1}^9 \sin(\ell \theta)$, where $\mathbf{x} = (\cos\theta, \sin\theta)$. We define the frequency loss $|\widehat{\mathcal{N}}(\ell) - \hat g(\ell) |$ where
$\widehat{\mathcal{N}}$ and $\hat{g}$ are the Fourier coefficients of $\mathcal{N}$ and $g$, respectively. In~\Cref{fig:weighted-1D}, we plot the frequency loss for $\ell=1,5,9$ in different colors to illustrate how well the NN fits each frequency component. The solid and dash lines correspond to the loss function $\Phi$ in~\cref{eq:framework1} and $\widetilde \Phi$ in~\cref{eq:framework2}, respectively. The comparisons show that the frequency biasing is more evident given the loss function $\widetilde \Phi$. This observation collaborates the theoretical statements in~\Cref{thm.freqbias}. There is no guarantee that frequency biasing always exists when using $\Phi$ as the loss function for nonuniform data training, which is also observed here. Moreover, \Cref{fig:weighted-1D} also shows that it takes asymptotically $\mathcal{O}(\ell^2)$ iterations to learn the $\ell$th frequency $\sin(\ell\theta)$ given the loss function $\widetilde \Phi$. A similar plot appears in~\cite{basri} for uniform data training.

We also use the squared $H^s$ norm as the loss function to learn $g$. After $5000$ epochs, we plot the $\ell$th frequency loss with $\ell$ ranging from $1$ (blue) to $9$ (red) in~\Cref{fig:FrequencyBiasingRainbow}, given different $s$ values. As $s$ increases, the higher-frequency components are learned faster. When $s> 2$,  the frequency biasing is entirely reversed in the sense that higher-frequency parts are learned faster than the lower-frequency ones rather than a low-frequency biasing under the squared $L^2$ loss (see~\Cref{thm.freqbias}). The gradually changing ``rainbow'' in~\Cref{fig:FrequencyBiasingRainbow} demonstrates that the smoothing property of an overparameterized NN can be compensated by the intrinsic high-frequency biasing of the $H^s$ loss function for a large enough $s$, corroborating~\Cref{thm.sobconvergence}.

\begin{figure}
\centering
\begin{minipage}{.32\textwidth}
\begin{overpic}[width=.9\textwidth]{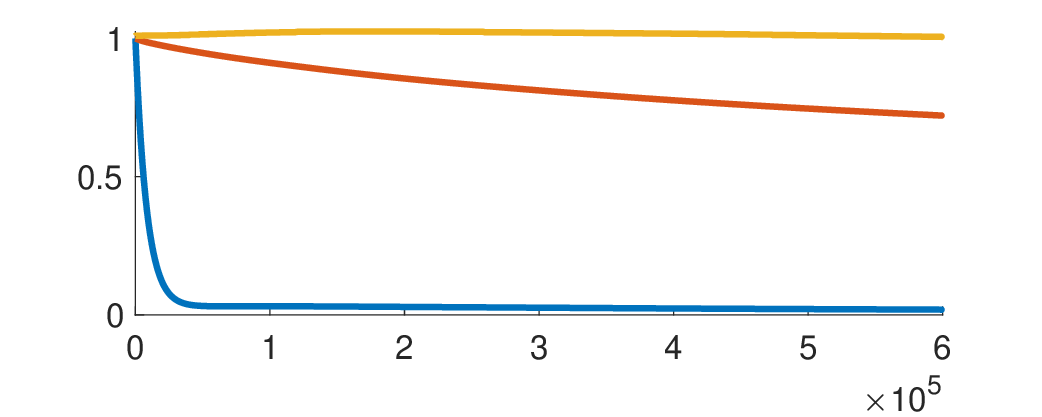}
\put(42,-2) {epochs}
\put(0,18) {\rotatebox{90}{loss}}
\end{overpic} 
\end{minipage}
\begin{minipage}{.32\textwidth}
\begin{overpic}[width=.9\textwidth]{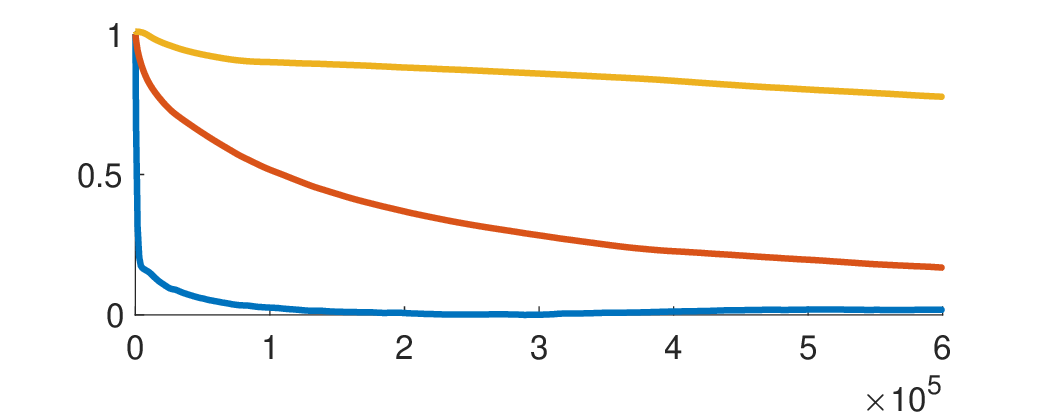}
\put(42,-2) {epochs}
\put(0,18) {\rotatebox{90}{loss}}
\end{overpic} 
\end{minipage}
\begin{minipage}{.32\textwidth}
\begin{overpic}[width=.9\textwidth]{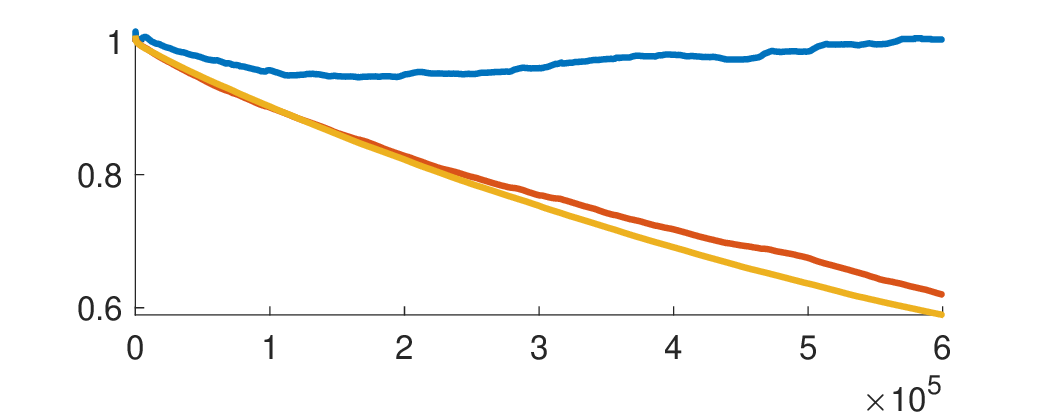}
\put(42,-2) {epochs}
\put(0,18) {\rotatebox{90}{loss}}
\end{overpic} 
\end{minipage} 
\caption{Frequency losses for $\ell=4$ (blue), $10$ (red), and $20$ (yellow), when learning a function on $\sS^2$ using different squared $H^s$ loss function. Compared to low-frequency biasing in the cases of $s=-1$ (left) and $s=0$ (middle), we observe a high-frequency biasing when $s = 2.5$ (right).\label{fig:2D}}
\end{figure}

\subsection{Learning spherical harmonics on the unit sphere}\label{sec:test2}
Similar to the previous example in $\sS^1$, we design an experiment on $\sS^2$. We utilize a data set $\{\mathbf{x}_i\}_{i=1}^n$ in~\citep{wright2015} where $n = 2500$, which comes with carefully designed positive quadrature weights $\{c_i\}_{i=1}^n$. We test the squared $H^s$ norm as the loss function in NN training with training data coming from a function $g(\mathbf{x}) = \sum_{\ell=0, \ell\text{ even}}^{30} Y_{\ell,0}$ defined on $\sS^2$ that involves more high-frequency components than the $\sS^1$ example. The training results are shown in~\Cref{fig:2D} with different $s$ values. The natural low-frequency biasing of NN in the case of $L^2$-based training (the case of $s=0$) is enhanced when $s=-1$, and is totally reversed when $s=2.5$.

\begin{figure}
\centering
\begin{subfigure}{0.22\textwidth}
\includegraphics[width = \textwidth]{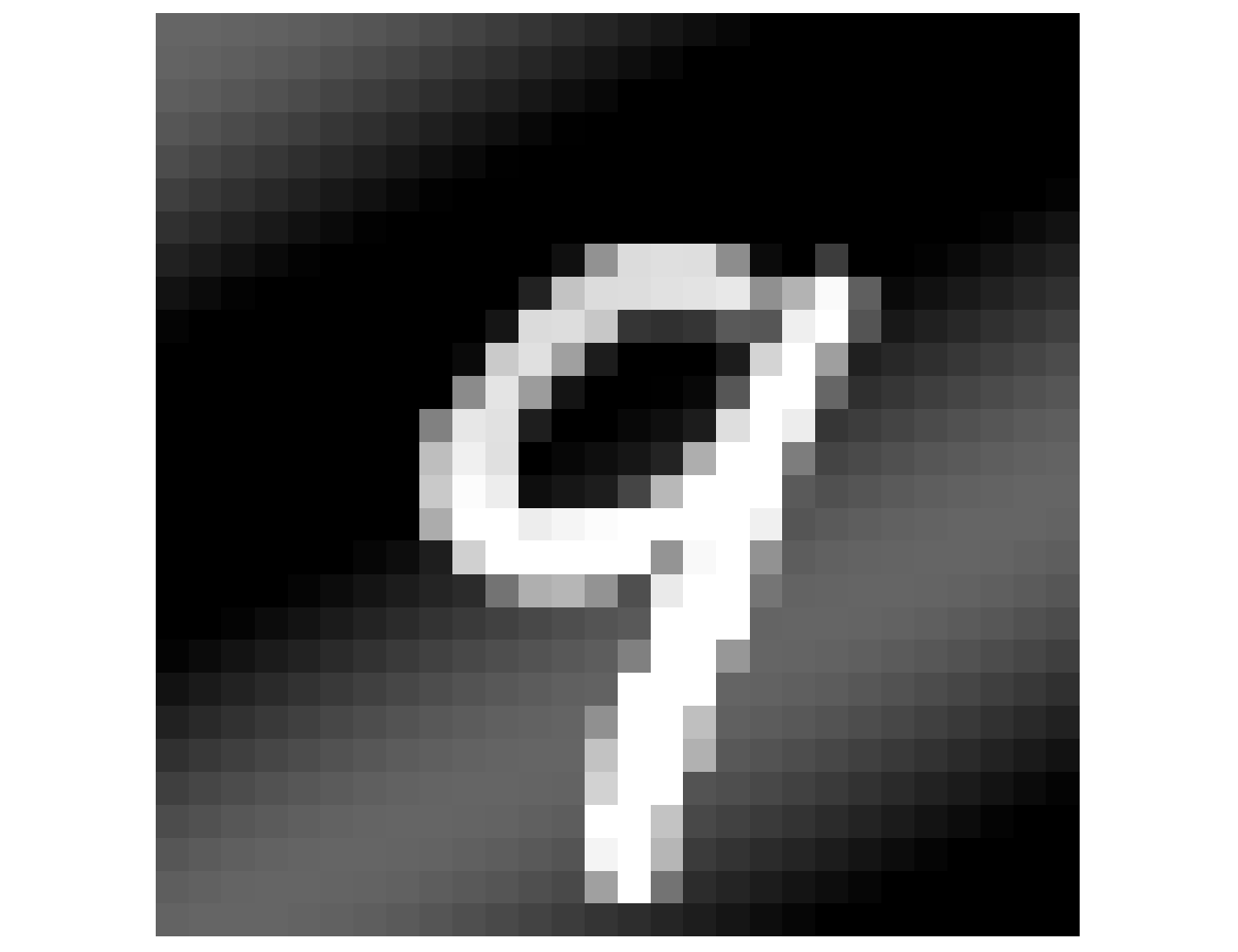}
\subcaption{blurred image}
\end{subfigure}
\hfill
\begin{subfigure}{0.22\textwidth}
\includegraphics[width = \textwidth]{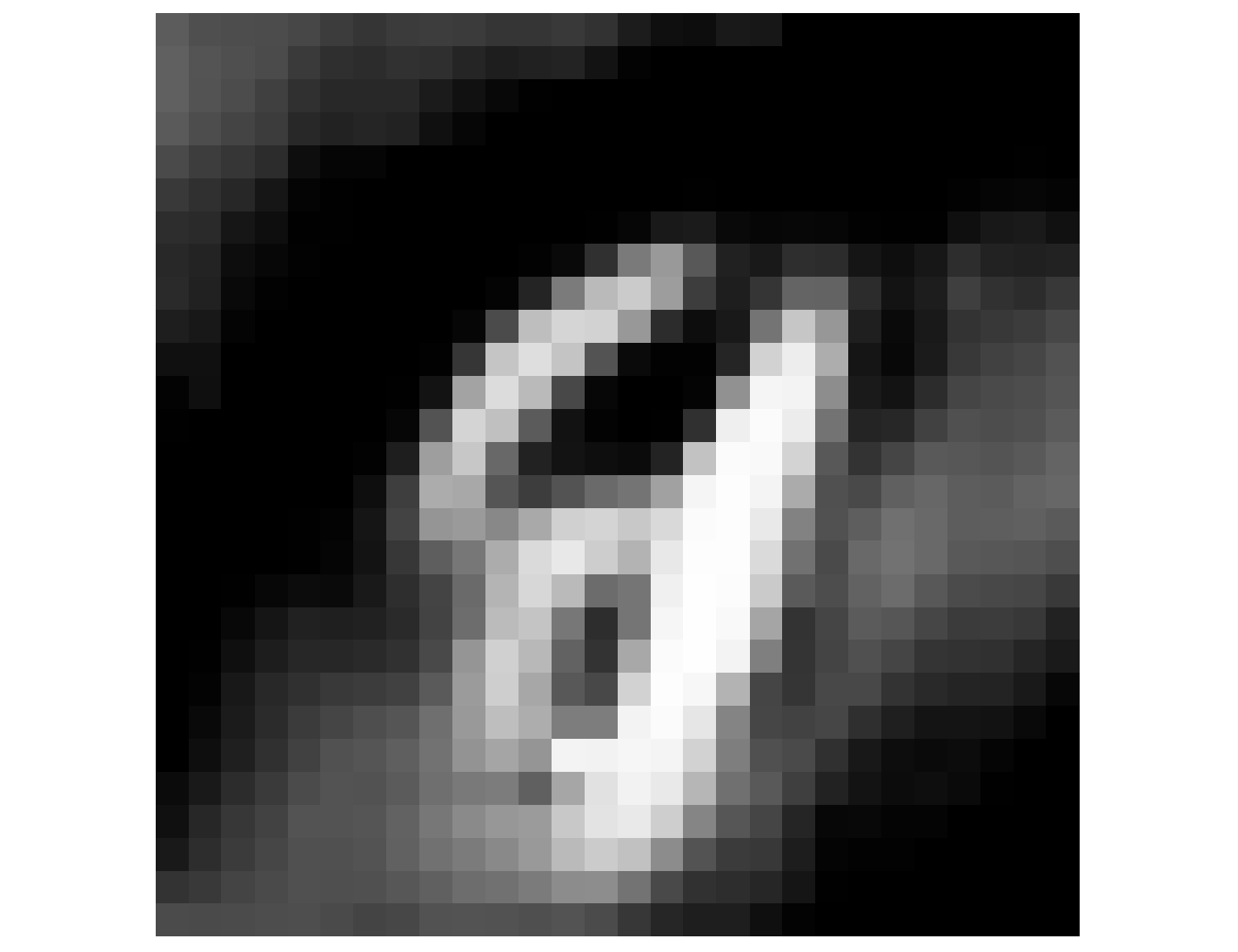}
\subcaption{$s = -1.0$}
\end{subfigure}
\hfill
\begin{subfigure}{0.22\textwidth}
\includegraphics[width = \textwidth]{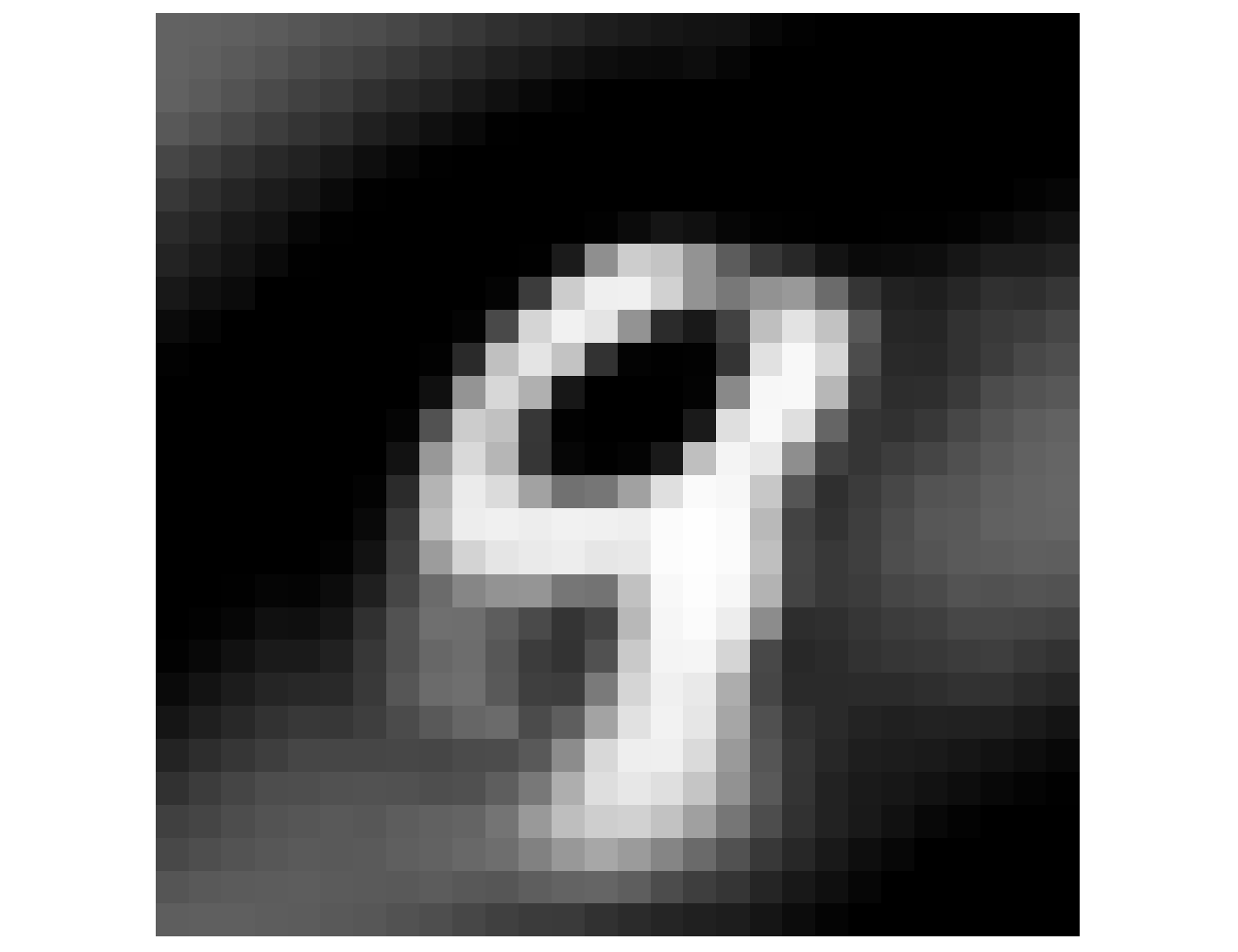}
\subcaption{$s = 0.0$}
\end{subfigure}
\hfill
\begin{subfigure}{0.22\textwidth}
\includegraphics[width = \textwidth]{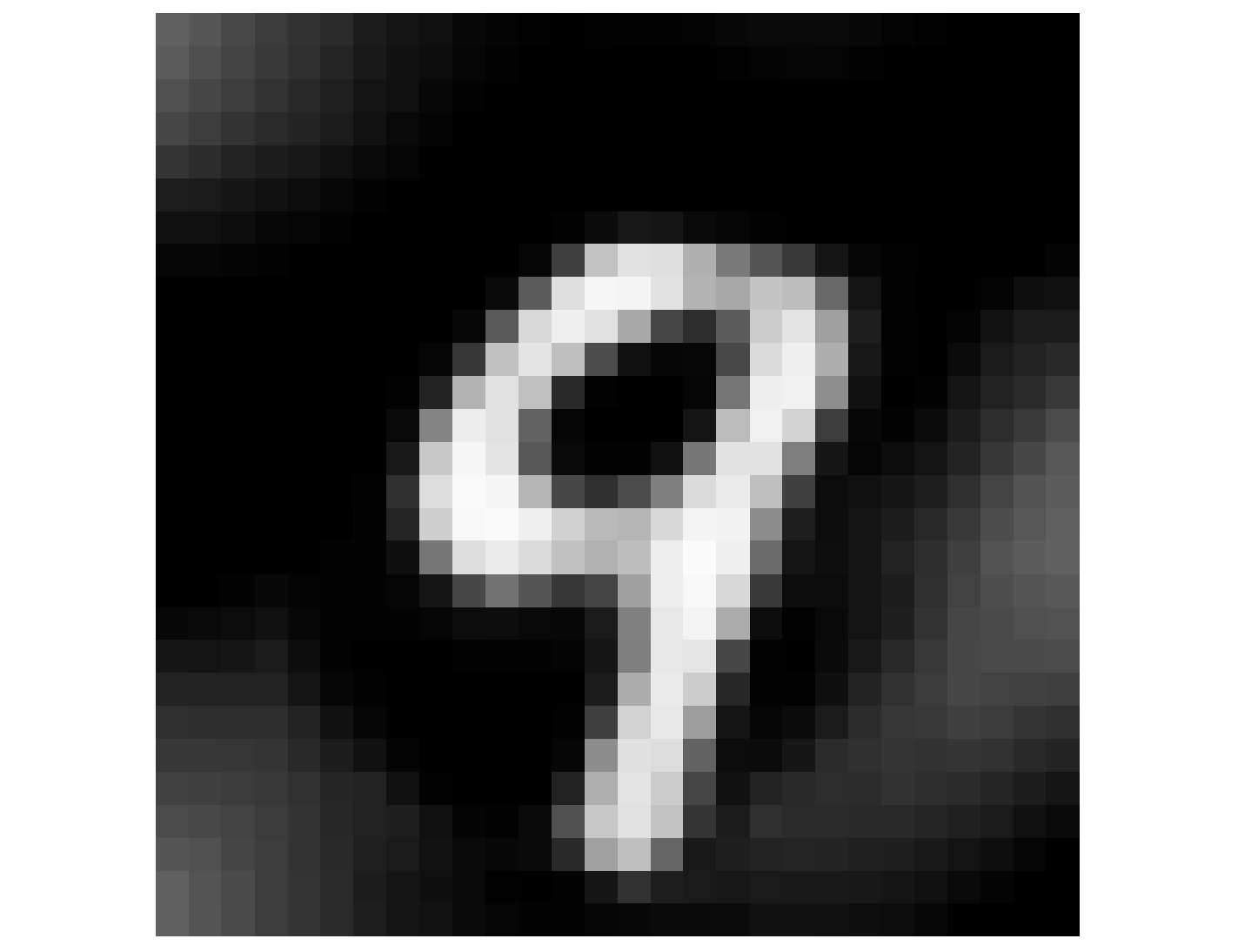}
\subcaption{$s = 1.0$}
\end{subfigure}
\hfill
\begin{subfigure}{0.22\textwidth}
\includegraphics[width = \textwidth]{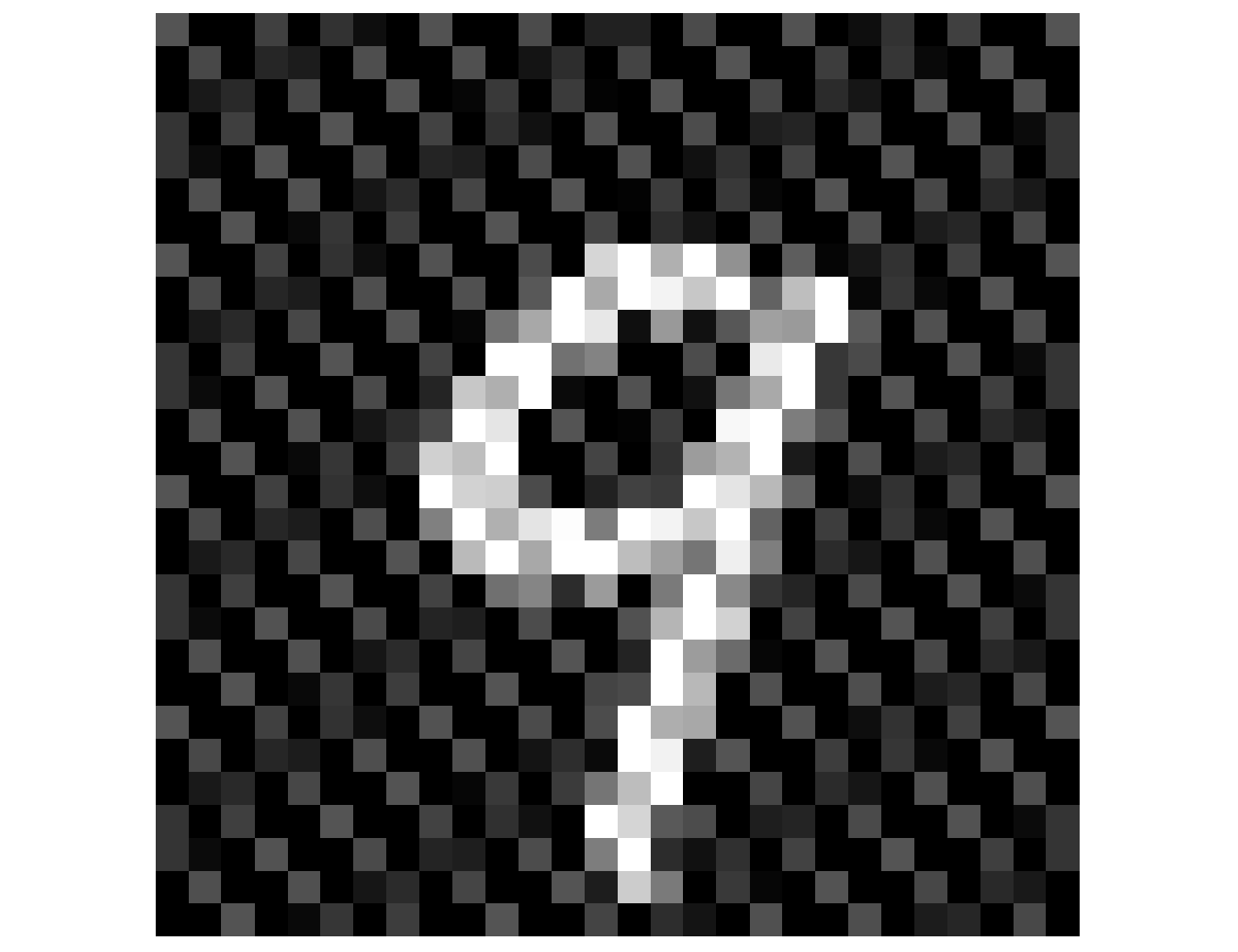}
\subcaption{blurred image}
\end{subfigure}
\hfill
\begin{subfigure}{0.22\textwidth}
\includegraphics[width = \textwidth]{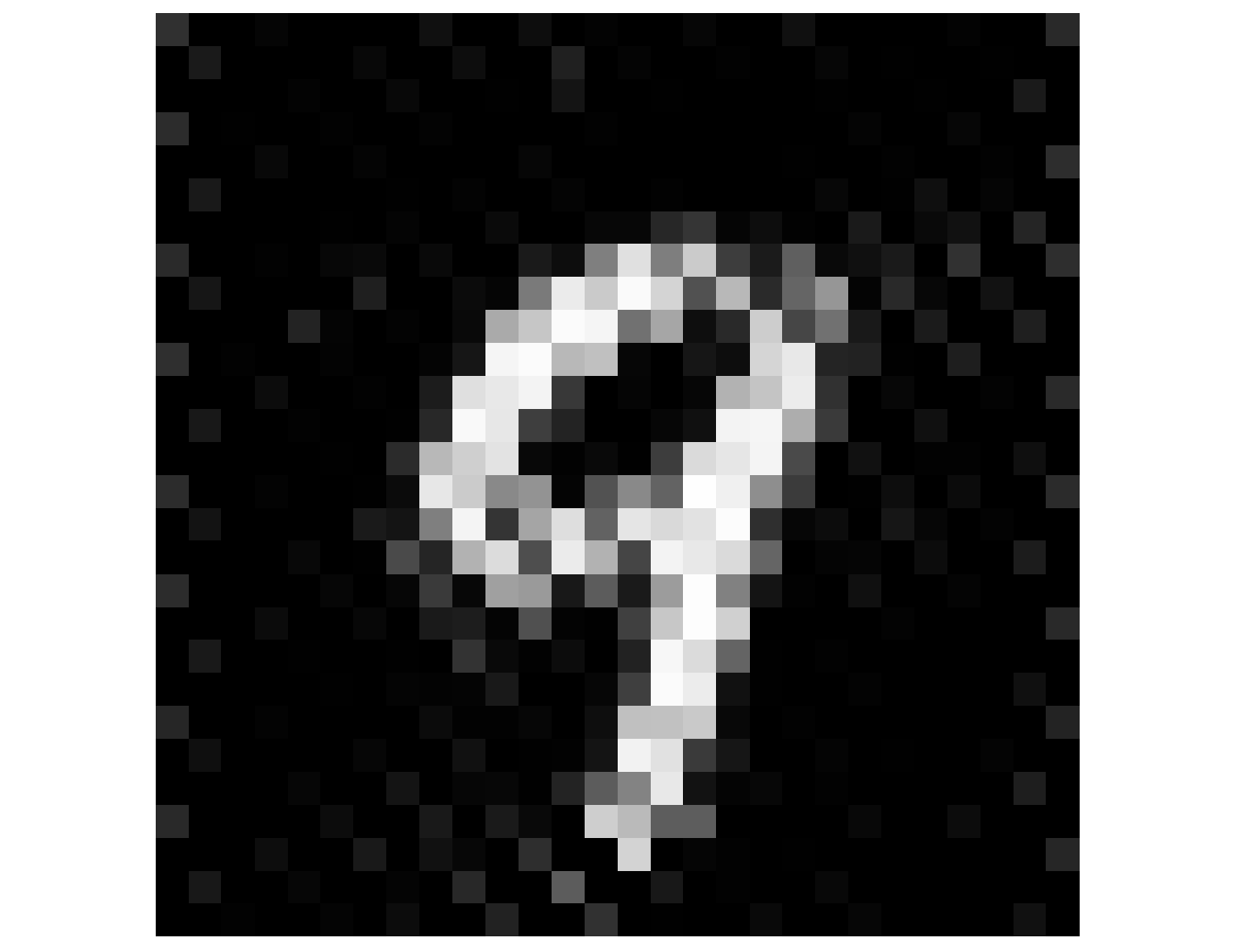}
\subcaption{$s = -1.0$}
\end{subfigure}
\hfill
\begin{subfigure}{0.22\textwidth}
\includegraphics[width = \textwidth]{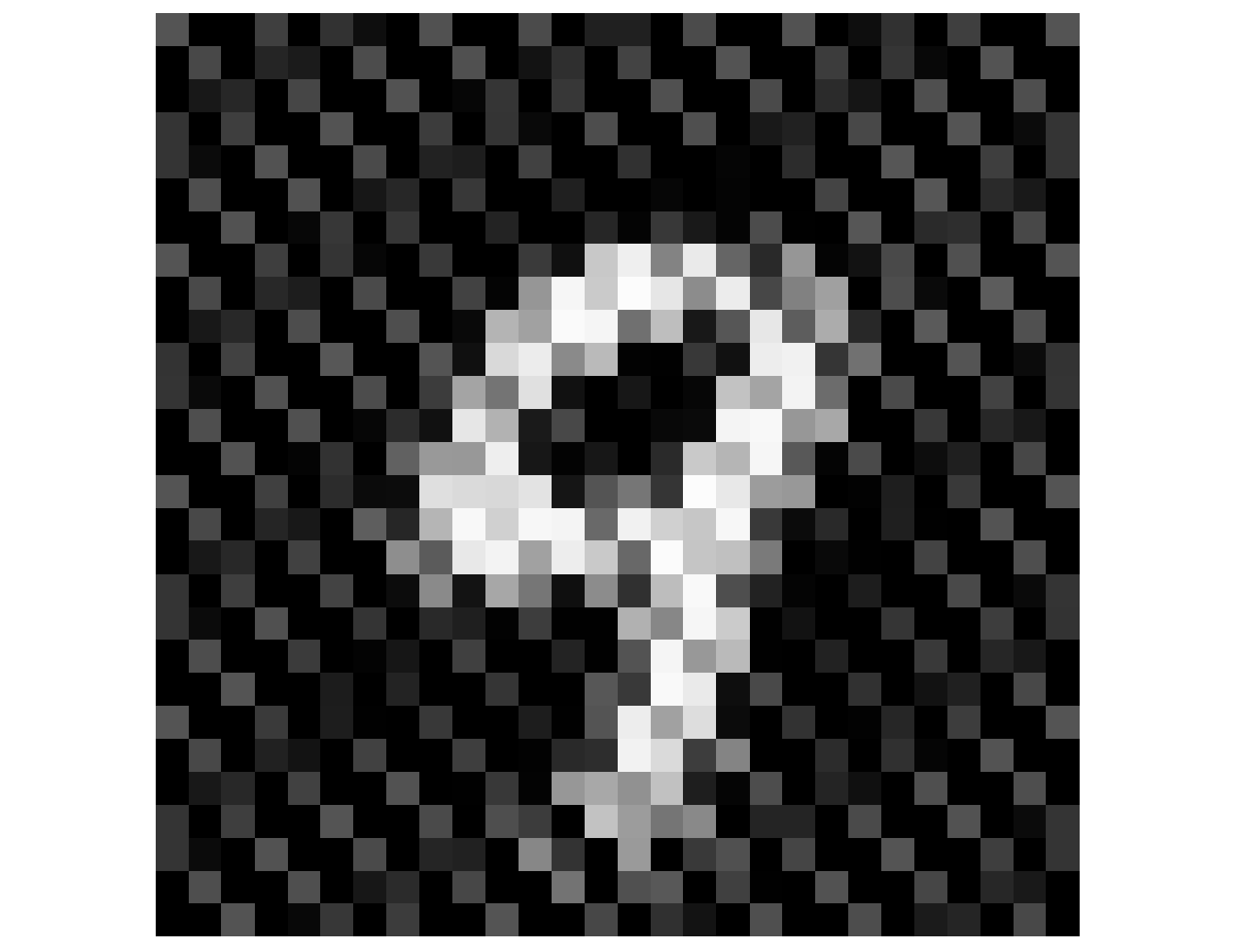}
\subcaption{$s = 0.0$}
\end{subfigure}
\hfill
\begin{subfigure}{0.22\textwidth}
\includegraphics[width = \textwidth]{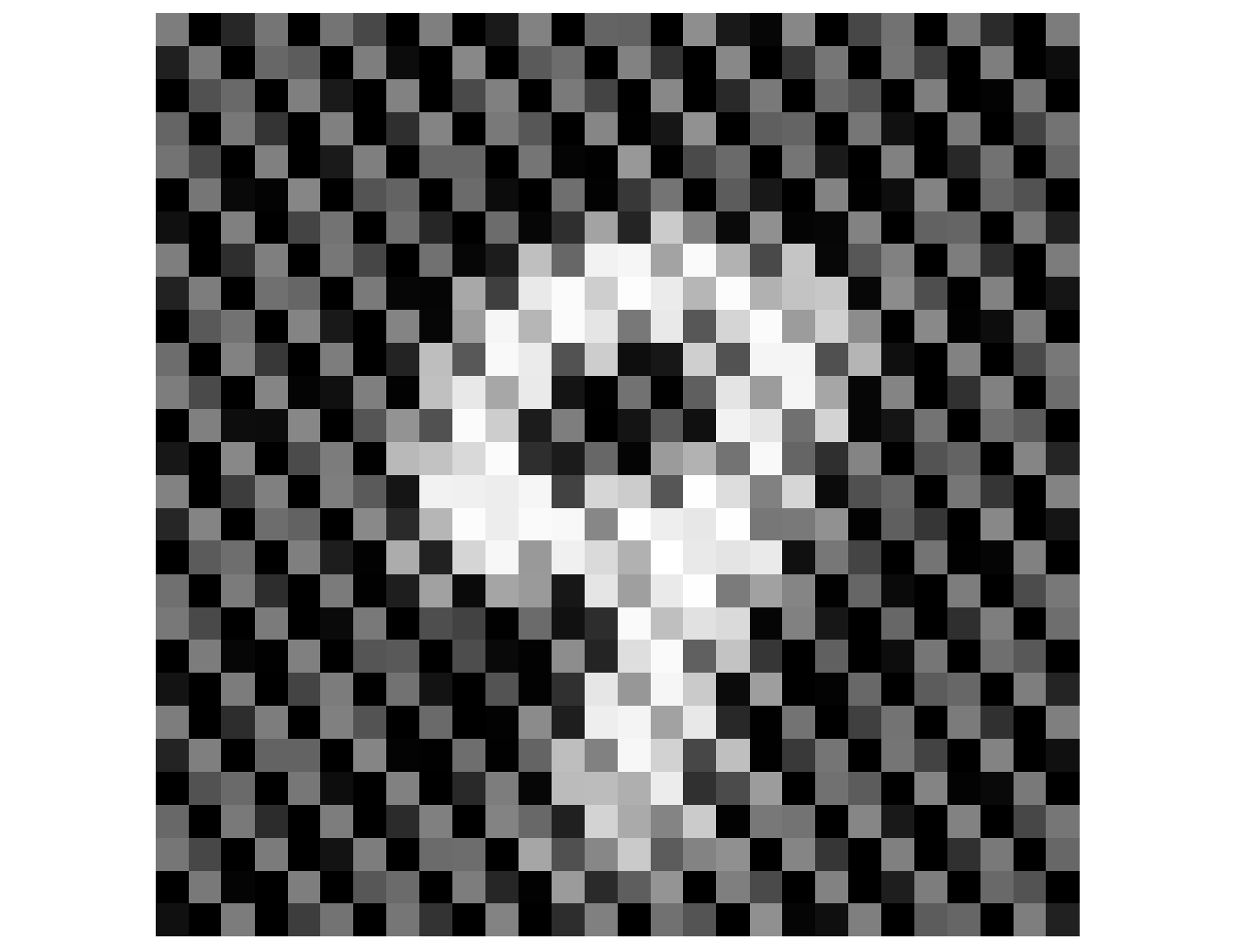}
\subcaption{$s = 1.0$}
\end{subfigure}
\caption{The output of a squared $H^s$ loss-trained autoencoder on a typical test image when the input images are contaminated by low-frequency noise (top row) and high-frequency noise (bottom row).}
\label{fig:autoencoder}
\end{figure}

\subsection{Autoencoder on the MNIST dataset}\label{sec:test3}
The idea of Sobolev training is also useful for high-dimensional training data. Here, we present the results of the autoencoder for image denoising using the MNIST dataset~\cite{MNIST}. In Figure~\ref{fig:autoencoder}, the outputs of the autoencoder are presented when trained with the squared $H^s$ norm as the loss function. We contaminate the dataset with random low-frequency noise (top row) and high-frequency noise (bottom row). When high-frequency noise is present, the $H^{s}$ loss function generally performs better with $s < 0$, while the case of $s > 0$ helps image deblurring when the input image suffers from low-frequency noise. This corroborates our discussion in~\Cref{sec:sobolev}.


\bibliography{references}
\bibliographystyle{iclr2023_conference}

\newpage 
\setcounter{page}{0}
\pagenumbering{arabic}
\setcounter{page}{1}
\appendix

\section*{Supplementary Material}
This is the  supplementary material for the paper titled ``Tuning Frequency Bias in Neural Network Training with Nonuniform Data.'' 

The supplementary material is organized as follows. In~\Cref{sec:prelim_supp}, we recall the notations used in the paper and introduce additional concepts for our analysis. In~\Cref{sec:HisSPD}, we prove that $\mathbf{H}^\infty$ is a symmetric and positive definite matrix. In~\Cref{sec:genNTK}, we prove~\Cref{thm.decoupledmain} of the paper, while the proofs of results in~\Cref{sec:L2} and~\Cref{sec:sobolev} are given in~\Cref{sec:L2_proof} and~\Cref{sec:sobNN}, respectively. In~\Cref{sec:experiment_sup}, we provide the further details of our three experiments. \revise{In~\Cref{sec:computeweight}, we briefly discuss the computation of positive quadrature weights.}

\section{Preliminaries and notation}\label{sec:prelim_supp}
For $d>1$, let $g: \sS^{d-1} \rightarrow \R$ be a square-integrable function defined on $\sS^{d-1}$. The function $g$ has a spherical harmonic expansion given in~\Cref{section.prelim}. We denote the space of harmonic functions of degree $\ell$ as  $\mathcal{H}^d_\ell$, which is the span of $\{Y_{\ell,p}\}_{p=1}^{N(d,\ell)}$. We further denote the space of spherical harmonics of degree $\leq \ell$ as $\Pi_\ell^d = \bigoplus_{j=0}^\ell \mathcal{H}_j^d$.

Given distinct training data $\{\mathbf{x}_i\}_{i=1}^n$ from $\sS^{d-1}$ and evaluations $y_i = g(\mathbf{x}_i)$ for $1\leq i\leq n$, our goal is to understand the intrinsic frequency-biasing behavior of training a 2-layer ReLU NN given in~\cref{eq:NN}. It is important for the theory that we initialize the weights as independently and identically distributed (iid) Gaussian random variables with a covariance matrix $\kappa^2\mathbf{I}$, the bias terms are initialized to zero, and the coefficients, i.e., $a_1,\ldots,a_m$, are initialized iid as $+1$ with probability $1/2$ and $-1$ otherwise. During the training process, the values of $\{a_r\}$ are not updated. 

We train with the loss function given in~\cref{eq:GeneralLossFunction} so that the gradient descent algorithm for NN training is given by~\cref{eq.gradientdescent}. An important object in understanding the frequency biasing of NN training is the symmetric and positive definite matrix $\mathbf{H}^\infty \in \R^{n \times n}$ in~\cref{eq.Hinf}. Since $\mathbf{H}^\infty$ are symmetric positive definite matrices (see~\Cref{thm.HisSPD}), $\mathbf{H}^\infty \mathbf{P}$ has positive real eigenvalues. To see this, note that $\mathbf{H}^\infty \mathbf P = \mathbf H^\infty \mathbf P^{1/2} \mathbf P^{1/2} = \mathbf P^{-1/2} (\mathbf P^{1/2} \mathbf H^\infty \mathbf P^{1/2}) \mathbf P^{1/2}$. This means that $\mathbf H^\infty \mathbf P$ and $\mathbf P^{1/2} \mathbf H^\infty \mathbf P^{1/2}$ are similar. Since the matrix $\mathbf P^{1/2} \mathbf H^\infty \mathbf P^{1/2}$ is symmetric positive definite, $\mathbf H^\infty \mathbf P$ has positive eigenvalues. We denote the eigenvalues of $\mathbf H^\infty \mathbf P$ by $\lambda_{n-1} \geq \cdots \geq \lambda_0 > 0$, which partially govern the frequency biasing phenomena. 

It is convenient to analyze the eigenvalues and eigenvectors of $\mathbf H^\infty \mathbf P$ via the zonal kernel ${K}^\infty: \sS^{d-1} \times \sS^{d-1} \rightarrow \R$ given in~\cref{eq:Keigenvalues}. The key is the Funk--Hecke formula~\citep{seeley1966spherical}.
\begin{thm}[Funk--Hecke]\label{thm.funkhecke}
Suppose $K: [-1,1] \rightarrow \R$ is measurable and $K(t) (1-t^2)^{(d-3)/2}$ is integrable on $[-1,1]$. Then, for any $h \in \mathcal{H}_\ell^d$, we have
\begin{equation}
    \int_{\sS^{d-1}} K(\langle \boldsymbol{\xi}, \boldsymbol{\zeta} \rangle) h(\boldsymbol{\xi}) d\boldsymbol{\xi} = \left(A_d \int_{-1}^1 K(t) P_{\ell,d}(t) (1-t^2)^{(d-3)/2} dt\right) h(\boldsymbol{\zeta}), \qquad \boldsymbol{\zeta} \in \sS^{d-1},
\end{equation}
where $P_{\ell,d}$ is the ultraspherical polynomial given by
\begin{equation}
    P_{\ell,d}(t) = \frac{(-1)^\ell \Gamma((d-1)/2)}{2^\ell \Gamma(\ell + (d-1)/2) (1-t^2)^{(d-3)/2}} \frac{d^\ell}{dt^\ell} (1-t^2)^{\ell + (d-3)/2}.
\end{equation}
\end{thm}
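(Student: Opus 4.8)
The plan is to establish Funk--Hecke in two stages: a ``soft'' argument via Schur's lemma showing that the integral operator acts as a scalar on each degree-$\ell$ harmonic subspace, followed by a ``hard'' computation that pins down the scalar by testing the identity on a zonal harmonic.

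Write $(T_K h)(\boldsymbol{\zeta}) = \int_{\sS^{d-1}} K(\langle\bxi,\boldsymbol{\zeta}\rangle)\,h(\bxi)\,d\bxi$. First I would record the ``latitude slicing'' of the surface measure about a fixed pole $\boldsymbol{\zeta}$: writing $\bxi = t\boldsymbol{\zeta} + \sqrt{1-t^2}\,\boldsymbol{\eta}$ with $t\in[-1,1]$ and $\boldsymbol{\eta}\in\sS^{d-2}\subset\boldsymbol{\zeta}^\perp$, the coarea formula gives $d\bxi = (1-t^2)^{(d-3)/2}\,dt\,d\sigma_{d-2}(\boldsymbol{\eta})$. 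Since $K(t)(1-t^2)^{(d-3)/2}\in L^1[-1,1]$, this shows $K(\langle\cdot,\boldsymbol{\zeta}\rangle)\in L^1(\sS^{d-1})$, and since $h\in\mathcal{H}_\ell^d$ is a polynomial (hence bounded and continuous on the compact sphere), $T_K h$ is a well-defined continuous, hence $L^2$, function. The key structural step is that $T_K$ is $SO(d)$-equivariant: the substitution $\bxi\mapsto R^{-1}\bxi$ together with $\langle R^{-1}\bxi,\boldsymbol{\zeta}\rangle = \langle\bxi,R\boldsymbol{\zeta}\rangle$ and rotation-invariance of $d\bxi$ gives $T_K(h\circ R) = (T_K h)\circ R$ for all $R\in SO(d)$. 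Letting $Q_\ell:L^2(\sS^{d-1})\to\mathcal{H}_\ell^d$ denote the orthogonal projections, which are also $SO(d)$-equivariant, the composite $Q_m T_K|_{\mathcal{H}_\ell^d}:\mathcal{H}_\ell^d\to\mathcal{H}_m^d$ intertwines the two representations; since the $\{\mathcal{H}_\ell^d\}$ are irreducible and pairwise inequivalent as $SO(d)$-representations (after complexification if necessary), Schur's lemma forces $Q_m T_K|_{\mathcal{H}_\ell^d} = 0$ for $m\neq\ell$ and $Q_\ell T_K|_{\mathcal{H}_\ell^d} = \mu_\ell\,\mathrm{Id}$ for some scalar $\mu_\ell$. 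Expanding $T_K h = \sum_m Q_m(T_K h)$ in $L^2$ then yields $T_K h = \mu_\ell h$ for every $h\in\mathcal{H}_\ell^d$, and by continuity the identity holds pointwise.

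It remains to compute $\mu_\ell$, for which the second ingredient is the classical fact that the Rodrigues-formula polynomial $P_{\ell,d}$ in the statement is precisely the degree-$\ell$ zonal harmonic: $\bxi\mapsto P_{\ell,d}(\langle\bxi,\boldsymbol{\zeta}\rangle)$ is a nonzero element of $\mathcal{H}_\ell^d$ and $P_{\ell,d}(1) = 1$. Granting this, I would apply $T_K h = \mu_\ell h$ to $h(\bxi) = P_{\ell,d}(\langle\bxi,\boldsymbol{\zeta}\rangle)$ and evaluate at $\bxi = \boldsymbol{\zeta}$. Via the latitude slicing, the left side becomes $\bigl(\int_{\sS^{d-2}}d\sigma_{d-2}\bigr)\int_{-1}^1 K(t)P_{\ell,d}(t)(1-t^2)^{(d-3)/2}\,dt$, because the integrand is constant on each latitude sphere; the right side is $\mu_\ell P_{\ell,d}(1) = \mu_\ell$. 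Equating gives $\mu_\ell = |\sS^{d-2}|\int_{-1}^1 K(t)P_{\ell,d}(t)(1-t^2)^{(d-3)/2}\,dt$, which is the asserted formula once the leading constant (the surface measure of the equatorial $\sS^{d-2}$) is reconciled with the normalization in the statement.

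The main obstacle is the two classical inputs invoked rather than derived above: (i) irreducibility and pairwise non-isomorphism of the $\mathcal{H}_\ell^d$ under $SO(d)$, which drives the Schur argument, and (ii) the identification of the Rodrigues polynomial $P_{\ell,d}$ with the zonal harmonic. Item (ii) is where a genuine computation hides: one must check that $r^\ell P_{\ell,d}(\langle\bxi,\boldsymbol{\zeta}\rangle)$ with $r=|\bxi|$ is harmonic of degree $\ell$ on $\R^d$ --- equivalently that $P_{\ell,d}$ solves the Gegenbauer differential equation --- and that the Rodrigues expression genuinely produces a degree-$\ell$ polynomial (including the even-$d$ case, where the prefactor $(1-t^2)^{-(d-3)/2}$ must cancel the fractional powers); a short expansion near $t=1$ then yields $P_{\ell,d}(1)=1$. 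By contrast, the measure-theoretic bookkeeping --- absolute convergence of the double integral, Fubini, and the $L^2$-convergence of the harmonic expansion of $T_K h$ --- is routine because $h$ is a polynomial on a compact set, and the equivariance step is a one-line change of variables.
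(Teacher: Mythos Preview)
The paper does not prove this theorem: it is stated as the classical Funk--Hecke formula with a citation to \cite{seeley1966spherical} and then immediately applied, so there is no ``paper's own proof'' to compare against. Your outline is a standard and correct route to the result --- equivariance plus Schur's lemma to get a scalar action on each $\mathcal{H}_\ell^d$, then evaluation on the zonal harmonic to identify the eigenvalue --- and you have accurately flagged the two nontrivial classical inputs (irreducibility/inequivalence of the $\mathcal{H}_\ell^d$ and the identification of the Rodrigues polynomial with the zonal harmonic). One small point worth double-checking against the paper's conventions: your computation produces $|\sS^{d-2}|$ as the constant, whereas the statement writes $A_d$, which elsewhere in the paper denotes the surface measure of $\sS^{d-1}$; either the paper is using $A_d$ loosely here or the normalization of $P_{\ell,d}$ absorbs the discrepancy, so it is worth pinning down which before declaring the constants reconciled.
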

Applying~\Cref{thm.funkhecke},
we have that 
\begin{equation*}
    \int_{\sS^{d-1}} {K}^\infty(\mathbf{x},\mathbf{y}) h(\mathbf{y}) d\sigma(\mathbf{y}) = \mu_\ell h(\mathbf{x}), \qquad h \in \mathcal{H}_\ell^d,
\end{equation*}
where $\mu_\ell > 0$, $\forall \ell$, given by~\citep{basri} 
\begin{equation*}
    \mu_\ell = 
    \begin{cases}
    \frac{1}{2} C^d_1(0) \left(\frac{1}{(d-1)2^{d}} {d-1 \choose \frac{d-1}{2}} + \frac{2^{d-2}}{(d-1)\binom{d-2}{\frac{d-1}{2}}} - \frac{1}{2} \sum_{p=0}^{\frac{d-3}{2}} (-1)^p {\frac{d-3}{2} \choose p} \frac{1}{2p + 1}\right), & \ell = 0, \\
    \frac{1}{2} C^d_1(1) \sum_{p = \ceil{\frac{\ell}{2}}}^{\ell + \frac{d-3}{2}} C^d_{2}(p,1) \left(\frac{1}{2(2p+1)} + \frac{1}{4p} \left(1 - \frac{1}{2^{2p}} {2p \choose p}\right)\right), & \ell = 1,\\
    \frac{1}{2} C^d_1(\ell) \sum_{p = \ceil{\frac{\ell}{2}}}^{\ell + \frac{d-3}{2}}\! C^d_2(p,\ell) \!\!\left(\!\frac{-1}{2(2p-\ell+1)} \!+\! \frac{1}{2(2p-\ell+2)} \left(\!1 \!-\! \frac{1}{2^{2p-\ell+2}}{2p-\ell+2 \choose \frac{2p-\ell+2}{2}}\right)\!\!\right), & \ell \geq 2 \text{ even},\\
    \frac{1}{2} C^d_1(\ell) \sum_{p = \ceil{\frac{\ell}{2}}}^{\ell + \frac{d-3}{2}} C^d_2(p,\ell) \left(\frac{1}{2(2p-\ell+1)}\left(1 - \frac{1}{2^{2p-\ell+1}}{2p-\ell+1 \choose \frac{2p-\ell+1}{2}}\right)\right), & \ell \geq 2 \text{ odd}
    \end{cases}
\end{equation*}
for $d \geq 3$, $d$ odd, where 
\begin{equation*}
    C^d_1(\ell) = \frac{(-1)^\ell 2\pi^{(d-1)/2}}{(d-1) 2^\ell \Gamma(\ell + (d-1)/2)}, \qquad C^d_2(p,\ell) = (-1)^p {\ell + \frac{d-3}{2} \choose p} \frac{(2p)!}{(2p-\ell)!}.
\end{equation*}
Here, the exclamation mark means a factorial and $\cdot \choose \cdot$ denotes the binomial coefficient. 

Given $\mathbf{x}, \mathbf{w}_r$, and $b_r$ in~\cref{eq:NN}, we write $\tilde{\mathbf{x}} = \frac{1}{\sqrt{2}}(\mathbf{x},1)\in\mathbb{S}^{d}$ and $\tilde{\mathbf{w}}_r = (\mathbf{w}_r,b_r)\in\mathbb{R}^{d+1}$. Therefore, we have ${\rm ReLU}(\mathbf{w}_r^\top \mathbf{x} + b_r) = \sqrt{2}{\rm ReLU}(\tilde{\mathbf{w}}_r^\top \tilde{\mathbf{x}})$ and the NN function can be rewritten as
\begin{equation*}
    \NN(\mathbf{x}) = \frac{\sqrt{2}}{\sqrt{m}} \sum_{r=1}^m a_r {\rm ReLU}(\tilde{\mathbf{w}}_r^\top \tilde{\mathbf{x}}).
\end{equation*}
By replacing the expectation over random initialization of $\tilde{\mathbf{w}}$ by $\tilde{\mathbf{w}}(t)$, we define the instantiations of ${\mathbf{H}}^\infty$ at the $k$th iteration by ${\mathbf{H}}(k)$, where
\begin{equation}\label{eq:Hk}
    {H}_{ij}(k) = \frac{1}{m} \tilde{\mathbf{x}}_i^\top \tilde{\mathbf{x}}_j \sum_{r=1}^m \mathbbm{1}_{\{\tilde{\mathbf{x}}_i^\top \tilde{\mathbf{w}}_r(k) \geq 0, \tilde{\mathbf{x}}_j^\top \tilde{\mathbf{w}}_r(k) \geq 0\}},
\end{equation}
where $\mathbbm{1}$ is an indicator function. 

\section{The matrix $\mathbf{\mathbf{H}^\infty}$ is symmetric and positive definite}\label{sec:HisSPD}
\Cref{thm.HisSPD} states that the matrix $\mathbf{H}^\infty$ defined by~\cref{eq.Hinf} is symmetric and positive definite. While the symmetry of $\mathbf{H}^\infty$ is immediate from its closed-form expression, the fact that it is positive definite requires a more detailed analysis. The proof idea is similar to that of Theorem~3.1 of~\citep{du}, in which the matrix $\mathbf{H}^\infty$ is associated with a 2-layer ReLU NN without biases. However, our $\mathbf{H}^\infty$ is associated with a 2-layer ReLU NN with biases. While~\citep{du} requires no two training data points are parallel, we allow the existence of $\mathbf{x}_{i_1} = -\mathbf{x}_{i_2}$ for some $i_1$ and $i_2$. Hence, our proof employs on a pair of nodes denoted by $\mathbf{x}_{i_1}, \mathbf{x}_{i_2}$. 
\begin{proof}[Proof of~\Cref{thm.HisSPD}]
For a measurable function $f: \R^d \rightarrow \R^{d+1}$, we define a norm of $f$ as 
\begin{equation*}
    \norm{f}_\mathcal{H}^2 = \E_{\mathbf{w} \sim \mathcal{N}(\mathbf{0}, \kappa^2 \mathbf{I})} \norm{f(\mathbf{w})}_2^2,
\end{equation*}
and let $\mathcal{H}$ be the space of measurable functions such that $\norm{f}_\mathcal{H} < \infty$. It can be shown that $\mathcal{H}$ is a Hilbert space with respect to the inner product $\langle f, g \rangle_\mathcal{H} = \E_{\mathbf{w} \sim \mathcal{N}(\mathbf{0}, \kappa^2 \mathbf{I})} \left[f(\mathbf{w})^\top g(\mathbf{w})\right]$. For each $\mathbf{x}_i$, $1 \leq i \leq n$, we define the function $\phi_i$ by
\begin{equation*}
    \phi_i(\mathbf{w}) = \tilde{\mathbf{x}}_i \mathbbm{1}_{\{\mathbf{w}^\top \mathbf{x}_i \geq 0\}}, \qquad \mathbf{w} \in \R^d.
\end{equation*}
Then, $\phi_i \in \mathcal{H}$ for all $i$, and $H_{ij}^\infty = \langle \phi_i, \phi_j \rangle_{\mathcal{H}}$. We prove that ${\mathbf{H}}^\infty$ is positive definite by showing that $\{\phi_i\}_{i=1}^n$ is a linearly independent set in $\mathcal{H}$.

To show $\{\phi_i\}_{i=1}^n$ is a linearly independent set, we show that
\begin{equation}\label{eq.linindep}
    \alpha_1 \phi_1(\mathbf{w}) + \cdots + \alpha_n \phi_n(\mathbf{w}) = \mathbf{0} \qquad \text{for almost every $\mathbf{w} \in \R^{d}$}
\end{equation}
implies that $\alpha_i=0$ for $1\leq i\leq n$. We fix some $1 \leq i_1 \leq n$ and, without loss of generality, assume that $\mathbf{x}_{i_2} = -\mathbf{x}_{i_1}$.\footnote{Otherwise, if such $i_2$ does not exist, we can add an element $\phi_{n+1}$ associated with $-\mathbf{x}_{i_1}$ to $\{\phi_i\}_{i=1}^n$. If we can show $\{\phi_i\}_{i=1}^{n+1}$ is linearly independent, then so is $\{\phi_i\}_{i=1}^n$.} Define the set $D_j = \{\mathbf{w} \in \R^d \mid \mathbf{w}^\top \mathbf{x}_{j} = 0\}$ for $1 \leq j \leq n$. As a result, $D_{i_1} = D_{i_2}$. Since each $D_j$ is a hyperplane passing through the origin and $D_{i_1} \neq D_j$ for any $j \neq i_1, i_2$, $\exists \mathbf{z} \in D_{i_1}$ such that $\mathbf{z} \notin D_j$ for any $j \neq i_1, i_2$. For a positive radius $R > 0$, let $B_R = B(\mathbf{z},R)$ be the ball centered at $\mathbf{z}$ of radius $R$. Define a partition of $B_R$ into two sets denoted by $B_R^+$ and $B_R^-$ (possibly missing a subset of $B_R$ that has zero Lebesgue measure), where
\begin{align*}
    B_R^+ &= \{\mathbf{w} \in B_R \mid \mathbf{w}^\top \mathbf{x}_{i_1} > 0\} = \{\mathbf{w} \in B_R \mid \mathbf{w}^\top \mathbf{x}_{i_2} < 0\}, \\
    B_R^- &= \{\mathbf{w} \in B_R \mid \mathbf{w}^\top \mathbf{x}_{i_1} < 0\} = \{\mathbf{w} \in B_R \mid \mathbf{w}^\top \mathbf{x}_{i_2} > 0\}.
\end{align*}
Since $D_j$ is closed for each $1 \leq j \leq n$, $B_R$ is eventually disjoint from $D_j$ as $R \rightarrow 0$. Hence, we have
\begin{equation*}
    \lim_{R\rightarrow 0} \sup_{\mathbf{w} \in B_R} \abs{\phi_j(\mathbf{z}) - \phi_j(\mathbf{w})} = 0, \qquad  j \neq i_1, i_2,
\end{equation*}
where $\abs{\cdot}$ denotes the Euclidean distance. Then, for any $j \neq i_1, i_2$, we have
\begin{equation*}
    \lim_{R\rightarrow 0}\frac{1}{|B_R^+|}\int_{B_R^+} \phi_j(\mathbf{w}) d\mathbf{w} = \phi_j(\mathbf{z}),\quad \lim_{R\rightarrow 0}\frac{1}{|B_R^-|}\int_{B_R^-} \phi_j(\mathbf{w}) d\mathbf{w} = \phi_j(\mathbf{z}).
\end{equation*}
Consequently, we find that
\begin{equation*}
    \lim_{R\rightarrow 0}\left(\frac{1}{|B_R^+|}\int_{B_R^+} \phi_j(\mathbf{w}) d\mathbf{w} - \frac{1}{|B_R^-|}\int_{B_R^-} \phi_j(\mathbf{w}) d\mathbf{w}\right) = \mathbf{0}, \qquad j\neq i_1, i_2.
\end{equation*}
Now, consider the integral of $\phi_{i_1}$ and $\phi_{i_2}$. We have
\begin{align*}
    &\!\!\lim_{R\rightarrow 0}\!\!\left(\frac{1}{|B_R^+|}\!\!\int_{B_R^+}\!\! \phi_{i_1}(\mathbf{w}) d\mathbf{w}\!-\! \frac{1}{|B_R^-|}\!\!\int_{B_R^-}\!\! \phi_{i_1}(\mathbf{w}) d\mathbf{w}\!\!\right) \!\!=\! \lim_{R\rightarrow 0}\!\!\left(\frac{1}{|B_R^+|}\!\!\int_{B_R^+}\!\! \tilde{\mathbf{x}}_{i_1} d\mathbf{w} \!-\! \frac{1}{|B_R^-|}\int_{B_R^-} \!\!\! \mathbf{0} d\mathbf{w}\!\!\right)\!\! = \tilde{\mathbf{x}}_{i_1}, \\
    &\!\!\lim_{R\rightarrow 0}\!\!\left(\frac{1}{|B_R^+|}\!\!\int_{B_R^+}\!\! \phi_{i_2}(\mathbf{w}) d\mathbf{w} \!-\! \frac{1}{|B_R^-|}\!\!\int_{B_R^-} \!\!\phi_{i_2}(\mathbf{w}) d\mathbf{w}\!\!\right)\!\! =\! \lim_{R\rightarrow 0}\!\!\left(\frac{1}{|B_R^+|}\!\!\int_{B_R^+} \!\!\!\mathbf{0} d\mathbf{w} \!-\! \frac{1}{|B_R^-|}\!\!\int_{B_R^-}\!\! \tilde{\mathbf{x}}_{i_2} d\mathbf{w}\!\!\right)\!\! = -\tilde{\mathbf{x}}_{i_2}.
\end{align*}
By applying these limiting expressions to $\sum_{j=1}^n \alpha_j\phi_{j}(\mathbf{w}) = \mathbf{0}$, we find that $\alpha_{i_1} \tilde{\mathbf{x}}_{i_1} - \alpha_{i_2} \tilde{\mathbf{x}}_{i_2} = \mathbf{0}$. 
Since the last entries of both $\tilde{\mathbf{x}}_{i_1}$ and $\tilde{\mathbf{x}}_{i_2}$ are $1/\sqrt{2}$, we have $\alpha_{i_1} =\alpha_{i_2}$. 
Thus, we have $\alpha_{i_1} = 0$ because $\mathbf{x}_{i_1} \neq \mathbf{0}$. Since $i_1$ is arbitrary, the statement of the proposition follows.  
\end{proof}

\revise{It is clear that the proof of~\Cref{thm.HisSPD} is also true if we assume that each entry of $\mathbf{w}_r$ is initialized from an iid sub-Gaussian distribution with zero mean and whose support is the entire $\R$, and we update the definition of $\mathbf{H}^\infty$ according to~\cref{eq.Hinf}.}

\section{The convergence of neural network training}\label{sec:genNTK}
In this section, we develop the theory for learning a NN with a general loss function $\Phi_{\mathbf{P}}$ defined by a positive definite matrix $\mathbf{P}$ in~\cref{eq:GeneralLossFunction}. In particular, we prove~\Cref{thm.decoupledmain}, which states that provided the learning rate is sufficiently small, the weights are initialized without too much variance, and the NN is sufficiently wide, then the residual in the first few epochs can be described with the matrix $\mathbf{H}^\infty\mathbf{P}$.

While our proof is similar to that of~\citep{suyang}, the argument is distinct in three essential ways: (1) Our proof applies to any loss function defined by a positive definite matrix $\mathbf{P}$, which requires us to use a different Hilbert space $(\R^n, \pairp{\cdot}{\cdot})$. (2) While the result in~\citep{suyang} bounds the residual using the minimum eigenvalue of $\mathbf{H}^\infty$, we estimate the residual as a matrix-vector product of $(\mathbf{I} - 2\eta \mathbf{H}^\infty \mathbf{P})^k \mathbf{y}$, which allows us to analyze the training error using all eigenvalues of $\mathbf{H}^\infty \mathbf{P}$. (3) We use a different NN function that incorporates the bias terms and we do not assume that we initialize the weights in a way that makes $\mathcal{N}_0 = 0$.

Before we prove the theorem, we define some useful quantities. Let $\mathcal{A}$ be the set of indices such that the coefficients $a_r$ are initialized to $1$ and let $\mathcal{B}$ be the set initialized to $-1$. We then decompose $\mathbf{H}(k)$ into two parts, where $\mathbf{H}(k)$ is defined in~\cref{eq:Hk}, so that $\mathbf{H}(k) = \mathbf{H}^+(k) + \mathbf{H}^-(k)$ with
\begin{align*}
    H^+_{ij}(k) = \frac{1}{m}\tilde{\mathbf{x}}_i^\top \tilde{\mathbf{x}}_j \sum_{r \in \mathcal{A}} \mathbbm{1}_{\left\{\substack{\tilde{\mathbf{w}}_r(k)^\top \tilde{\mathbf{x}}_i \geq 0\\ \tilde{\mathbf{w}}_r(k)^\top \tilde{\mathbf{x}}_j \geq 0}\right\}}, \qquad 
    H^-_{ij}(k) = \frac{1}{m}\tilde{\mathbf{x}}_i^\top \tilde{\mathbf{x}}_j \sum_{r \in \mathcal{B}} \mathbbm{1}_{\left\{\substack{\tilde{\mathbf{w}}_r(k)^\top \tilde{\mathbf{x}}_i \geq 0\\ \tilde{\mathbf{w}}_r(k)^\top \tilde{\mathbf{x}}_j \geq 0}\right\}}.
\end{align*}
Similarly, we define two other matrices $\tilde {\mathbf{H}}^+(k)$ and $\tilde {\mathbf{H}}^-(k)$ as
\begin{align*}
    \tilde{H}^+_{ij}(k) = \frac{1}{m}\tilde{\mathbf{x}}_i^\top \tilde{\mathbf{x}}_j \sum_{r \in \mathcal{A}} \mathbbm{1}_{\left\{\substack{\tilde{\mathbf{w}}_r(k+1)^\top \tilde{\mathbf{x}}_i \geq 0\\ \tilde{\mathbf{w}}_r(k)^\top \tilde{\mathbf{x}}_j \geq 0}\right\}}, \quad 
    \tilde{H}^-_{ij}(k) = \frac{1}{m}\tilde{\mathbf{x}}_i^\top \tilde{\mathbf{x}}_j \sum_{r \in \mathcal{B}} \mathbbm{1}_{\left\{\substack{\tilde{\mathbf{w}}_r(k+1)^\top \tilde{\mathbf{x}}_i \geq 0\\ \tilde{\mathbf{w}}_r(k)^\top \tilde{\mathbf{x}}_j \geq 0}\right\}}.
\end{align*}
Unfortunately, $\tilde{\mathbf{H}}^+(k)$ and $\tilde{\mathbf{H}}^-(k)$ are not necessarily symmetric and they differ from ${\mathbf{H}}^+(k)$ and ${\mathbf{H}}^-(k)$ up to sign flips. To simplify the notation later, we also define two auxiliary matrices $\mathbf{L}(k)$ and $\mathbf{M}(k)$ as
\begin{align*}
    \mathbf{L}(k) = \tilde{\mathbf{H}}^+(k) - \mathbf{H}^+(k), \qquad \mathbf{M}(k) = \tilde{\mathbf{H}}^-(k) - \mathbf{H}^-(k).
\end{align*}
We now prove that $\mathbf{I} - 2\eta\mathbf{H}(k)\mathbf{P}$ is close to the transition matrix for the residual, up to sign flips, i.e., $\mathbf{y} - \mathbf{u}(k+1) \approx (\mathbf{I} - 2\eta\mathbf{H}(k)\mathbf{P})(\mathbf{y} - \mathbf{u}(k))$. 

\begin{lem}\label{lem.updatesignflip}
Let $\mathbf{z}(k) = \mathbf{y} - \mathbf{u}(k)$ be the residual after the $k$th iteration. For any $k \geq 0$ and $\eta > 0$, we have
\begin{multline*}
\left(\mathbf{I} - 2\eta\left(\tilde{\mathbf{H}}^+(k) + \mathbf{H}^-(k)\right)\mathbf{P}\right)\mathbf{z}(k) \leq 
    \mathbf{z}(k+1) \leq \left(\mathbf{I} - 2\eta\left({\mathbf{H}}^+(k) + \tilde{\mathbf{H}}^-(k)\right)\mathbf{P}\right)\mathbf{z}(k),
\end{multline*}
where the inequalities are entry-wise.
\end{lem}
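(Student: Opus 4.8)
The plan is to track the residual $\mathbf{z}(k) = \mathbf{y} - \mathbf{u}(k)$ through one gradient-descent step and show that the update map is sandwiched, entry-wise, between the two advertised linear maps. First I would write out the gradient-descent update for a single neuron. Since $\NN_k(\mathbf{x}_i) = \frac{1}{\sqrt m}\sum_r a_r\relu(\tilde{\mathbf w}_r(k)^\top \tilde{\mathbf x}_i)$ (using the lifted coordinates from the preliminary section) and only the $\tilde{\mathbf w}_r$'s are updated, the change in the $i$th network output is
\[
u_i(k+1) - u_i(k) = \frac{1}{\sqrt m}\sum_{r=1}^m a_r\Big(\relu(\tilde{\mathbf w}_r(k+1)^\top \tilde{\mathbf x}_i) - \relu(\tilde{\mathbf w}_r(k)^\top \tilde{\mathbf x}_i)\Big).
\]
Using $\tilde{\mathbf w}_r(k+1) - \tilde{\mathbf w}_r(k) = -\eta\,\partial\Phi_{\mathbf P}/\partial\tilde{\mathbf w}_r$ and computing that gradient (it equals $-\frac{1}{\sqrt m}a_r\,\tilde{\mathbf x}_{(\cdot)}\mathbbm 1_{\{\tilde{\mathbf w}_r(k)^\top\tilde{\mathbf x}_{(\cdot)}\ge 0\}}$ contracted against $2\mathbf P\mathbf z(k)$, summed over the data index), one gets that the displacement of each weight is a known multiple of $\mathbf P\mathbf z(k)$. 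The key elementary inequality is that for ReLU, $\relu(b) - \relu(a)$ lies between $\mathbbm 1_{\{a\ge 0\}}(b-a)$ and $\mathbbm 1_{\{b\ge 0\}}(b-a)$ — i.e., the secant slope of ReLU is $0$ or $1$ and is bracketed by the "old" and "new" subgradients. This is where the matrices $\tilde{\mathbf H}^{\pm}(k)$ (which use the $k{+}1$ indicator on one leg and the $k$ indicator on the other) enter.

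Next I would substitute the weight displacement into the ReLU-difference bracket. For a neuron with $a_r = +1$ (index in $\mathcal A$), the term $a_r(\relu(\cdot_{k+1}) - \relu(\cdot_k))$ contributes a nonnegative multiple of $\tilde{\mathbf x}_i^\top(\text{displacement})$, and bracketing the secant slope between the two indicators turns the sum over $r\in\mathcal A$ into an expression governed by $\mathbf H^+(k)$ on one side and $\tilde{\mathbf H}^+(k)$ on the other; for $a_r = -1$ (index in $\mathcal B$) the sign of $a_r^2 = 1$ keeps the coefficient positive but the extra $a_r$ on the "linear" leg has already been squared away, so the roles of $\mathbf H^-$ and $\tilde{\mathbf H}^-$ get swapped relative to the $\mathcal A$ case — which is exactly why the lemma pairs $\tilde{\mathbf H}^+(k) + \mathbf H^-(k)$ on one end and $\mathbf H^+(k) + \tilde{\mathbf H}^-(k)$ on the other. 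Assembling the $n$ output coordinates, $\mathbf u(k+1) - \mathbf u(k)$ is entry-wise between $2\eta(\mathbf H^+(k)+\tilde{\mathbf H}^-(k))\mathbf P\mathbf z(k)$ and $2\eta(\tilde{\mathbf H}^+(k)+\mathbf H^-(k))\mathbf P\mathbf z(k)$, and since $\mathbf z(k+1) = \mathbf z(k) - (\mathbf u(k+1)-\mathbf u(k))$, subtracting from $\mathbf z(k)$ flips the inequality direction and yields the stated double inequality.

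The main obstacle I anticipate is bookkeeping the signs correctly: the $a_r\in\{\pm 1\}$ factors appear once (linearly) in the output and once more inside the gradient, so one must be careful that each neuron contributes a genuinely nonnegative coefficient to the relevant bilinear form, and that the $\mathcal B$-neurons are the ones responsible for exchanging $\mathbf H^{\pm}$ with $\tilde{\mathbf H}^{\pm}$ in the two bounds. A secondary subtlety is that the entry-wise (rather than operator-norm) nature of the bound must be preserved: the ReLU secant-slope bracket is a scalar inequality applied coordinatewise before any summation, so I would be careful to phrase it at the level of individual $(i,r)$ pairs and only then sum over $r$, never passing through a norm. The auxiliary matrices $\mathbf L(k), \mathbf M(k)$ are not needed for this lemma itself but are presumably introduced to quantify $\tilde{\mathbf H}^{\pm}(k) - \mathbf H^{\pm}(k)$ in the subsequent perturbation argument; here I would only use them implicitly through the definitions of $\tilde{\mathbf H}^{\pm}(k)$.
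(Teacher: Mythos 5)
Your proposal is correct and follows essentially the same route as the paper: compute the one-step weight displacement from the gradient of $\Phi_{\mathbf P}$, bracket each scalar ReLU difference between $(b-a)\mathbbm{1}_{\{a\ge 0\}}$ and $(b-a)\mathbbm{1}_{\{b\ge 0\}}$ coordinatewise, track the sign of $a_r$ so that $\mathcal A$ and $\mathcal B$ neurons pick opposite ends of that bracket (with $a_r^2=1$ cancelling the sign on the linear leg), then assemble into the matrices $\mathbf H^\pm(k),\tilde{\mathbf H}^\pm(k)$ and subtract from $\mathbf z(k)$ to flip the inequality. The only slip is cosmetic: the gradient of $\Phi_{\mathbf P}$ with respect to $\mathbf u$ is $-\mathbf P\mathbf z(k)$ (no factor 2, thanks to the $\tfrac{1}{2}$ in the loss); the overall $2\eta$ in the bound arises from the two $\sqrt{2}$ factors in the lifted-coordinate network and its Jacobian, not from the loss gradient.
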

\begin{proof}
First, by the gradient descent update rule, we have
\begin{equation*}
    \tilde{\mathbf{w}}_r(k+1) - \tilde{\mathbf{w}}_r(k) = -\eta \frac{\partial \Phi_\mathbf{P}(\tilde{\mathbf{w}}_1(k), \ldots, \tilde{\mathbf{w}}_m(k))}{\partial \tilde{\mathbf{w}}_r} = -\eta \frac{\partial \mathbf{u}(k)}{\partial \tilde{\mathbf{w}}_r} \frac{\partial \Phi_\mathbf{P}(\mathbf{u})}{\partial \mathbf{u}},
\end{equation*}
where the $(d+1)\times n$ Jacobian matrix is given by
\begin{equation}\label{eq.jacobianphi}
    \frac{\partial \mathbf{u}(k)}{\partial \tilde{\mathbf{w}}_r} = \frac{\sqrt{2}a_r}{\sqrt{m}}\begin{bmatrix}\tilde{\mathbf{x}}_1 \mathbbm{1}_{\{\tilde{\mathbf{x}}_1^\top \tilde{\mathbf{w}}_r(k) \geq 0\}} & \ldots& \tilde{\mathbf{x}}_n \mathbbm{1}_{\{\tilde{\mathbf{x}}_n^\top \tilde{\mathbf{w}}_r(k) \geq 0\}},
\end{bmatrix}
\end{equation}
and the gradient of the loss function $\Phi_\mathbf{P}$ defined in~\cref{eq:GeneralLossFunction} with respect to $\mathbf{u}$ is given by
\begin{equation}\label{eq.gradu}
    \frac{\partial \Phi_\mathbf{P}(\mathbf{u})}{\partial \mathbf{u}} = -\mathbf{P}(\mathbf{y} - \mathbf{u}(k)) = -\mathbf{P} \mathbf{z}(k),
\end{equation}
which is a vector of length $n$. Hence, it follows that
\begin{equation*}
    \tilde{\mathbf{w}}_r(k+1)^\top \tilde{\mathbf{x}}_i - \tilde{\mathbf{w}}_r(k)^\top \tilde{\mathbf{x}}_i = \frac{\sqrt{2}\eta a_r}{\sqrt{m}} \sum_{p=1}^n \left(\mathbf{P}\mathbf{z}(k)\right)_p \tilde{\mathbf{x}}_i^\top \tilde{\mathbf{x}}_p \mathbbm{1}_{\{\tilde{\mathbf{x}}_p^\top \tilde{\mathbf{w}}_r(k) \geq 0\}},
\end{equation*}
where  $\left(\mathbf{P}\mathbf{z}(k)\right)_p$ denotes the $p$th element of $\mathbf{P}\mathbf{z}(k)$. Using the property of ReLU that
\begin{equation*}
    (b-a)\mathbbm{1}_{\{a > 0\}} \leq \relu(b) - \relu(a) \leq (b-a)\mathbbm{1}_{\{b > 0\}}, \qquad a, b \in \R,
\end{equation*}
we have
\begin{align*}
    &\relu\left(\tilde{\mathbf{w}}_r(k+1)^\top \tilde{\mathbf{x}}_i\right) - \relu\left(\tilde{\mathbf{w}}_r(k)^\top \tilde{\mathbf{x}}_i\right) \\
    &\qquad\qquad\leq \frac{\sqrt{2}\eta a_r}{\sqrt{m}} \sum_{p=1}^n \left(\mathbf{P} \mathbf{z}(k)\right)_p \tilde{\mathbf{x}}_i^\top \tilde{\mathbf{x}}_p \mathbbm{1}_{\{\tilde{\mathbf{x}}_p^\top \tilde{\mathbf{w}}_r(k) \geq 0\}} \mathbbm{1}_{\{\tilde{\mathbf{x}}_i^\top \tilde{\mathbf{w}}_r(k+1) \geq 0\}},
    \end{align*}
    and
    \begin{align*}
    &\relu\left(\tilde{\mathbf{w}}_r(k+1)^\top \tilde{\mathbf{x}}_i\right) - \relu\left(\tilde{\mathbf{w}}_r(k)^\top \tilde{\mathbf{x}}_i\right) \\
    &\qquad\qquad\geq \frac{\sqrt{2}\eta a_r}{\sqrt{m}} \sum_{p=1}^n \left(\mathbf{P} \mathbf{z}(k)\right)_p \tilde{\mathbf{x}}_i^\top \tilde{\mathbf{x}}_p \mathbbm{1}_{\{\tilde{\mathbf{x}}_p^\top \tilde{\mathbf{w}}_r(k) \geq 0\}} \mathbbm{1}_{\{\tilde{\mathbf{x}}_i^\top \tilde{\mathbf{w}}_r(k) \geq 0\}}.
\end{align*}
Hence, we have
\begin{align*}
    \left(\mathbf{u}(k+1)\right)_i - \left(\mathbf{u}(k)\right)_i &= \frac{\sqrt{2}}{\sqrt{m}} \sum_{r \in \mathcal{A}} \left(\relu(\tilde{\mathbf{w}}_r(k+1)^\top \tilde{\mathbf{x}}_i) - \relu(\tilde{\mathbf{w}}_r(k)^\top \tilde{\mathbf{x}}_i)\right) \\
    &\qquad - \frac{\sqrt{2}}{\sqrt{m}} \sum_{r \in \mathcal{B}} \left(\relu(\tilde{\mathbf{w}}_r(k+1)^\top \tilde{\mathbf{x}}_i) - \relu(\tilde{\mathbf{w}}_r(k)^\top \tilde{\mathbf{x}}_i)\right) \\
    &\leq \frac{2\eta}{m} \sum_{r \in \mathcal{A}} \sum_{p=1}^n \left(\mathbf{P} \mathbf{z}(k)\right)_p \tilde{\mathbf{x}}_i^\top \tilde{\mathbf{x}}_p \mathbbm{1}_{\{\tilde{\mathbf{x}}_p^\top \tilde{\mathbf{w}}_r(k) \geq 0\}} \mathbbm{1}_{\{\tilde{\mathbf{x}}_i^\top \tilde{\mathbf{w}}_r(k+1) \geq 0\}} \\
    &\qquad +  \frac{2\eta}{m} \sum_{r \in \mathcal{B}} \sum_{p=1}^n \left(\mathbf{P} \mathbf{z}(k)\right)_p \tilde{\mathbf{x}}_i^\top \tilde{\mathbf{x}}_p \mathbbm{1}_{\{\tilde{\mathbf{x}}_p^\top \tilde{\mathbf{w}}_r(k) \geq 0\}} \mathbbm{1}_{\{\tilde{\mathbf{x}}_i^\top \tilde{\mathbf{w}}_r(k) \geq 0\}} \\
    &= 2\eta \sum_{p=1}^n \left(\tilde{{H}}_{ip}^+(k) + {{H}}_{ip}^-(k)\right) \left(\mathbf{P} \mathbf{z}(k)\right)_p.
\end{align*}
This proves the first inequality. The second inequality can be shown with a similar argument.
\end{proof}

In particular, if there is no sign flip of the weights, then $\tilde{\mathbf{H}}(k) = \tilde{\mathbf{H}}^+(k) + \tilde{\mathbf{H}}^-(k) = \mathbf{H}(k)$ and the inequalities in~\Cref{lem.updatesignflip} are equalities. Next, using~\Cref{lem.updatesignflip}, we can derive an expression for $\mathbf{y} - \mathbf{u}(k)$ using $\mathbf{H}^\infty$, up to an error term.

\begin{lem}\label{lem.mainexpress}
For any $0 < \eta < 1/(2M_{\mathbf{P}}^2n)$ and any $k \geq 0$, we have that
\begin{equation} \label{eq:analytic_form}
\mathbf{y} - \mathbf{u}(k) = \left(\mathbf{I} - 2\eta \mathbf{H}^\infty\mathbf{P}\right)^k (\mathbf{y} - \mathbf{u}(0)) + \boldsymbol{\epsilon}(k),
\end{equation}
where
\begin{equation}\label{eq:epsilon_k_bound}
    \begin{aligned}
 \normp{\boldsymbol{\epsilon}(k)} &\leq 2\eta \sum_{t=0}^{k-1} \normp{(\mathbf{H}^\infty-\mathbf{H}(t))\mathbf{P}} \normp{\left(\mathbf{I} - 2\eta \mathbf{H}^\infty\mathbf{P}\right)^t (\mathbf{y} - \mathbf{u}(0))} \\
    &\qquad + 2\eta \sum_{t=0}^{k-1} \left(\normp{\mathbf{M}(t)\mathbf{P}} + \normp{\mathbf{L}(t)\mathbf{P}}\right) \normp{\mathbf{y} - \mathbf{u}(t)}. 
\end{aligned}
\end{equation}
\end{lem}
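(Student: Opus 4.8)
The plan is to iterate the one-step comparison supplied by \Cref{lem.updatesignflip} and control the drift away from the ideal linear recursion driven by $\mathbf{H}^\infty\mathbf{P}$. Write $\mathbf{z}(k)=\mathbf{y}-\mathbf{u}(k)$ and $\mathbf{G}=\mathbf{I}-2\eta\mathbf{H}^\infty\mathbf{P}$, and define $\boldsymbol{\epsilon}(k):=\mathbf{z}(k)-\mathbf{G}^k(\mathbf{y}-\mathbf{u}(0))$, so that \eqref{eq:analytic_form} is true by construction and only the bound \eqref{eq:epsilon_k_bound} needs an argument. The first step is to turn \Cref{lem.updatesignflip} into an exact one-step identity. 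Since $\tilde{\mathbf{H}}^+(k)=\mathbf{H}^+(k)+\mathbf{L}(k)$ and $\tilde{\mathbf{H}}^-(k)=\mathbf{H}^-(k)+\mathbf{M}(k)$, the two bounding vectors in \Cref{lem.updatesignflip} are $(\mathbf{I}-2\eta\mathbf{H}(k)\mathbf{P})\mathbf{z}(k)-2\eta\mathbf{L}(k)\mathbf{P}\mathbf{z}(k)$ and $(\mathbf{I}-2\eta\mathbf{H}(k)\mathbf{P})\mathbf{z}(k)-2\eta\mathbf{M}(k)\mathbf{P}\mathbf{z}(k)$, so the entrywise pinching of $\mathbf{z}(k+1)$ between them yields a diagonal matrix $\boldsymbol{\Theta}(k)$ with diagonal entries in $[0,1]$ for which $\mathbf{z}(k+1)=(\mathbf{I}-2\eta\mathbf{H}(k)\mathbf{P})\mathbf{z}(k)+\boldsymbol{\delta}(k)$ with $\boldsymbol{\delta}(k)=-2\eta\big(\boldsymbol{\Theta}(k)\mathbf{L}(k)+(\mathbf{I}-\boldsymbol{\Theta}(k))\mathbf{M}(k)\big)\mathbf{P}\mathbf{z}(k)$. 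Because multiplication by a diagonal $[0,1]$-matrix is non-expansive in $\normp{\cdot}$ — immediate when $\mathbf{P}$ is diagonal, as in the weighted $L^2$ loss, and for general positive definite $\mathbf{P}$ only up to a fixed conditioning factor that is harmless downstream since \Cref{thm.decoupledmain} already carries unspecified constants — this gives $\normp{\boldsymbol{\delta}(k)}\le 2\eta(\normp{\mathbf{L}(k)\mathbf{P}}+\normp{\mathbf{M}(k)\mathbf{P}})\normp{\mathbf{z}(k)}$.

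Next I would linearize the transition operator and unroll. Writing $\mathbf{I}-2\eta\mathbf{H}(k)\mathbf{P}=\mathbf{G}+2\eta(\mathbf{H}^\infty-\mathbf{H}(k))\mathbf{P}$, substituting $\mathbf{z}(k)=\mathbf{G}^k(\mathbf{y}-\mathbf{u}(0))+\boldsymbol{\epsilon}(k)$ into the one-step identity, and matching the $\mathbf{G}^{k+1}(\mathbf{y}-\mathbf{u}(0))$ terms on both sides produces the error recursion
\[
\boldsymbol{\epsilon}(k+1)=(\mathbf{I}-2\eta\mathbf{H}(k)\mathbf{P})\boldsymbol{\epsilon}(k)+2\eta(\mathbf{H}^\infty-\mathbf{H}(k))\mathbf{P}\,\mathbf{G}^k(\mathbf{y}-\mathbf{u}(0))+\boldsymbol{\delta}(k),\qquad\boldsymbol{\epsilon}(0)=\mathbf{0},
\]
which unrolls to $\boldsymbol{\epsilon}(k)=\sum_{t=0}^{k-1}\big(\prod_{s=t+1}^{k-1}(\mathbf{I}-2\eta\mathbf{H}(s)\mathbf{P})\big)\big(2\eta(\mathbf{H}^\infty-\mathbf{H}(t))\mathbf{P}\,\mathbf{G}^t(\mathbf{y}-\mathbf{u}(0))+\boldsymbol{\delta}(t)\big)$. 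The auxiliary estimate I would need is $\normp{\mathbf{I}-2\eta\mathbf{H}(s)\mathbf{P}}\le 1$ for every $s$: each $\mathbf{H}(s)$ (see \eqref{eq:Hk}) is a sum of Gram matrices, hence symmetric positive semidefinite, so $\mathbf{H}(s)\mathbf{P}$ is self-adjoint and positive semidefinite with respect to $\pairp{\cdot}{\cdot}$; moreover $\normp{\mathbf{H}(s)\mathbf{P}}\le M_{\mathbf{I}}\,\norm{\mathbf{H}(s)}_2\,M_{\mathbf{P}}\le M_{\mathbf{I}}M_{\mathbf{P}}n$ since $|H_{ij}(s)|\le 1$, and the hypothesis $\eta<1/(2M_{\mathbf{I}}M_{\mathbf{P}}n)$ then confines the spectrum of $\mathbf{I}-2\eta\mathbf{H}(s)\mathbf{P}$ to $(0,1]$.

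Finally I would take $\normp{\cdot}$ in the unrolled expression, bound each transition product by $1$ using the previous estimate, insert $\normp{\boldsymbol{\delta}(t)}\le 2\eta(\normp{\mathbf{L}(t)\mathbf{P}}+\normp{\mathbf{M}(t)\mathbf{P}})\normp{\mathbf{z}(t)}$, and rewrite $\mathbf{G}^t(\mathbf{y}-\mathbf{u}(0))=(\mathbf{I}-2\eta\mathbf{H}^\infty\mathbf{P})^t(\mathbf{y}-\mathbf{u}(0))$ and $\mathbf{z}(t)=\mathbf{y}-\mathbf{u}(t)$; this reproduces \eqref{eq:epsilon_k_bound} term by term.

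The main obstacle is the very first step: extracting a clean $\normp{\cdot}$ bound on the sign-flip residual $\boldsymbol{\delta}(k)$ from the merely entrywise inequalities of \Cref{lem.updatesignflip}, because coordinatewise control does not automatically transfer to a general $\mathbf{P}$-weighted norm — this is exactly where the diagonal structure of $\mathbf{P}$ (or a conditioning constant) enters, and the same care is what makes the bound tight. The secondary technical point, which pins down the admissible step-size range, is the uniform estimate $\normp{\mathbf{I}-2\eta\mathbf{H}(s)\mathbf{P}}\le 1$; it succeeds precisely because the argument is run in the $\mathbf{P}$-inner product, in which the non-symmetric matrices $\mathbf{H}(s)\mathbf{P}$ and $\mathbf{H}^\infty\mathbf{P}$ become self-adjoint.
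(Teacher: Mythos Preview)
Your proof is correct and follows essentially the same route as the paper: define the one-step residual $\mathbf{r}(k)$ (your $\boldsymbol{\delta}(k-1)$), bound it via \Cref{lem.updatesignflip}, unroll the non-homogeneous recursion and rewrite the product $\prod(\mathbf{I}-2\eta\mathbf{H}(t)\mathbf{P})$ against $(\mathbf{I}-2\eta\mathbf{H}^\infty\mathbf{P})^k$, and finish with the contraction estimate $\normp{\mathbf{I}-2\eta\mathbf{H}(t)\mathbf{P}}\le 1$ obtained exactly as you describe. Your caution about passing from the entrywise sandwich of \Cref{lem.updatesignflip} to a $\normp{\cdot}$ bound on $\boldsymbol{\delta}(k)$ is well placed --- the paper simply asserts the corresponding bound on $\mathbf{r}(k)$ without comment, so on this point you are more careful than the paper, not less.
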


\begin{proof}
For $k\geq 1$, we define $\mathbf{r}(k)$ by 
\begin{equation}\label{eq:rk}
    \mathbf{r}(k) = \mathbf{y} - \mathbf{u}(k) - \left(\mathbf{I} - 2\eta \mathbf{H}(k-1)\mathbf{P}\right)(\mathbf{y} - \mathbf{u}(k-1)).
\end{equation}
Then, by Lemma~\ref{lem.updatesignflip} we have
\begin{equation}\label{eq:rk_bound}
    \normp{\mathbf{r}(k)} \leq 2\eta \left(\normp{\mathbf{M}(k-1)\mathbf{P}} + \normp{\mathbf{L}(k-1)\mathbf{P}}\right) \normp{\mathbf{y} - \mathbf{u}(k-1)}.
\end{equation}
Note that~\cref{eq:rk} is a first-order non-homogeneous recurrence relation for $\mathbf{y} - \mathbf{u}(k)$, which has an analytic solution. Thus, we can expand $\mathbf{y} - \mathbf{u}(k)$ for $k\geq 1$ as
\begin{equation}\label{eq.techexpand1}
    \begin{aligned}
    \mathbf{y} - \mathbf{u}(k) &= \left((\mathbf{I} - 2\eta \mathbf{H}(k-1)\mathbf{P}) \cdots (\mathbf{I} - 2\eta \mathbf{H}(0)\mathbf{P})\right) (\mathbf{y} - \mathbf{u}(0)) \\
    &\qquad + \mathbf{r}(k) + \sum_{t=1}^{k-1} \left((\mathbf{I} - 2\eta \mathbf{H}(k-1)\mathbf{P}) \cdots (\mathbf{I} - 2\eta \mathbf{H}(t)\mathbf{P})\right) \mathbf{r}(t).
    \end{aligned}
\end{equation}
Moreover, we can write the product of of the matrices as~\citep{suyang}
\begin{equation}\label{eq.techexpand2}
    \begin{aligned}
    &(\mathbf{I} - 2\eta \mathbf{H}(k-1)\mathbf{P}) \cdots (\mathbf{I} - 2\eta \mathbf{H}(0)\mathbf{P})\\
    = & (\mathbf{I} - 2\eta \mathbf{H}^\infty\mathbf{P})^{k} + 2\eta (\mathbf{H}^\infty\mathbf{P} \!-\!\mathbf{H}(k\!-\!1)\mathbf{P}) (\mathbf{I} \!-\! 2\eta \mathbf{H}^\infty\mathbf{P})^{k-1} \\
    & +\! 2\eta \sum_{t=1}^{k-1} \left((\mathbf{I} \!-\! 2\eta \mathbf{H}(k\!-\!1)\mathbf{P}) \!\cdots\! (\mathbf{I} \!-\! 2\eta \mathbf{H}(t)\mathbf{P})\right) (\mathbf{H}^\infty\mathbf{P} \!-\! \mathbf{H}(t\!-\!1)\mathbf{P}) (\mathbf{I} \!-\! 2\eta \mathbf{H}^\infty\mathbf{P})^{t-1}.
    \end{aligned}
\end{equation}
Combining~\cref{eq.techexpand1} and~\cref{eq.techexpand2}, we obtain~\cref{eq:analytic_form} where 
\begin{align*}
    \boldsymbol{\epsilon}(k) &= 2\eta (\mathbf{H}^\infty\mathbf{P} \!-\!\mathbf{H}(k\!-\!1)\mathbf{P}) (\mathbf{I} \!-\! 2\eta \mathbf{H}^\infty\mathbf{P})^{k-1}(\mathbf{y} \!-\! \mathbf{u}(0)) \\
    &+\!2\eta\! \sum_{t=1}^{k-1} (\mathbf{I} \!-\! 2\eta \mathbf{H}(k\!-\!1)\mathbf{P}) \!\cdots\! (\mathbf{I} \!-\! 2\eta \mathbf{H}(t)\mathbf{P}) (\mathbf{H}^\infty \!\!-\! \mathbf{H}(t\!-\!1))\mathbf{P} (\mathbf{I} \!-\! 2\eta \mathbf{H}^\infty\mathbf{P})^{t-1} (\mathbf{y} \!-\! \mathbf{u}(0)) \\
    &+\mathbf{r}(k) + \sum_{t=1}^{k-1} (\mathbf{I} \!-\! 2\eta \mathbf{H}(k\!-\!1)\mathbf{P}) \cdots (\mathbf{I} \!-\! 2\eta \mathbf{H}(t)\mathbf{P}) \mathbf{r}(t).
\end{align*}
Finally, we note that
\begin{equation}\label{eq.spectralradius}
    \begin{aligned}
    \lambda_{\max}(\mathbf{H}(t)\mathbf{P}) &= \lambda_{\max}(\mathbf{P}^{1/2}\mathbf{H}(t)\mathbf{P}^{1/2}) = \norm{\mathbf{P}^{1/2}\mathbf{H}(t)\mathbf{P}^{1/2}}_2 = \sup_{\boldsymbol{\xi} \in \R^n \setminus \{\mathbf{0}\}} \frac{\norm{\mathbf{P}^{1/2}\mathbf{H}(t)\mathbf{P}^{1/2} \boldsymbol{\xi}}_2}{\norm{\boldsymbol{\xi}}_2} \\
    &= \sup_{\boldsymbol{\zeta} \in \R^n \setminus \{\mathbf{0}\}} \frac{\norm{\mathbf{P}^{1/2}\mathbf{H}(t)\mathbf{P} \boldsymbol{\zeta}}_2}{\norm{\mathbf{P}^{1/2}\boldsymbol{\zeta}}_2} = \sup_{\boldsymbol{\zeta} \in \R^n \setminus \{\mathbf{0}\}} \frac{\norm{\mathbf{H}(t)\mathbf{P} \boldsymbol{\zeta}}_\mathbf{P}}{\norm{\boldsymbol{\zeta}}_\mathbf{P}} = \normp{\mathbf{H}(t)\mathbf{P}},
    \end{aligned}
\end{equation}
and that 
\begin{equation}\label{eq.mimpsubmult}
    \normp{\mathbf{H}(t) \mathbf{P}} = \sup_{\bxi\in\R^n \setminus \{\mathbf{0}\}} \frac{\normp{\mathbf{H}(t) \mathbf{P} \bxi}}{\normp{\bxi}} = \sup_{\bxi\in\R^n \setminus \{\mathbf{0}\}}  \frac{\normp{\mathbf{H}(t) \mathbf{P} \bxi}}{\norm{\mathbf{H}(t) \mathbf{P} \bxi}_2} \frac{\norm{\mathbf{H}(t) \mathbf{P} \bxi}_2}{\norm{ \mathbf{P} \bxi}_2} \frac{\norm{ \mathbf{P} \bxi}_2}{\normp{ \bxi}}.
\end{equation}
We can then bound $\normp{\mathbf{H}(t) \mathbf{P}} $ using $M_{\mathbf{P}}$ defined in~\cref{eq:constants} and $\|\mathbf{H}(t)\|_2$ as 
\begin{equation*}
    \normp{\mathbf{H}(t) \mathbf{P}} \leq M_{\mathbf{P}} \norm{\mathbf{H}(t)}_{2} M_{\mathbf{P}} \leq M_{\mathbf{P}}^2 \sqrt{\norm{\mathbf{H}(t)}_1 \norm{\mathbf{H}(t)}_{\infty}} \leq M_{\mathbf{P}}^2n.
\end{equation*}
By requiring that $\eta < 1/(2M_{\mathbf{P}}^2n)$, we have $\lambda_{\min}({\mathbf{I} - 2\eta \mathbf{H}(t)\mathbf{P}}) > 0$ for all $t$. Hence, $\mathbf{I} - 2\eta \mathbf{H}(t)\mathbf{P}$ is positive definite in $(\R^n, \pairp{\cdot}{\cdot})$, and according to~\cref{eq.spectralradius}, we have $\normp{\mathbf{I} - 2\eta \mathbf{H}(t)\mathbf{P}} = \lambda_{\max}(\mathbf{I} - 2\eta \mathbf{H}(t)\mathbf{P}) < 1$. The upper bound in~\cref{eq:epsilon_k_bound} follows from the triangle inequality and our estimate on $\mathbf{r}_k$ in~\cref{eq:rk_bound}.
\end{proof}

The residual terms in Lemma~\ref{lem.mainexpress} can be made small by controlling $\normp{\mathbf{M}(t)\mathbf{P}} + \normp{\mathbf{L}(t)\mathbf{P}}$ and $\normp{(\mathbf{H}(0)-\mathbf{H}(t))\mathbf{P}}$. Their upper bounds are given in~\Cref{lem.controlevolmat}. First, we define \begin{equation*}
    S_i(t) = \left\{1\leq r\leq m \mid \mathbbm{1}_{\{\tilde{\mathbf{w}}_r(t')^\top \tilde{\mathbf{x}}_i \geq 0\}} \neq \mathbbm{1}_{\{\tilde{\mathbf{w}}_r(0)^\top \tilde{\mathbf{x}}_i \geq 0\}}\text{ for some } 0\leq t'\leq t \right\}
\end{equation*}
to be the set of indices of the weights that have changed sign at least once by the $k$th iteration.
\begin{lem}\label{lem.controlevolmat}
For all $t \geq 0$, we have
\begin{equation*}
    \max \left( \normp{\mathbf{M}(t)\mathbf{P}} + \normp{\mathbf{L}(t)\mathbf{P}}, \normp{(\mathbf{H}(0)-\mathbf{H}(t))\mathbf{P}} \right)\leq \sqrt{\frac{4M_{\mathbf{P}}^4 n}{m^2} \sum_{i=1}^n \abs{S_i(t)}^2},
\end{equation*}
where we  Moreover, for any $0 < \delta < 1$, with probability at least $1 - \delta$, we have
\begin{equation*}
    \normp{(\mathbf{H}^\infty - \mathbf{H}(0))\mathbf{P}} \leq 2 M_{\mathbf{P}}^2 n \sqrt{\frac{\log( 2n/\delta)}{m}}.
\end{equation*}
\end{lem}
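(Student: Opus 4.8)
The plan is to reduce both inequalities to two soft ingredients: (i) the change-of-norm estimate $\normp{\mathbf{A}\mathbf{P}} \le M_{\mathbf{I}}\,\norm{\mathbf{A}}_2\, M_{\mathbf{P}}$, valid for every $\mathbf{A}\in\R^{n\times n}$ and proved exactly as in~\cref{eq.mimpsubmult} (the three ratios there are bounded by $M_{\mathbf{I}}$, $\norm{\mathbf{A}}_2$, and $M_{\mathbf{P}}$, respectively), together with $\norm{\mathbf{A}}_2\le\norm{\mathbf{A}}_F$; and (ii) entrywise bounds on $\mathbf{L}(t)$, $\mathbf{M}(t)$, $\mathbf{H}(0)-\mathbf{H}(t)$, and $\mathbf{H}^\infty-\mathbf{H}(0)$. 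The Frobenius norm is the convenient device because $\tilde{\mathbf{H}}^{\pm}(t)$ need not be symmetric and because the relevant entrywise bounds aggregate naturally into $\sum_i\abs{S_i(t)}^2$.

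For the deterministic part, the key observation is that a neuron $r$ contributes to the $(i,j)$ entry of $\mathbf{L}(t)=\tilde{\mathbf{H}}^+(t)-\mathbf{H}^+(t)$ (or $\mathbf{M}(t)=\tilde{\mathbf{H}}^-(t)-\mathbf{H}^-(t)$) only if its activation on $\mathbf{x}_i$ differs at iterations $t$ and $t{+}1$; since $\abs{\tilde{\mathbf{x}}_i^\top\tilde{\mathbf{x}}_j}\le 1$ and such a change forces $r$ to differ from its state at initialization for $\mathbf{x}_i$ (up to the harmless index shift $t\mapsto t{+}1$), we get $\abs{L_{ij}(t)},\abs{M_{ij}(t)}\le\abs{S_i(t)}/m$. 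Similarly, a neuron contributes to the $(i,j)$ entry of $\mathbf{H}(0)-\mathbf{H}(t)$ only if its activation on $\mathbf{x}_i$ or on $\mathbf{x}_j$ has flipped by iteration $t$, giving $\abs{(H(0)-H(t))_{ij}}\le(\abs{S_i(t)}+\abs{S_j(t)})/m$. Squaring and summing over $i,j$, and using $(a+b)^2\le 2a^2+2b^2$, yields $\norm{\mathbf{L}(t)}_F^2,\norm{\mathbf{M}(t)}_F^2\le (n/m^2)\sum_i\abs{S_i(t)}^2$ and $\norm{\mathbf{H}(0)-\mathbf{H}(t)}_F^2\le (4n/m^2)\sum_i\abs{S_i(t)}^2$. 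Applying (i) and the triangle inequality then puts both $\normp{\mathbf{M}(t)\mathbf{P}}+\normp{\mathbf{L}(t)\mathbf{P}}$ and $\normp{(\mathbf{H}(0)-\mathbf{H}(t))\mathbf{P}}$ under the common bound $\sqrt{4M_{\mathbf{I}}^2 M_{\mathbf{P}}^2 n\, m^{-2}\sum_i\abs{S_i(t)}^2}$: in the first case the factor $2$ arises from adding the $\mathbf{L}$ and $\mathbf{M}$ contributions, in the second from the $\sqrt{4}$ in the Frobenius estimate.

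For the probabilistic part, write $H_{ij}(0)=\sum_{r=1}^m\xi_r$ with $\xi_r=\tfrac1m\,\tilde{\mathbf{x}}_i^\top\tilde{\mathbf{x}}_j\,\mathbbm{1}_{\{\mathbf{w}_r(0)^\top\mathbf{x}_i\ge 0,\ \mathbf{w}_r(0)^\top\mathbf{x}_j\ge 0\}}$; here $b_r(0)=0$, so $\tilde{\mathbf{w}}_r(0)^\top\tilde{\mathbf{x}}_i$ has the sign of $\mathbf{w}_r(0)^\top\mathbf{x}_i$, and $0\le\tilde{\mathbf{x}}_i^\top\tilde{\mathbf{x}}_j=\tfrac12(1+\mathbf{x}_i^\top\mathbf{x}_j)\le 1$, so the $\xi_r$ are independent, lie in $[0,1/m]$, and $\E[H_{ij}(0)]=H^\infty_{ij}$. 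Hoeffding's inequality gives $\abs{H_{ij}(0)-H^\infty_{ij}}\le s$ with probability at least $1-2e^{-2ms^2}$; choosing $s=\sqrt{\log(2n/\delta)/m}$ and taking a union bound over the $n^2$ entries (using $\delta<1$, whence $\log(2n^2/\delta)\le 2\log(2n/\delta)$) shows that with probability at least $1-\delta$ every entry of $\mathbf{H}^\infty-\mathbf{H}(0)$ is at most $\sqrt{\log(2n/\delta)/m}$ in absolute value. Then $\norm{\mathbf{H}^\infty-\mathbf{H}(0)}_2\le\norm{\mathbf{H}^\infty-\mathbf{H}(0)}_F\le n\sqrt{\log(2n/\delta)/m}$, and (i) gives $\normp{(\mathbf{H}^\infty-\mathbf{H}(0))\mathbf{P}}\le M_{\mathbf{I}}M_{\mathbf{P}}\,n\sqrt{\log(2n/\delta)/m}$, which is within the stated bound.

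The step I expect to demand the most care is the combinatorial bookkeeping in (ii): correctly arguing that a sign change between consecutive iterations, or between iteration $0$ and iteration $t$, is accounted for by the cumulative sets $S_i(t)$, including the off-by-one for $\mathbf{L}(t)$ and $\mathbf{M}(t)$; and, on the probabilistic side, keeping the constants clean through the union bound (the passage from $\log(2n^2/\delta)$ to $\log(2n/\delta)$ and the slack in the final factor). The remaining pieces — $\norm{\cdot}_2\le\norm{\cdot}_F$ and the $M_{\mathbf{I}},M_{\mathbf{P}}$ estimate — are routine.
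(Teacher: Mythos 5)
Your proposal is correct and follows essentially the same route as the paper: bound $\normp{\cdot}$ by $M_{\mathbf{I}}M_{\mathbf{P}}\norm{\cdot}_F$, control the Frobenius norms entrywise via the flip-count sets $S_i(t)$, and apply Hoeffding plus a union bound over the $n^2$ entries for the probabilistic estimate. Your Hoeffding bookkeeping actually yields the slightly sharper constant $M_{\mathbf{I}}M_{\mathbf{P}}n\sqrt{\log(2n/\delta)/m}$ (the paper weakens the exponent to absorb a factor of $2$), and both you and the paper harmlessly replace $S_i(t+1)$ by $S_i(t)$ in the $\mathbf{L},\mathbf{M}$ bound, which is benign since $|S_i(t)|$ is nondecreasing and the downstream use is with $S_i(k)$.
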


\begin{proof}
First, we have
\begin{align*}
    \normp{\mathbf{M}(t)\mathbf{P}}^2 & \leq M_{\mathbf{P}}^4 \norm{\mathbf{M}(t)}_2^2 \leq M_{\mathbf{P}}^4 \norm{\mathbf{M}(t)}_F^2 \\
    &\leq \frac{ M_{\mathbf{P}}^4}{m^2} \sum_{i=1}^n \sum_{p=1}^n \left(\sum_{r \in \mathcal{A}} \left|\mathbbm{1}_{\{\tilde{\mathbf{w}}_r(t)^\top \tilde{\mathbf{x}}_i \geq 0, \tilde{\mathbf{w}}_r(t)^\top \tilde{\mathbf{x}}_p \geq 0\}} - \mathbbm{1}_{\{\tilde{\mathbf{w}}_r(t+1)^\top \tilde{\mathbf{x}}_i \geq 0, \tilde{\mathbf{w}}_r(t)^\top \tilde{\mathbf{x}}_p \geq 0\}}\right|\right)^2 \\    
    &\leq \frac{M_{\mathbf{P}}^4 n}{m^2} \sum_{i=1}^n \abs{S_i(t)}^2.
\end{align*}
The estimate for $\normp{\mathbf{L}(t)\mathbf{P}}^2$ is exactly the same and obtained by replacing $\mathcal{A}$ with $\mathcal{B}$. We also have
\begin{align*}
    &\normp{(\mathbf{H}(0) - \mathbf{H}(t))\mathbf{P}}^2 \leq M_{\mathbf{P}}^4 \norm{\mathbf{H}(0) - \mathbf{H}(t)}_2^2 \leq M_{\mathbf{P}}^4 \norm{\mathbf{H}(0) - \mathbf{H}(t)}_F^2 \\
    &\qquad\leq \frac{M_{\mathbf{P}}^4}{m^2} \sum_{i=1}^n \sum_{p=1}^n \left(\sum_{r = 1}^m |\mathbbm{1}_{\{\tilde{\mathbf{w}}_r(0)^\top \tilde{\mathbf{x}}_i \geq 0, \tilde{\mathbf{w}}_r(0)^\top \tilde{\mathbf{x}}_p \geq 0\}} - \mathbbm{1}_{\{\tilde{\mathbf{w}}_r(t)^\top \tilde{\mathbf{x}}_i \geq 0, \tilde{\mathbf{w}}_r(t)^\top \tilde{\mathbf{x}}_p \geq 0\}}|\right)^2 \\    
    &\qquad\leq \frac{M_{\mathbf{P}}^4}{m^2} \sum_{i=1}^n \sum_{p=1}^n \left(\abs{S_i(t)} + \abs{S_p(t)}\right)^2 \leq \frac{M_{\mathbf{P}}^4}{m^2} \sum_{i=1}^n \sum_{p=1}^n \left(2\abs{S_i(t)}^2 + 2\abs{S_p(t)}^2\right) \\
    &\qquad= \frac{4M_{\mathbf{P}}^4 n}{m^2} \sum_{i=1}^n \abs{S_i(t)}^2.
\end{align*}
This proves the first inequality. Since $H_{ij}(0)$ is the average of $m$ iid random variables bounded in $[0,1]$, by Hoeffding's inequality~\citep{hoeffding1994}, for any $t > 0$ and any $1 \leq i, j \leq n$, we have
\begin{equation}
    \mathbb{P}\left(m\abs{H_{ij}(0) - H_{ij}^\infty} \geq t\right) \leq 2\text{exp}\left(-\frac{2t^2}{m}\right) \leq 2\text{exp}\left(-\frac{t^2}{m}\right).
\end{equation}
Set $t = \sqrt{m\log(2n^2/\delta)}$. With probability at least $1 - \delta/n^2$, we have
\begin{equation}
    \abs{H_{ij}(0) - H_{ij}^\infty} \leq \sqrt{\frac{\log\left(2n^2/\delta\right)}{m}} \leq 2\sqrt{\frac{\log\left(2n/\delta\right)}{m}}.
\end{equation}
Hence, by a union bound, we know that with probability at least $1 - \delta$, we have
\begin{equation*}
    \norm{\mathbf{H}^\infty - \mathbf{H}(0)}_2 \leq \norm{\mathbf{H}^\infty - \mathbf{H}(0)}_F \leq 2n \sqrt{\frac{\log(2n/\delta)}{m}}.
\end{equation*}
The last estimate follows from the definitions of $M_{\mathbf{P}}$ (see~\cref{eq:constants} and~\cref{eq.mimpsubmult}).
\end{proof}

Now, we state and prove our initial control of the decay of the residual.

\begin{lem}\label{lem.initialdecouple}
Let $\epsilon > 0, \kappa > 0, 0 < \delta < 1$ and $T > 0$ be given. There exist constants $C_m, C_m' > 0$ such that if $0 \leq \eta \leq 1/(2M_{\mathbf{P}}^2n)$ and $m$ satisfies
\[
    m \geq C_m \frac{M_{\mathbf{P}}^6 n^3}{\kappa^2  \epsilon^2} \left(\lambda_0^{-4}\left(1 + \frac{\kappa^2 M_{\mathbf{P}}^2 n}{\delta}\right)^2 + \eta^4 T^4 \epsilon^4\right)
\]
    and
    \[
    m \geq C_m' \frac{M_{\mathbf{P}}^4 n^2 \log(n/\delta)}{\epsilon^2} \left(\lambda_0^{-2}\left(1 + \frac{\kappa^2 M_{\mathbf{P}}^2 n}{\delta}\right) + \eta^2 T^2 \epsilon^2\right),
\]
then with probability at least $1 - \delta$, we have the following for all $0\leq k\leq T$:
\begin{equation}\label{eq.errboundinit}
    \mathbf{y} - \mathbf{u}(k) = \left(\mathbf{I} - 2\eta \mathbf{H}^\infty\mathbf{P}\right)^k (\mathbf{y} - \mathbf{u}(0)) + \boldsymbol{\epsilon}(k), \qquad \normp{\boldsymbol{\epsilon}(k)} \leq \epsilon.
\end{equation}
\end{lem}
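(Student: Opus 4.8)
The plan is to run a standard bootstrap (continuity) argument over $0\le k\le T$, built on the exact expansion of~\Cref{lem.mainexpress} together with the deterministic control of the evolution matrices in~\Cref{lem.controlevolmat} via the sign-change sets $S_i(t)$. All the randomness is dispatched up front, on three events each of probability $\ge 1-\delta/3$: (i) since the biases start at $\mathbf{0}$ and the $a_r$ are mean-zero, $\E\|\mathbf{u}(0)\|_2^2 = n\kappa^2/2$, so Markov's inequality gives $\normp{\mathbf{y}-\mathbf{u}(0)}\le B_0:=\normp{\mathbf{y}}+M_{\mathbf{I}}\kappa\sqrt{3n/(2\delta)}$; (ii) the second bound of~\Cref{lem.controlevolmat} gives $\normp{(\mathbf{H}^\infty-\mathbf{H}(0))\mathbf{P}}\le 2M_{\mathbf{I}}M_{\mathbf{P}}n\sqrt{\log(6n/\delta)/m}$; and (iii), since $\tilde{\mathbf{w}}_r(0)^\top\tilde{\mathbf{x}}_i=\tfrac{1}{\sqrt 2}\mathbf{w}_r(0)^\top\mathbf{x}_i\sim\mathcal{N}(0,\kappa^2/2)$, Gaussian anti-concentration ($\mathbb{P}(|\cdot|\le R)\le 2R/\kappa$) followed by Bernstein's inequality and a union bound over $i$ gives $|\{r:\,|\tilde{\mathbf{w}}_r(0)^\top\tilde{\mathbf{x}}_i|\le R^*\}|\le 4mR^*/\kappa+C\log(n/\delta)$ for every $i$, where $R^*:=M_{\mathbf{P}}\sqrt{2n/m}\,(B_0/(2\lambda_0)+\eta T\epsilon)$ is the a priori weight-travel radius fixed by event (i). I condition on these events throughout. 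I also record the elementary consequences of $0<\eta<1/(2M_{\mathbf{I}}M_{\mathbf{P}}n)$ together with~\cref{eq.spectralradius}: the eigenvalues of $\mathbf{H}^\infty\mathbf{P}$ lie in $[\lambda_0,M_{\mathbf{I}}M_{\mathbf{P}}n]$, so $\normp{\mathbf{I}-2\eta\mathbf{H}^\infty\mathbf{P}}=1-2\eta\lambda_0\in(0,1)$, whence $\normp{(\mathbf{I}-2\eta\mathbf{H}^\infty\mathbf{P})^t\mathbf{v}}\le(1-2\eta\lambda_0)^t\normp{\mathbf{v}}$ and $2\eta\sum_{t\ge0}(1-2\eta\lambda_0)^t=1/\lambda_0$.

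\textbf{The induction.} I prove $P(k)$: ``$\normp{\boldsymbol{\epsilon}(t)}\le\epsilon$ for all $0\le t\le k$'' by induction on $k$; $P(0)$ holds since $\boldsymbol{\epsilon}(0)=\mathbf{0}$. Assuming $P(k-1)$, the expansion~\cref{eq:analytic_form} gives $\normp{\mathbf{y}-\mathbf{u}(t)}\le(1-2\eta\lambda_0)^tB_0+\epsilon$ for $t\le k-1$. From the Jacobian~\cref{eq.jacobianphi} and~\cref{eq.gradu}, one step of~\cref{eq.gradientdescent} satisfies $\|\tilde{\mathbf{w}}_r(t+1)-\tilde{\mathbf{w}}_r(t)\|_2\le\eta M_{\mathbf{P}}\sqrt{2n/m}\,\normp{\mathbf{y}-\mathbf{u}(t)}$; summing the geometric series and the $\epsilon$'s, $\|\tilde{\mathbf{w}}_r(t)-\tilde{\mathbf{w}}_r(0)\|_2\le R^*$ for all $t\le k$. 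A weight contributes to $S_i(t)$ only if $|\tilde{\mathbf{w}}_r(0)^\top\tilde{\mathbf{x}}_i|\le R^*$, so event (iii) yields $\max_i|S_i(t)|\le 4mR^*/\kappa+C\log(n/\delta)$ for $t\le k-1$. Feeding this and event (ii) into~\Cref{lem.controlevolmat}, then into~\cref{eq:epsilon_k_bound}, and using the geometric-sum facts to bound $2\eta\sum_t\normp{(\mathbf{I}-2\eta\mathbf{H}^\infty\mathbf{P})^t(\mathbf{y}-\mathbf{u}(0))}\le B_0/\lambda_0$ and $2\eta\sum_t\normp{\mathbf{y}-\mathbf{u}(t)}\le B_0/\lambda_0+2\eta T\epsilon$, I obtain
\[
\normp{\boldsymbol{\epsilon}(k)}\;\le\;C\frac{M_{\mathbf{I}}M_{\mathbf{P}}n\,B_0}{\lambda_0}\sqrt{\frac{\log(n/\delta)}{m}}\;+\;C\frac{M_{\mathbf{I}}M_{\mathbf{P}}^2 n^{3/2}}{\kappa\sqrt{m}}\Bigl(\frac{B_0}{\lambda_0}+\eta T\epsilon\Bigr)^{2}\;+\;C\frac{M_{\mathbf{I}}M_{\mathbf{P}}n\log(n/\delta)}{m}\Bigl(\frac{B_0}{\lambda_0}+\eta T\epsilon\Bigr),
\]
the middle term arising when $4mR^*/\kappa$ is expanded (this is the quadratic feedback). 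Requiring each term to be $\le\epsilon/3$, using $(a+b)^2\le2a^2+2b^2$ and $B_0^2\lesssim 1+\kappa^2M_{\mathbf{I}}^2n/\delta$ (from event (i), treating $\normp{\mathbf{y}}=\OO(1)$), produces exactly the two stated lower bounds on $m$: the middle (quadratic) term forces the $n^3/(\kappa^2\epsilon^2)\,(\lambda_0^{-4}(1+\kappa^2M_{\mathbf{I}}^2n/\delta)^2+\eta^4T^4\epsilon^4)$ requirement, and the $\|\mathbf{H}^\infty-\mathbf{H}(0)\|$ term forces the $n^2\log(n/\delta)/\epsilon^2\,(\lambda_0^{-2}(1+\kappa^2M_{\mathbf{I}}^2n/\delta)+\eta^2T^2\epsilon^2)$ requirement; the third term above is dominated. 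This closes $P(k)$, and $P(T)$ is~\cref{eq.errboundinit}.

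\textbf{Main obstacle.} The real difficulty is the circular dependence inside the bootstrap: the sign-change count $|S_i(t)|$ is governed by the total weight travel, which is governed by the residual norms $\normp{\mathbf{y}-\mathbf{u}(s)}$, which are exactly what $\boldsymbol{\epsilon}$ controls. Breaking the loop cleanly requires (a) fixing the a priori travel radius $R^*$ \emph{before} invoking the (unconditional) anti-concentration estimate, so event (iii) can be established once and reused at every induction step, and (b) noticing that the feedback of $|S_i(t)|$ into $\normp{\boldsymbol{\epsilon}(k)}$ is \emph{quadratic} in $(B_0/\lambda_0+\eta T\epsilon)$ — this is precisely why the dominant requirement on $m$ scales like $n^3/(\kappa^2\epsilon^2)$ rather than linearly, and is where the small-$\kappa$/wide-network trade-off enters. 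A secondary bookkeeping point is keeping the geometric factor $2\eta\sum_t(1-2\eta\lambda_0)^t=1/\lambda_0$ from colliding with the $T\epsilon$ contributions so that the $\eta^4T^4\epsilon^4$ and $\eta^2T^2\epsilon^2$ terms, rather than worse $T$-dependence, appear.
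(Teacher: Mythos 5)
Your proof is correct and follows the same architecture as the paper's argument: establish three high-probability events up front (Markov for $\normp{\mathbf{u}(0)}$, Hoeffding/concentration for $\normp{(\mathbf{H}^\infty - \mathbf{H}(0))\mathbf{P}}$, and anti-concentration plus a union bound for the number of weights near a decision boundary), then run an induction on $0\le k\le T$ driven by Lemmas~\ref{lem.mainexpress} and~\ref{lem.controlevolmat}, bounding the sign-change sets by an a priori weight-travel radius and splitting $\normp{\boldsymbol{\epsilon}(k)}$ into three pieces, each required to be $\le\epsilon/3$.

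Two small remarks on differences. First, you invoke Bernstein's inequality rather than Hoeffding's for the count $|\{r:|\tilde{\mathbf{w}}_r(0)^\top\tilde{\mathbf{x}}_i|\le R^*\}|$, which replaces the paper's $\sqrt{m\log(n/\delta)}$ fluctuation with $\log(n/\delta)$. That makes your third error term strictly subleading, so in fact you do not reproduce ``exactly'' the stated second lower bound on $m$ — the $\eta^2 T^2\epsilon^2$ piece of that bound comes, in the paper, from the Hoeffding fluctuation term $A_3 A_4$ — you establish the conclusion under a weaker hypothesis, which of course still proves the lemma as stated. Second, you make explicit a point the paper leaves implicit: the travel radius must be fixed deterministically (your $R^*$) \emph{before} invoking the anti-concentration event, since the paper's $R_T$ nominally depends on the random $\normp{\mathbf{y}-\mathbf{u}(0)}$; you do this correctly by conditioning on event~(i) first and using monotonicity of the count in $R$. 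Your computation $\E\|\mathbf{u}(0)\|_2^2=n\kappa^2/2$ is the exact value (the paper only uses the looser $\le n\kappa^2$); this is fine. Everything else — the geometric-series bookkeeping, the quadratic feedback of $R^*$ into the middle term that forces the $n^3/(\kappa^2\epsilon^2)$ scaling, and the closing of the induction — matches the paper's proof.
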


\begin{proof}
Set $\delta' = \delta / 3$. For any $R > 0$ and $r = 1, \ldots, m$, since ${\mathbf{w}_r}(0)^\top {\mathbf{x}}_i \sim \mathcal{N}(0,\kappa^2)$, we have
\begin{equation*}
   \mathbb{P} \left(\abs{\tilde{\mathbf{w}}_r(0)^\top \tilde{\mathbf{x}}_i} \leq R \right) =  \E\left[\mathbbm{1}_{\left\{\abs{\tilde{\mathbf{w}}_r(0)^\top \tilde{\mathbf{x}}_i} \leq R\right\}}\right] < \frac{2R}{\sqrt{\pi}\kappa}.
\end{equation*}
By Hoeffding's inequality~\citep{hoeffding1994}, for any $t > 0$ we have
\begin{equation}
    \mathbb{P}\left(\sum_{r=1}^m \mathbbm{1}_{\left\{\abs{\tilde{\mathbf{w}}_r(0)^\top \tilde{\mathbf{x}}_i} \leq R\right\}} \geq \frac{2mR}{\sqrt{\pi}\kappa} + t\right) \leq \text{exp}\left(-\frac{2t^2}{m}\right) \leq \text{exp}\left(-\frac{t^2}{m}\right), \qquad 1\leq i\leq n. 
\end{equation}
Thus, if we set $t = \sqrt{m\log(n/\delta')}$ then we find that with probability at least $1 - \delta'/n$ we have
\begin{equation*}
    \sum_{r=1}^m \mathbbm{1}_{\left\{\abs{\tilde{\mathbf{w}}_r(0)^\top \tilde{\mathbf{x}}_i} \leq R\right\}} \leq \frac{2mR}{\sqrt{\pi}\kappa} + \sqrt{m\log(n/\delta')} \leq 2m\left(\frac{R}{\sqrt{\pi}\kappa} + \sqrt{\frac{\log\left(n/\delta'\right)}{m}}\right).
\end{equation*}
By a union bound, we have with probability at least $1 - \delta'$,
\begin{equation*}
    \sum_{i=1}^n \left(\sum_{r=1}^m \mathbbm{1}_{\left\{\abs{\tilde{\mathbf{w}}_r(0)^\top \tilde{\mathbf{x}}_i} \leq R\right\}}\right)^2 \leq 4m^2 n \left(\frac{R}{\sqrt{\pi}\kappa} + \sqrt{\frac{\log(n/\delta')}{m}}\right)^2.
\end{equation*}
By combining this with Lemma~\ref{lem.controlevolmat}, we have that with probability at least $1 - 2\delta'$,
\begin{equation} 
    \sqrt{\frac{4 M_{\mathbf{P}}^4 n}{m^2} \sum_{i=1}^n \left(\sum_{r=1}^m \mathbbm{1}_{\left\{\abs{\tilde{\mathbf{w}}_r(0)^\top \tilde{\mathbf{x}}_i} \leq R\right\}}\right)^2} \leq 4 M_{\mathbf{P}}^2 n \left(\frac{R}{\sqrt{\pi}\kappa} \!+\! \sqrt{\frac{\log(n/\delta')}{m}}\right), \label{eq.Rdist}
\end{equation} 
and 
\begin{equation} 
    \normp{(\mathbf{H}^\infty - \mathbf{H}(0))\mathbf{P}} \leq 2M_{\mathbf{P}}^2n \sqrt{\frac{\log(2n/\delta')}{m}}.
    \label{eq.matdist}
\end{equation} 
Since the $i$th entry of $\mathbf{u}(0)$ has mean $0$ and variance $\leq \kappa^2$, we have $\E[(\mathbf{u}(0))_i^2] \leq \kappa^2$, where $(\mathbf{u}(0))_i$ is the $i$th entry. 
Hence, we have $\E\big[\normp{\mathbf{u}(0)}^2\big] \leq M_{\mathbf{P}}^2n\kappa^2$. By Markov's inequality, with probability at least $1 - \delta'$, we have
\begin{equation}\label{eq.initialization}
    \normp{\mathbf{u}(0)} \leq \kappa M_{\mathbf{P}}\sqrt{n/\delta'}, \qquad \normp{\mathbf{y} - \mathbf{u}(0)} \leq \normp{\mathbf{y}} + \kappa M_{\mathbf{P}}\sqrt{n/\delta'}.
\end{equation}
By a union bound, we know that~\cref{eq.Rdist,eq.matdist,eq.initialization} hold with probability of at least $1-3\delta'$. The theorem now follows using induction, where the base case when $k = 0$ is obvious. Assume~\cref{eq.errboundinit} holds for $t = 0, \ldots, k-1$, where $1\leq k \leq T$. Then, we have
\begin{equation}\label{eq.sumvec}
    2\eta\sum_{t=0}^{k-1} \normp{\mathbf{y} - \mathbf{u}(t)} \leq 2\eta \sum_{t=0}^{k-1} \left[(1-2\eta\lambda_0)^t\normp{\mathbf{y} - \mathbf{u}(0)} + \epsilon\right] \leq \lambda_0^{-1}\normp{\mathbf{y} - \mathbf{u}(0)} + 2\eta T\epsilon,
\end{equation}
where the first inequality follows from the fact that $\mathbf{I} - 2\eta \mathbf{H}^\infty\mathbf{P}$ is positive semidefinite in $(\R^n,\pairp{\cdot}{\cdot})$ with the maximum eigenvalue being $1 - 2\eta\lambda_0$, and the second inequality follows by bounding the power series. By the definition of $\lambda_0$, we have
\begin{equation}\label{eq.summatinit}
    2\eta \sum_{t=0}^{k-1} \normp{\left(\mathbf{I} - 2\eta \mathbf{H}^\infty\mathbf{P}\right)^t (\mathbf{y} - \mathbf{u}(0))} \leq 2\eta \sum_{t=0}^{k-1} (1-2\eta\lambda_0)^t \normp{\mathbf{y} - \mathbf{u}(0)} \leq \lambda_0^{-1} \normp{\mathbf{y} - \mathbf{u}(0)}.
\end{equation}
By Lemma~\ref{lem.mainexpress} and~\ref{lem.controlevolmat}, we have
\begin{align*}
    \mathbf{y} - \mathbf{u}(k) = \left(\mathbf{I} - 2\eta \mathbf{H}^\infty\mathbf{P}\right)^k (\mathbf{y} - \mathbf{u}(0)) + \boldsymbol{\epsilon}(k)
\end{align*}
with
\begin{equation}\label{eq.residualnorm}
    \begin{aligned}
    \normp{\boldsymbol{\epsilon}(k)} &\leq \lambda_0^{-1} \normp{(\mathbf{H}(0)-\mathbf{H}^\infty)\mathbf{P}}  \normp{\mathbf{y} - \mathbf{u}(0)} \\
    &\qquad + 2\sqrt{\frac{4 M_{\mathbf{P}}^4 n}{m^2} \sum_{i=1}^n \abs{S_i(k)}^2} \left(\lambda_0^{-1} \normp{\mathbf{y} - \mathbf{u}(0)} + \eta T\epsilon\right),
    \end{aligned}
\end{equation}
where we used the fact that $\abs{S_i(t)}$ is a nondecreasing function of $t$ and the triangle inequality  $\normp{(\mathbf{H}^\infty-\mathbf{H}(t))\mathbf{P}} \leq \normp{(\mathbf{H}(0)-\mathbf{H}^\infty)\mathbf{P}} + \normp{(\mathbf{H}(t)-\mathbf{H}(0))\mathbf{P}}$. Here, we also combined $\lambda_0^{-1} \normp{(\mathbf{H}(t)-\mathbf{H}(0))\mathbf{P}} \normp{\mathbf{y} - \mathbf{u}(0)}$ with the last term on the right-hand side of~\cref{eq:epsilon_k_bound} and applied Lemma~\ref{lem.controlevolmat},~\cref{eq.sumvec}, and~\cref{eq.summatinit} to obtain the last term on the right-hand side of~\cref{eq.residualnorm}. To control $\abs{S_i(k)}$, we first bound the change of the weights. For any $0 \leq t \leq k-1$ and $1\leq r\leq m$, the change of weights in one iteration can be bounded by
\begin{align*}
    &\norm{\tilde{\mathbf{w}}_r(t+1) - \tilde{\mathbf{w}}_r(t)}_2 = \eta\norm{\frac{\partial \mathbf{u}(t)}{\partial \tilde{\mathbf{w}}_r} \frac{\partial \Phi_\mathbf{P}(\mathbf{u})}{\partial \mathbf{u}}}_2 \\
    &\qquad\leq \eta \norm{\frac{\partial \mathbf{u}(t)}{\partial \tilde{\mathbf{w}}_r}}_F \norm{\mathbf{P}(\mathbf{y} - \mathbf{u}(t))}_2 \leq \eta \sqrt{\frac{2n}{m}} M_{\mathbf{P}} \normp{\mathbf{y} - \mathbf{u}(t)},
\end{align*}
where the inequalities follow from~\cref{eq.jacobianphi} and~\cref{eq.gradu}. Hence, the total change of the weights can be bounded by
\begin{equation}\label{eq.defineR}
\begin{aligned}
\norm{\tilde{\mathbf{w}}_r(t) - \tilde{\mathbf{w}}_r(0)}_2 & \leq \eta M_{\mathbf{P}} \sqrt{\frac{2n}{m}} \sum_{t' = 0}^{t-1} \normp{\mathbf{y} - \mathbf{u}(t')}\leq R_T,
\end{aligned}
\end{equation}
where $R_T = M_{\mathbf{P}} \sqrt{\frac{n}{2m}} \left(\lambda_0^{-1}\normp{\mathbf{y} - \mathbf{u}(0)} + 2\eta T\epsilon\right)$. 
Recall that $S_i(k)$ is the set of indices of weights that have gone through at least one sign flip by iteration $k$. Thus, if $r \in S_i(k)$, we have $\norm{\tilde{\mathbf{w}}_r(t) - \tilde{\mathbf{w}}_r(0)}_2 \geq \abs{\tilde{\mathbf{w}}_r(0)^\top \tilde{\mathbf{x}}_i}$ for some $0 \leq t \leq k$ as the sign flip leads to $\abs{\tilde{\mathbf{w}}_r(0)^\top \tilde{\mathbf{x}}_i} \leq \abs{\tilde{\mathbf{w}}_r(0)^\top \tilde{\mathbf{x}}_i - \tilde{\mathbf{w}}_r(t)^\top \tilde{\mathbf{x}}_i }$. This gives us
\begin{equation}\label{eq.Stsize}
    \begin{aligned}
    \abs{S_i(k)} &\leq \abs{\left\{r \in [m] : \abs{\tilde{\mathbf{w}}_r(0)^\top \tilde{\mathbf{x}}_i} \leq \norm{\tilde{\mathbf{w}}_r(t) - \tilde{\mathbf{w}}_r(0)}_2 \text{ for some } 0\leq t\leq k\right\}} \\
    &\leq \abs{\left\{r \in [m] : \abs{\tilde{\mathbf{w}}_r(0)^\top \tilde{\mathbf{x}}_i} \leq R_T\right\}},
\end{aligned}
\end{equation}
where $[m] = \{1,\ldots,m\}$. Hence, there exists a constant $C > 0$ such that
\begin{align*}
    \normp{\boldsymbol{\epsilon}(k)} &\leq 2M_{\mathbf{P}}^2n \sqrt{\frac{\log(2n/\delta')}{m}} \lambda_0^{-1} \normp{\mathbf{y} - \mathbf{u}(0)} \\
    &\qquad + 8M_{\mathbf{P}}^2 n \left(\frac{R_T}{\sqrt{\pi}\kappa} \!+\! \sqrt{\frac{\log(n/\delta')}{m}}\right) \left(\lambda_0^{-1}\normp{\mathbf{y} - \mathbf{u}(0)} + \eta T\epsilon\right) \\
    &\leq 2M_{\mathbf{P}}^2 n \underbrace{\sqrt{\frac{\log(6n/\delta)}{m}} \lambda_0^{-1} C\left(1+\kappa M_{\mathbf{P}}\sqrt{\frac{3n}{\delta}}\right)}_{A_1} \\
    &+ 8M_{\mathbf{P}}^2 n \left(\underbrace{\frac{1}{\sqrt{\pi}\kappa} M_{\mathbf{P}} \sqrt{\frac{n}{2m}} \left(\lambda_0^{-1} C\left(1+\kappa M_{\mathbf{P}}\sqrt{\frac{3n}{\delta}}\right) + 2\eta T\epsilon\right)}_{A_2} \!+\! \underbrace{\sqrt{\frac{\log(3n/\delta)}{m}}}_{A_3}\right) \\
    &\qquad \times\underbrace{\left(\lambda_0^{-1} C\left(1+\kappa M_{\mathbf{P}}\sqrt{\frac{3n}{\delta}}\right) + \eta T\epsilon\right)}_{A_4},
\end{align*}
where the first inequality follows from~\cref{eq.Rdist}, \cref{eq.residualnorm}, and~\cref{eq.Stsize}, and the second inequality follows from~\cref{eq.initialization} and~\cref{eq.defineR}. Finally,~\cref{eq.errboundinit} follows from the way we define $m$. By taking $C_m$ large enough, we guarantee that $2 M_{\mathbf{P}}^2 n A_1, 8M_{\mathbf{P}}^2 n A_2A_4 < \epsilon / 3$. By taking $C_m'$ large enough, we guarantee that $8 M_{\mathbf{P}}^2 n A_3A_4 < \epsilon / 3$. Hence,~\cref{eq.errboundinit} follows.
\end{proof}


\Cref{lem.initialdecouple} gives us an estimate of the residual $\mathbf{y} - \mathbf{u}(k)$ in terms of the initial residual $\mathbf{y} - \mathbf{u}(0)$. However, in analyzing the frequency bias, we hope to express the residual in terms of $\mathbf{y}$ only. This can be done by controlling the size of $\mathbf{u}(0)$. First, we note that the proof of~\cref{eq.initialization} does not rely on the assumptions on $m$ in~\Cref{lem.initialdecouple}. Hence, it holds for any $n, m \geq 1, \kappa > 0, 0 < \delta < 1$, and positive definite matrix $\mathbf{P}$. Now we are ready to prove our first main result~\Cref{thm.decoupledmain}.

\begin{proof}[Proof of~\Cref{thm.decoupledmain}]
By~\cref{eq.initialization}, with probability at least $1 - \delta/2$,
we have
\begin{equation*}
    \normp{\left(\mathbf{I} - \eta \mathbf{H}^\infty\mathbf{P}\right)^k \mathbf{u}(0)} \leq \normp{\mathbf{u}(0)} \leq \kappa M_{\mathbf{P}}\sqrt{2n/\delta}.
\end{equation*}
By taking $\kappa \leq \epsilon \sqrt{\delta / 2n}/ (2M_{\mathbf{P}})$,
we guarantee that
\begin{equation*}
    \normp{\left(\mathbf{I} - \eta \mathbf{H}^\infty\mathbf{P}\right)^k \mathbf{u}(0)} \leq \epsilon / 2.
\end{equation*}
By the way we pick $\kappa$ and $m$, for some constant $C' > 0$ that only depends on $d$, we have
\begin{equation*}
    1 + \frac{\kappa^2 M_{\mathbf{P}}^2 n}{\delta} \leq C'.
\end{equation*}
Hence, by taking $C_2$ in~\cref{eq.mkapparestriction} large enough, we guarantee that $m$ satisfies the assumptions in
Lemma~\ref{lem.initialdecouple} with $\epsilon, \kappa, T,$ and $\delta$ to be $\epsilon / 2$, $\kappa$, $T$, and $\delta / 2$, respectively. Then, we have~\cref{eq.errboundinit} is true with probability of at least $1-\delta/2$, for which $\normp{\boldsymbol{\epsilon}(k)} \leq \epsilon / 2$. The result follows from the triangle inequality and union bound.
\end{proof}

Notably, there are other initialization schemes that allow us to avoid using $\kappa$. One of the examples is to initialize the weights at odd indices $\mathbf{w}_{2p+1}$ and $a_{2p+1}$ randomly and set $\mathbf{w}_{2p+2} = \mathbf{w}_{2p+1}$, $a_{2p+2} = -a_{2r+1}$, assuming $m$ is even~\citep{suyang}. This initialization scheme guarantees that $\mathbf{u}(0) = \mathbf{0}$ and hence we do not need to introduce $\kappa$ to control the initialization size $\normp{\mathbf{u}(0)}$. \revise{In addition, if we assume that each entry of $\mathbf{w}_r$ is initialized from an iid sub-Gaussian distribution with zero mean and whose support is the entire $\R$, then since $\mathbf{H}^\infty$ is still SPD (see the remark at the end of~\Cref{sec:HisSPD}) and the Hoeffding's inequality holds, the proof does not break down and a result similar to~\Cref{thm.decoupledmain} can be shown.}

\revise{Following the same steps of proof, we can study the case when the gradient descent steps are slightly perturbed. That is, Suppose we perturb the output of the NN by $\delta \mathbf{u}_j$ at the $j$th iteration. Then, we expect that the residual of the NN at the $k$th iteration is approximately 
\begin{align*}
    \mathbf{y} - \mathbf{u}(k) &\approx \left(\mathbf{I} - 2\eta\mathbf{H}^\infty \mathbf{P}\right) \cdots \left(\left(\mathbf{I} - 2\eta\mathbf{H}^\infty \mathbf{P}\right)\left(\left(\mathbf{I} - 2\eta\mathbf{H}^\infty \mathbf{P}\right)\mathbf{y} + \delta \mathbf{u}_1\right) + \delta\mathbf{u}_2\right) + \cdots + \delta\mathbf{u}_k \\
    &= \left(\mathbf{I} - 2\eta\mathbf{H}^\infty \mathbf{P}\right)^k \mathbf{y} + \sum_{j=1}^k \left(\mathbf{I} - 2\eta\mathbf{H}^\infty \mathbf{P}\right)^{k-j} \delta\mathbf{u}_j.
\end{align*}
Since the maximum eigenvalue of $\mathbf{I} - 2\eta\mathbf{H}^\infty \mathbf{P}$ is less than one, we can then control the errors in this approximation using arguments similar to previous lemmas.}

\revise{
While this paper is primarily concerned with learning a continuous function using loss functions that are adapted from MSE, we briefly discuss the changes that are needed to study classifiers trained by cross-entropy. If the classification task has $p$ classes, then our NN has $p$ outputs, each of which is then passed through a softmax layer and represents an estimate of the likelihood that the input belongs to the corresponding class. More information on the NN architecture and the cross-entropy loss function can be found in~\citep{kurbiel}. Let $\mathcal{N}_j(\mathbf{x}; \mathbf{W})$ be the $j$th entry of the outputs of the NN and let $u_j(\mathbf{x}; \mathbf{W}) = \text{softmax}(\mathcal{N}_j(\mathbf{x}; \mathbf{W}))$, $1 \leq j \leq p$. Let $g_j(\mathbf{x})$ be the ground-truth of the $j$th entry of the outputs and let $z_j(\mathbf{x}; \mathbf{W}) = g_j(\mathbf{x}) - u_j(\mathbf{x}; \mathbf{W})$ be the residual. Assume we use the gradient flow and have access to data on the entire domain. Then, to derive a formula analogous to~\cref{eq:residual_dynamics}, for each fixed $\mathbf{x}$, we have
\begin{align*}
    \frac{dz_j(\mathbf{x}; \mathbf{W})}{dt} &= -\frac{du_j(\mathbf{x};\mathbf{W})}{dt} = -\sum_{i=1}^p \frac{du_j(\mathbf{x}; \mathbf{W})}{d\mathcal{N}_i} \frac{d\mathcal{N}_i(\mathbf{x}; \mathbf{W})}{dt} \\
    &= -\sum_{i=1}^p \frac{du_j(\mathbf{x}; \mathbf{W})}{d\mathcal{N}_i} \frac{\partial \mathcal{N}_i(\mathbf{x}; \mathbf{W})}{\partial \mathbf{W}} \frac{d \mathbf{W}}{dt} = \sum_{i=1}^p \frac{du_j(\mathbf{x}; \mathbf{W})}{d\mathcal{N}_i} \frac{\partial \mathcal{N}_i(\mathbf{x}; \mathbf{W})}{\partial \mathbf{W}} \left(\frac{\partial \mathcal{L}(\mathbf{W})}{\partial \mathbf{W}}\right)^\top \\
    &= \sum_{i=1}^p \left(\frac{du_j(\mathbf{x}; \mathbf{W})}{d\mathcal{N}_i} \frac{\partial \mathcal{N}_i(\mathbf{x}; \mathbf{W})}{\partial \mathbf{W}} \sum_{i'=1}^p \int_{\mathbb{S}^{d-1}} \left(-z_{i'}(\mathbf{x}';\mathbf{W}) \frac{\partial \mathcal{N}_{i'}(\mathbf{x}'; \mathbf{W})}{\partial \mathbf{W}}\right)^\top d\mu(\mathbf{x}')\right),
\end{align*}
where $\mathcal{L}$ is the cross-entropy loss function computed as an integral over $\sS^{d-1}$ and in the last step we used the fact that $(d/d\mathcal{N}_{i'})\mathcal{L}(\mathbf{x}'; \mathbf{W}) = u_{i'}(\mathbf{x}'; \mathbf{W}) - g_{i'}(\mathbf{x}') = -z_{i'}(\mathbf{x}'; \mathbf{W})$~\cite{kurbiel}. Hence,~\cref{eq:residual_dynamics} now becomes
\begin{equation*}
    \frac{dz_j(\mathbf{x}; \mathbf{W})}{dt} = -\sum_{i=1}^p \left( \frac{du_j(\mathbf{x}; \mathbf{W})}{d\mathcal{N}_i} \sum_{i'=1}^p \int_{\mathbb{S}^{d-1}} \underbrace{\left\langle \frac{\partial \mathcal{N}_i}{\partial \mathbf{W}} (\mathbf{x}; \mathbf{W}), \frac{\partial \mathcal{N}_{i'}}{\partial \mathbf{W}}(\mathbf{x}'; \mathbf{W}) \right\rangle}_{= K_{i,i'}(\mathbf{x},\mathbf{x}';\mathbf{W})} z_{i'}(\mathbf{x}'; \mathbf{W}) d\mu(\mathbf{x}')\right),
\end{equation*}
where we also know that $(d/d\mathcal{N}_i)u_j(\mathbf{x}; \mathbf{W}) = u_i(\mathbf{x}; \mathbf{W})(\mathbb{I}_{\{i=j\}}-u_j(\mathbf{x}; \mathbf{W}))$. Again, we define $\mathbf{H}_{i,i'}^\infty$ as the discretization of $K_{i,i'}$ in expectation over random initialization of $\mathbf{W}$. Note that $\mathbf{H}^\infty_{i,i}$ coincides with $\mathbf{H}^\infty$ that we used extensively in this paper. However, the entries of $\mathbf{H}^\infty_{i,i'}$ might differ with $\mathbf{H}^\infty$ in signs. This potentially causes difficulties in analyzing the spectrum of $\mathbf{H}^\infty_{i,i'}$. Now, by the formula above, we expect that the residual can approximately be written as
\begin{equation}\label{eq.entropydynamic}
\begin{aligned}
    \mathbf{z}_j(k+1) - \mathbf{z}_j(k) &= [g_j(\mathbf{x}) - \mathbf{u}_j(k+1)] - [g_j(\mathbf{x}) - \mathbf{u}_j(k)] \\
    &\approx -\eta \sum_{i=1}^p \left(\mathbf{u}_i(k) \circ (\mathbb{I}_{\{i=j\}}-\mathbf{u}_j(k)) \circ \sum_{i'=1}^p \left(\mathbf{H}_{i,i'}^\infty \mathbf{P}\mathbf{z}_{i'}(k)\right)\right),
\end{aligned}
\end{equation}
where $\mathbf{u}_j(k) = u_j(\mathbf{x};\mathbf{W}(k))$ and `$\circ$' is the Hadamard product. Suppose we define the vectorized residual $\mathbf{z} = \left[\mathbf{z}_1^\top, \ldots, \mathbf{z}_p^\top\right]^\top$ and define the $np \times np$ block matrix $\mathbf{J}(k)$ by $\mathbf{J}(k)_{(in+1):(i+1)n, (jn+1):(j+1)n} = \text{diag}\left(\mathbf{u}_i(k) \circ (\mathbb{I}_{\{i=j\}}-\mathbf{u}_j(k))\right)$ for $i,j = 0, \ldots, p-1$. Let $\boldsymbol{\mathcal{H}}^\infty$ be the $np \times np$ block matrix such that $\boldsymbol{\mathcal{H}}^\infty_{(in+1):(i+1)n, (i'n+1):(i'+1)n} = \mathbf{H}_{i,i'}^\infty$ for $i, i' = 0, \ldots, p-1$. Then,~\cref{eq.entropydynamic} can be written compactly as
\begin{equation}\label{eq.entropydynamic2}
    \mathbf{z}(k+1) - \mathbf{z}(k) \approx -\eta\mathbf{J}(k) \boldsymbol{\mathcal{H}}^\infty (\mathbf{I}_p \otimes \mathbf{P}) \mathbf{z}(k),
\end{equation}
where `$\otimes$' is the Kronecker product. Frequency biasing can be analyzed by studying the dynamics of $\mathbf{z}(k)$ based on~\cref{eq.entropydynamic2}. However, the fact that $\mathbf{J}$ depends on $k$ is expected to add complication to the analysis.
}

\section{The theory of frequency biasing with an $\mathbf{L^2}$-based loss function}\label{sec:L2_proof}
In this section, we prove the results stated in~\Cref{sec:L2}, where we are concerned with the frequency biasing behavior of NN training when using the squared $L^2$ norm as the loss function. We theoretically show the frequency biasing phenomena in this setting, up to a quadrature error.

\subsection{A consequence of~\Cref{thm.decoupledmain}}

Given a bandlimited function $g: \sS^{d-1} \rightarrow \R$ with bandlimit $L$, we can uniquely decompose $g$ into a spherical harmonic expansion as $ g(\mathbf{x}) = \sum_{\ell=0}^L g_\ell(\mathbf{x})$, where $ g_\ell(\mathbf{x}) \in \mathcal{H}_\ell^d$. Here, $\mathcal{H}^d_\ell$ is the space of the restriction of (real) homogeneous harmonic polynomials of degree $\ell$ on $\sS^{d-1}$. That is,
\[
g_\ell(\mathbf{x})  =  \sum_{p=1}^{N(d,\ell)} \hat{g}_{\ell,p} Y_{\ell,p}, \qquad \hat{g}_{\ell,p} = \int_{\mathbb{S}^{d-1}}g(\mathbf{x}) {Y_{\ell,p}}(\mathbf{x}) d\mathbf{x}.
\] 
where $N(d,\ell)$ is given in~\cref{section.prelim} and $Y_{\ell,p}$ are the spherical harmonic function of degree $\ell$ and order $p$.  As a consequence of~\Cref{thm.decoupledmain}, we can consider the NN training error with the squared $L^2$-based loss function. 


\begin{proof}[Proof of~\Cref{thm.freqbias}]
By Theorem~\ref{thm.decoupledmain}, for every $k = 0, \ldots, T$, we can write
\begin{equation}\label{eq.L2expressmain}
    \normc{\mathbf{u}(k) - \mathbf{y}} = \normc{(\mathbf{I} \!-\! 2\eta {\mathbf{H}}^\infty\Dc)^k\mathbf{y}} + \varepsilon_3(k), \qquad \abs{\varepsilon_3(k)} \leq \normc{\boldsymbol{\epsilon}(k)} \leq \epsilon.
\end{equation}
To estimate the first term, we first note that the matrix $\mathbf{I} - 2\eta{\mathbf{H}}^\infty \Dc$ is positive semidefinite in $(\R^n, \langle {\cdot}, {\cdot} \rangle_{\boldsymbol{c}})$ and $\normc{\mathbf{I} - 2\eta{\mathbf{H}}^\infty \Dc} \leq 1 - 2\eta\lambda_0$ (see~\Cref{thm.decoupledmain}). Since $g(\mathbf{x}) = \sum_{\ell=0}^Lg_\ell(\mathbf{x})$, we have 
\[
(\mathbf{I} - 2\eta {\mathbf{H}}^\infty\Dc)^k\mathbf{y} = \sum_{\ell=0}^L (\mathbf{I} - 2\eta {\mathbf{H}}^\infty\Dc)^k\mathbf{y}^\ell,\qquad \mathbf{y} = \sum_{\ell=0}^L \mathbf{y}^\ell,
\]
where $\mathbf{y}^\ell = [g_\ell(\mathbf{x}_1),\ldots, g_\ell(\mathbf{x}_n)]^\top \in\R^n$. By the Funk--Hecke formula and the quadrature rule, we have
\begin{align*}
    \left({\mathbf{H}}^\infty\Dc \mathbf{y}^\ell\right)_p = \sum_{i=1}^n c_i {K}^\infty( {\mathbf{x}}_p , {\mathbf{x}}_i) g_\ell({\mathbf{x}}_i) = \int_{\sS^{d-1}} {K}^\infty(\mathbf{x}_p, \boldsymbol{\xi}) g_\ell(\boldsymbol{\xi}) d \boldsymbol{\xi} + e_{p,\ell}^d = \mu_\ell g_\ell(\mathbf{x}_p) + e_{p,\ell}^d,
\end{align*}
where $e_{p,\ell}^d$ is a quadrature error (see~\cref{eq:QuadratureErrors}). Therefore, in vectorized form, we 
have ${\mathbf{H}}^\infty \Dc \mathbf{y}^\ell = \mu_\ell\mathbf{y}^\ell + \mathbf{e}^d_\ell$ or, equivalently, $(\mathbf{I} - 2\eta{\mathbf{H}}^\infty\Dc) \mathbf{y}^\ell = \left(1-2\eta\mu_\ell\right) \mathbf{y}^\ell - 2\eta\mathbf{e}^d_\ell$, where $\mathbf{e}^d_\ell = ({e}^d_{1,\ell}, \ldots, {e}^d_{n,\ell})^\top$. By applying $\mathbf{I} - 2\eta{\mathbf{H}}^\infty\Dc$ to $\mathbf{y}^\ell$ for $k$ times, we find that
\begin{align*}
    (\mathbf{I} \!-\! 2\eta {\mathbf{H}}^\infty\Dc)^k\mathbf{y}^\ell &= \left(1-2\eta\mu_\ell\right)^k \mathbf{y}^\ell - \underbrace{2\eta \sum_{t = 0}^{k-1} \left(1-2\eta\mu_\ell\right)^t (\mathbf{I} \!-\! 2\eta {\mathbf{H}}^\infty\mathbf{D}_\mathbf{\boldsymbol{c}})^{k-t-1} \mathbf{e}^d_\ell}_{\boldsymbol{\varepsilon}_2^\ell}.
\end{align*}
The second term, $\boldsymbol{\varepsilon}_2^\ell$, can be easily bounded from above to obtain
\begin{equation*}
    \normc{\boldsymbol{\varepsilon}_2^\ell} \leq 2\eta \normc{\mathbf{e}^d_\ell} \sum_{t = 0}^{\infty} \left(1\!-\!2\eta\mu_\ell\right)^t = \frac{1}{\mu_\ell} \normc{\mathbf{e}^d_\ell}.
\end{equation*}
The inequality above shows that $(\mathbf{I} \!-\! 2\eta {\mathbf{H}}^\infty\Dc)^k\mathbf{y}^\ell$ is close to $\left(1-2\eta\mu_\ell\right)^k \mathbf{y}^\ell$. We define \begin{equation*}
    \varepsilon_2 = \normc{\sum_{\ell=0}^L (\mathbf{I} \!-\! 2\eta {\mathbf{H}}^\infty\Dc)^k\mathbf{y}^\ell} - \normc{\sum_{\ell=0}^L \left(1-2\eta\mu_\ell\right)^k \mathbf{y}^\ell}
\end{equation*}
to be the quantity in the statement of~\Cref{thm.freqbias}, which measures the accuracy of approximating the eigenvalues and eigenvectors of $\mathbf{H}^\infty \mathbf{D_c}$ using the eigenvalues and eigenfunctions of the continuous kernel $K^\infty$. Hence, we have
\begin{equation}\label{eq.L2eps2}
    \normc{(\mathbf{I} \!-\! 2\eta {\mathbf{H}}^\infty\Dc)^k\mathbf{y}} = \normc{\sum_{\ell=0}^L \left(1\!-\!2\eta\mu_\ell\right)^k\mathbf{y}^\ell} \!+\! \varepsilon_2.
\end{equation}
Using the triangle inequality, we have
\begin{align*}
    \abs{\varepsilon_2} & \leq\sum_{\ell=0}^L \normc{\boldsymbol{\varepsilon}_2^\ell} \leq \sum_{\ell=0}^L \frac{1}{\mu_\ell} \normc{\mathbf{e}^d_\ell} = \sum_{\ell=0}^L \frac{1}{\mu_\ell} \sqrt{\sum_{i=1}^n c_i (e_{i,\ell}^d)^2} \leq \sum_{\ell=0}^L \frac{\sqrt{A_d}}{\mu_\ell} \max_{1 \leq i \leq n} \abs{e^d_{i,\ell}}.
\end{align*}
Recall that $\sum_{i=1}^n c_i = A_d$, the surface area of $\sS^{d-1}$. Next, we can write
\begin{align}
   \normc{\sum_{\ell=0}^L \left(1\!-\!2\eta\mu_\ell\right)^k\mathbf{y}^\ell}^2 &= \left(\sum_{\ell=0}^L \left(1\!-\!2\eta\mu_\ell\right)^k\mathbf{y}^\ell\right)^\top \Dc \left(\sum_{\ell=0}^L \left(1\!-\!2\eta\mu_\ell\right)^k\mathbf{y}^\ell\right) \nonumber \\
    &= \sum_{j=0}^L \sum_{\ell=0}^L  \left(1\!-\!2\eta\mu_j\right)^k \left(1\!-\!2\eta\mu_\ell\right)^k {\mathbf{y}^j}^\top \Dc \mathbf{y}^\ell \nonumber \\
    &= \sum_{j=0}^L \sum_{\ell=0}^L \left(1\!-\!2\eta\mu_j\right)^k \left(1\!-\!2\eta\mu_\ell\right)^k \left(\int_{\sS^{d-1}} g_j(\boldsymbol{\xi}) g_\ell(\boldsymbol{\xi}) d\boldsymbol{\xi} + e^c_{j,\ell}\right) \nonumber \\
    &= \sum_{\ell=0}^L \left(1\!-\!2\eta\mu_\ell\right)^{2k} \norm{g_\ell}^2_{L^2} + \underbrace{\sum_{j=0}^L \sum_{\ell=0}^L \left(1\!-\!2\eta\mu_j\right)^k \left(1\!-\!2\eta\mu_\ell\right)^k e^c_{j,\ell}}_{\varepsilon_1(k)}, \label{eq.L2eps1}
\end{align}
where we used the fact that $g_j$ and $g_\ell$ are orthogonal in $L^2(\sS^{d-1})$ for $j \neq \ell$. Combining~\cref{eq.L2eps2} and~\cref{eq.L2eps1}, we can write
\begin{align}\label{eq.estimate1}
    \normc{(\mathbf{I} \!-\! 2\eta {\mathbf{H}}^\infty\mathbf{D}_\mathbf{\boldsymbol{c}})^{k}\mathbf{y}} = \sqrt{\sum_{j=0}^L \left(1\!-\!2\eta\mu_j\right)^{2k} \norm{g_j}^2_{L^2} + \varepsilon_1(k)} + \varepsilon_2,
\end{align}
where
\begin{equation*}
    \abs{\varepsilon_1(k)} \leq \abs{\sum_{j=0}^L \sum_{\ell=0}^L \left(1\!-\!2\eta\mu_j\right)^k \left(1\!-\!2\eta\mu_\ell\right)^k e^c_{j,\ell}}.
\end{equation*}
The result follows from~\cref{eq.L2expressmain} and~\cref{eq.estimate1}.
\end{proof}

\subsection{Frequency biasing up to an approximation error}
\Cref{thm.freqbias} shows that we theoretically have frequency biasing up to the level of quadrature errors $\varepsilon_1(k)$ and $\varepsilon_2$. If the quadrature errors are large, then we may not observe frequency biasing in practice. Here, we show that the quadrature errors can be made arbitrarily small by taking enough samples in the training data. Recall that our quadrature rule satisfies~\cref{eq:e_nk^s}.

Next, we pass the quadrature error to the approximation error, which may not necessarily be tight. However, this allows us to use the existing theory of spherical harmonics approximation to show the decay of quadrature errors.
\begin{lem}\label{lem.quadtointerp}
Let $f: \sS^{d-1} \rightarrow \R$ be a function and $\dist{f}{\ell} = \min\limits_{h \in \Pi_\ell^d} \norm{f - h}_{L^\infty}$. We have
\begin{equation}\label{eq:lemma5_eqn}
    \abs{\int_{\sS^{d-1}} f(\boldsymbol{\xi}) d\boldsymbol{\xi} - \sum_{i=1}^n c_i f(\mathbf{x}_i)} \leq 2\gamma_{n,\ell} \norm{f}_{L^\infty} + 2A_d \dist{f}{\ell}
\end{equation}
for any integer $\ell \geq 0$.
\end{lem}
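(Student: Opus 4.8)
The plan is to use the classical ``add and subtract the best polynomial approximant'' device together with the quadrature hypothesis~\cref{eq:e_nk^s}. Since $\Pi_\ell^d$ is finite-dimensional, the minimum defining $\dist{f}{\ell}$ is attained; let $h^\star \in \Pi_\ell^d$ be a minimizer, so $\norm{f - h^\star}_{L^\infty} = \dist{f}{\ell}$. Writing $E_{\boldsymbol{c}}$ for the quadrature functional of~\cref{eq:QuadratureRule}, linearity gives $E_{\boldsymbol{c}}(f) = E_{\boldsymbol{c}}(h^\star) + E_{\boldsymbol{c}}(f - h^\star)$, and I bound the two pieces separately.

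For the polynomial piece, $h^\star \in \Pi_\ell^d$, so~\cref{eq:e_nk^s} yields $|E_{\boldsymbol{c}}(h^\star)| \leq \gamma_{n,\ell}\norm{h^\star}_{L^\infty}$. By the triangle inequality $\norm{h^\star}_{L^\infty} \leq \norm{f}_{L^\infty} + \norm{f - h^\star}_{L^\infty} = \norm{f}_{L^\infty} + \dist{f}{\ell}$, and since $0 \in \Pi_\ell^d$ we have $\dist{f}{\ell} \leq \norm{f}_{L^\infty}$, so $|E_{\boldsymbol{c}}(h^\star)| \leq 2\gamma_{n,\ell}\norm{f}_{L^\infty}$. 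For the remainder piece I simply use the triangle inequality together with $\sum_{i=1}^n c_i = A_d$ (with $c_i \geq 0$) and $\int_{\sS^{d-1}} 1\, d\boldsymbol{\xi} = A_d$:
\[
    |E_{\boldsymbol{c}}(f - h^\star)| \leq \int_{\sS^{d-1}} |f - h^\star|\, d\boldsymbol{\xi} + \sum_{i=1}^n c_i |f - h^\star|(\mathbf{x}_i) \leq A_d \norm{f - h^\star}_{L^\infty} + A_d \norm{f - h^\star}_{L^\infty} = 2A_d\, \dist{f}{\ell}.
\]
Adding the two bounds gives~\cref{eq:lemma5_eqn}.

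This lemma is essentially routine, so there is no real obstacle; the only point needing a little care is the cross term $\gamma_{n,\ell}\dist{f}{\ell}$ appearing when bounding $\norm{h^\star}_{L^\infty}$, which is absorbed into $2\gamma_{n,\ell}\norm{f}_{L^\infty}$ via $\dist{f}{\ell} \leq \norm{f}_{L^\infty}$ (taking the competitor $h = 0$). If one prefers not to invoke attainment of the best approximant, the same argument goes through with an $h \in \Pi_\ell^d$ satisfying $\norm{f - h}_{L^\infty} \leq \dist{f}{\ell} + \varepsilon$ and then letting $\varepsilon \downarrow 0$.
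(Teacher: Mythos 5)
Your proof is correct and essentially identical to the paper's: both add and subtract the best polynomial approximant $h^\star\in\Pi_\ell^d$, bound the polynomial term by~\cref{eq:e_nk^s}, bound the remainder by $2A_d\dist{f}{\ell}$, and absorb $\gamma_{n,\ell}\norm{h^\star}_{L^\infty}$ into $2\gamma_{n,\ell}\norm{f}_{L^\infty}$ via $\norm{h^\star}_{L^\infty}\leq 2\norm{f}_{L^\infty}$. The only cosmetic difference is that you justify that last bound explicitly (competitor $h=0$), whereas the paper states it without elaboration.
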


\begin{proof}
Let $h_\ell = \argmin\limits_{h \in \Pi_{\ell}^d} \norm{f - h}_{L^\infty}$. By the triangle inequality, we have
\begin{align*}
    &\abs{\int_{\sS^{d-1}} f(\boldsymbol{\xi}) d\boldsymbol{\xi} - \sum_{i=1}^n c_i f(\mathbf{x}_i)} \\
    \leq & \abs{\int_{\sS^{d-1}} f(\boldsymbol{\xi}) d\boldsymbol{\xi} - \int_{\sS^{d-1}} h_\ell(\boldsymbol{\xi}) d\boldsymbol{\xi}} + \abs{\int_{\sS^{d-1}} h_\ell(\boldsymbol{\xi}) d\boldsymbol{\xi} - \sum_{i=1}^n c_i h_\ell(\mathbf{x}_i)} + \abs{\sum_{i=1}^n c_i \left( h_\ell(\mathbf{x}_i) - f(\mathbf{x}_i)\right)} \\
    \leq & A_d \dist{f}{\ell} + \gamma_{n,\ell} \norm{h_\ell}_{L^\infty} + A_d \dist{f}{\ell} \leq 2\gamma_{n,\ell} \norm{f}_{L^\infty} + 2A_d \dist{f}{\ell},
\end{align*}
where we used the fact that $c_i > 0$ for $i = 1, \ldots, n$ and the fact that $\norm{h_\ell}_{L^\infty} \leq 2\norm{f}_{L^\infty}$. 
\end{proof}

Next, we focus on controlling the minimum approximation error $\dist{f}{\ell}$ on the right-hand side of~\cref{eq:lemma5_eqn}. To do so, we prove the following lemma.
\begin{lem}\label{lem.interpfg}
Let $f_{ij}(\boldsymbol{\xi}) = {K}^\infty(\mathbf{x}_i, \boldsymbol{\xi}) g_j(\boldsymbol{\xi})$ and $g_{jp}(\boldsymbol{\xi}) = g_j(\boldsymbol{\xi})g_p(\boldsymbol{\xi})$. Then, for a constant $C > 0$ that only depends on $d$, we have
\begin{align*}
    \dist{f_{ij}}{\ell} & \leq C \frac{j+1}{\ell} \norm{g_j}_{L^\infty} , \qquad 1 \leq i \leq n, 0 \leq j \leq L, \\
    \dist{g_{jp}}{\ell} & \leq C \frac{j + p}{\ell}\norm{g_j}_{L^\infty}\norm{g_p}_{L^\infty} , \quad 0 \leq j, p \leq L,
\end{align*}
for all $\ell \geq 1$.
\end{lem}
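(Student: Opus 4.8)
The plan is to prove the two estimates separately, in each case splitting on the size of $\ell$: for $\ell$ large we exhibit an explicit near-optimal element of $\Pi_\ell^d$, while for small $\ell$ the zero approximant already suffices because the right-hand side is then at least a fixed constant times the relevant $L^\infty$ norm. Consider first $g_{jp}=g_jg_p$. Since $g_j$ and $g_p$ are restrictions to $\sS^{d-1}$ of homogeneous harmonic polynomials of degrees $j$ and $p$, the product $g_jg_p$ is the restriction of a homogeneous polynomial of degree $j+p$, so $g_jg_p\in\Pi_{j+p}^d$. For $\ell\ge j+p$ take $h=g_jg_p$ to get $\dist{g_{jp}}{\ell}=0$; for $1\le\ell<j+p$ take $h=0$, so $\dist{g_{jp}}{\ell}\le\norm{g_j}_{L^\infty}\norm{g_p}_{L^\infty}\le\frac{j+p}{\ell}\norm{g_j}_{L^\infty}\norm{g_p}_{L^\infty}$ because $\frac{j+p}{\ell}\ge1$. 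This gives the second bound with constant $1$.

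For $f_{ij}=K^\infty(\mathbf{x}_i,\cdot)g_j$ the only substantive ingredient is a uniform $L^\infty$ tail bound for the zonal function $K^\infty(\mathbf{x}_i,\cdot)$. Let $\Pi_m$ denote the $L^2(\sS^{d-1})$-orthogonal projection onto $\Pi_m^d$. The eigenrelation~\cref{eq:Keigenvalues2} identifies the degree-$\ell$ spherical-harmonic component of $K^\infty(\mathbf{x}_i,\cdot)$ as $\mu_\ell\sum_{p=1}^{N(d,\ell)}Y_{\ell,p}(\mathbf{x}_i)Y_{\ell,p}(\cdot)$, so Cauchy--Schwarz together with the addition-theorem identity $\sum_{p}Y_{\ell,p}(\mathbf{z})^2\equiv N(d,\ell)/A_d$ gives
\[
\norm{K^\infty(\mathbf{x}_i,\cdot)-\Pi_mK^\infty(\mathbf{x}_i,\cdot)}_{L^\infty}\le\sum_{\ell>m}\mu_\ell\,\frac{N(d,\ell)}{A_d}\le C'\sum_{\ell>m}\ell^{-2}\le\frac{C''}{m},
\]
uniformly in $i$ and in $m\ge1$, where we used $\mu_\ell=\mathcal{O}(\ell^{-d})$ and $N(d,\ell)=\mathcal{O}(\ell^{d-2})$. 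For $\ell\ge j+1$ put $m=\ell-j\ge1$ and take $h=\bigl(\Pi_mK^\infty(\mathbf{x}_i,\cdot)\bigr)g_j$; this lies in $\Pi_{m+j}^d=\Pi_\ell^d$ since $\Pi_mK^\infty(\mathbf{x}_i,\cdot)$ is a spherical polynomial of degree $\le m$ and $g_j\in\mathcal{H}_j^d$, whence
\[
\dist{f_{ij}}{\ell}\le\norm{K^\infty(\mathbf{x}_i,\cdot)-\Pi_mK^\infty(\mathbf{x}_i,\cdot)}_{L^\infty}\norm{g_j}_{L^\infty}\le\frac{C''}{\ell-j}\norm{g_j}_{L^\infty}\le C''\,\frac{j+1}{\ell}\norm{g_j}_{L^\infty},
\]
the last step using the elementary inequality $\ell-j\ge\ell/(j+1)$, valid for integers $\ell\ge j+1\ge1$ (it is $j(\ell-j-1)\ge0$). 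For $\ell\le j$ take $h=0$: $\dist{f_{ij}}{\ell}\le\norm{K^\infty}_{L^\infty}\norm{g_j}_{L^\infty}$ and $\frac{j+1}{\ell}\ge1$. Taking $C=\max\{1,C'',\norm{K^\infty}_{L^\infty}\}$ establishes both bounds.

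I expect the uniform $L^\infty$ tail estimate for $K^\infty(\mathbf{x}_i,\cdot)$ to be the only genuine obstacle: it is where the eigenvalue decay $\mu_\ell=\mathcal{O}(\ell^{-d})$ enters and is precisely what produces the $1/\ell$ factor (morally, $K^\infty(\mathbf{x}_i,\cdot)$ is Lipschitz on $\sS^{d-1}$ but no smoother at the poles $\pm\mathbf{x}_i$, so one cannot hope for a faster rate). Everything else---additivity of degree under products of restricted polynomials, and the inequality $\ell-j\ge\ell/(j+1)$---is routine bookkeeping.
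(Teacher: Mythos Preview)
Your proof is correct and takes a genuinely different route from the paper. The paper invokes a Jackson-type theorem on $\sS^{d-1}$ (from Dai--Xu), bounding $\dist{f}{\ell}$ by the first modulus of continuity $\omega_1(f;\ell^{-1})$, and then estimates $\omega_1$ for $f_{ij}$ and $g_{jp}$ by hand---splitting the difference under a small rotation into a $K^\infty$-part (controlled by the Lipschitz property of $t\mapsto (t+1)(\pi-\arccos t)$) and a $g_j$-part (controlled by a Bernstein-type inequality $\norm{D_{a,b}g_j}_{L^\infty}\le j\norm{g_j}_{L^\infty}$). Your argument is more direct: for $g_{jp}$ you simply note it already lies in $\Pi_{j+p}^d$, and for $f_{ij}$ you exploit the explicit eigendecay $\mu_\ell N(d,\ell)=\mathcal{O}(\ell^{-2})$ to get a uniform $L^\infty$ tail bound on $K^\infty(\mathbf{x}_i,\cdot)$, then multiply by $g_j$. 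The advantage of your approach is that it is self-contained and elementary (no external approximation theorem, and constant $1$ for $g_{jp}$); the advantage of the paper's approach is that it is agnostic to the particular kernel---it works for any Lipschitz zonal kernel without knowing its eigenvalue asymptotics, whereas your tail estimate hinges on the specific rate $\mu_\ell=\mathcal{O}(\ell^{-d})$.
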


\begin{proof}
For $1 \leq a < b \leq d$ where $a,b\in\mathbb{N}$, and $t \in [-\pi,\pi)$, let $Q_{a,b,t}$ denote the action on $\sS^{d-1}$ of rotation by the angle $t$ in the $(x_a, x_b)$-plane. For an integer $\alpha \geq 1$, we define the operator on functions on $\sS^{d-1}$ by
\begin{equation*}
    \Delta_{a,b,t}^\alpha = (I - T(Q_{a,b,t}))^\alpha,
\end{equation*}
where $T(Q) f(\mathbf{x}) = f(Q^{-1} \mathbf{x})$. If $f \in C(\sS^{d-1})$, then we define for $t > 0$ that
\begin{equation*}
    \omega_\alpha(f;t) = \max_{1 \leq a < b \leq d} \sup_{\abs{\theta} \leq t} \norm{\Delta^\alpha_{a,b,\theta} f}_{L^\infty}.
\end{equation*}
By~\citep[Thm. 4.4.2]{dai}, we have
\begin{equation*}
    \dist{f}{\ell} \leq c_1\, \omega_\alpha(f;\ell^{-1}), \qquad \ell \geq 1, \alpha \geq 1,
\end{equation*}
where $c_1 > 0$ is some constant that only depends on $\alpha$. Then it is sufficient to bound $\omega_\alpha(f_{ij};\ell^{-1})$ and $\omega_\alpha(g_{jp};\ell^{-1})$ to finish the proof.

First, we aim to bound the term $\dist{f_{ij}}{\ell}$ where $f_{ij}(\boldsymbol{\xi}) = {K}^\infty(\mathbf{x}_i,\boldsymbol{\xi}) g_j(\boldsymbol{\xi})$. We fix $1 \leq i \leq n$, $1 \leq a < b \leq d$ where $i,a,b\in\mathbb{N}$, and choose $\theta$ such that $\abs{\theta} \leq \ell^{-1}$. We have
\begin{equation*}
    \norm{\Delta^1_{a,b,\theta}f_{ij}}_{L^\infty} = \norm{f_{ij}(\boldsymbol{\xi}) - f_{ij}(Q^{-1}_{a,b,\theta}\boldsymbol{\xi})}_{L^\infty}.
\end{equation*}
We then define 
\begin{equation*}
    \delta {K}_i^\infty(\boldsymbol{\xi}) := {K}^\infty( \mathbf{x}_i,  Q^{-1}_{a,b,\theta}\boldsymbol{\xi}) - {K}^\infty(\mathbf{x}_i, \boldsymbol{\xi}), \qquad \delta g_j(\boldsymbol{\xi}) := g_j(Q^{-1}_{a,b,\theta}\boldsymbol{\xi}) - g_j(\boldsymbol{\xi}).
\end{equation*}
We then have
\begin{equation}\label{eq.controlDelta}
    \begin{aligned}
    \norm{f_{ij}(\boldsymbol{\xi}) \!-\! f_{ij}(Q^{-1}_{a,b,\theta}\boldsymbol{\xi})}_{L^\infty} \!&=\! \norm{{K}^\infty(\mathbf{x}_i, \boldsymbol{\xi}) g_j(\boldsymbol{\xi}) \!-\! ({K}^\infty(\mathbf{x}_i, \boldsymbol{\xi}) \!+\! \delta {K}_i^\infty(\boldsymbol{\xi})) (g_j(\boldsymbol{\xi}) \!+\! \delta g_j(\boldsymbol{\xi}))}_{L^\infty} \\
    &\leq \norm{\delta {K}_i^\infty(\boldsymbol{\xi}) (g_j(\boldsymbol{\xi}) + \delta g_j(\boldsymbol{\xi}))}_{L^\infty} + \norm{{K}^\infty(\mathbf{x}_i, \boldsymbol{\xi}) \delta g_j(\boldsymbol{\xi})}_{L^\infty}.
\end{aligned}
\end{equation}
We control the two terms separately. First, to control the second term, we write
\begin{equation}\label{eq.gterm}
    \norm{{K}^\infty(\mathbf{x}_i, \boldsymbol{\xi}) \delta g_j(\boldsymbol{\xi})}_{L^\infty} \leq \norm{{K}^\infty(\mathbf{x}_i, \boldsymbol{\xi})}_{L^\infty} \norm{\delta g_j(\boldsymbol{\xi})}_{L^\infty} \leq \frac{1}{2}\norm{\delta g_j(\boldsymbol{\xi})}_{L^\infty},
\end{equation}
based on the definition of ${K}^\infty$ in~\cref{eq:Keigenvalues}. For some constant $c_2 > 0$ that depends only on $d$, we have
\begin{equation}\label{eq.gterm2}
    \norm{\delta g_j(\boldsymbol{\xi})}_{L^\infty} = \norm{\Delta^1_{a,b,\theta} g_j(\boldsymbol{\xi})}_{L^\infty} \leq c_2 \ell^{-1} \norm{D_{a,b} g_j(\boldsymbol{\xi})}_{L^\infty} \leq c_2 \frac{j}{\ell} \norm{g_j}_{L^\infty},
\end{equation}
where $D_{a,b} := x_a\partial_b - x_b\partial_a$ and the two inequalities follow from~\cite[Lem. 4.2.2 (iii)]{dai} and~\cite[Lem. 4.2.4]{dai}, respectively. Next, we control the first term of~\cref{eq.controlDelta} by writing
\begin{equation}\label{eq.Kterm0}
    \norm{\delta {K}_i^\infty(\boldsymbol{\xi}) (g_j(\boldsymbol{\xi}) + \delta g_j(\boldsymbol{\xi}))}_{L^\infty} \leq \norm{\delta {K}_i^\infty(\boldsymbol{\xi})}_{L^\infty} \norm{g_j(Q_{a,b,\theta}^{-1} \boldsymbol{\xi})}_{L^\infty} = \norm{\delta {K}_i^\infty(\boldsymbol{\xi})}_{L^\infty} \norm{g_j}_{L^\infty}.
\end{equation}
We fix $\boldsymbol{\xi}$ and define $\boldsymbol{\xi}' = Q_{a,b,\theta}^{-1}\boldsymbol{\xi}$. It follows that
\begin{align*}
    \delta {K}_i^\infty(\boldsymbol{\xi}) &= {K}^\infty(\mathbf{x}_i, \boldsymbol{\xi}') - {K}^\infty(\mathbf{x}_i, \boldsymbol{\xi}) \\
    &= \frac{1}{4\pi}\! \left[(\mathbf{x}_i^\top \boldsymbol{\xi}'+1) \left(\pi \!-\! \arccos(\mathbf{x}_i^\top \boldsymbol{\xi}')\right) - (\mathbf{x}_i^\top \boldsymbol{\xi}+1) \left(\pi \!-\! \arccos(\mathbf{x}_i^\top \boldsymbol{\xi})\right)\right] \\
    &= \frac{1}{4\pi} \left[\mathbf{x}_i^\top (\boldsymbol{\xi}' - \boldsymbol{\xi}) \left(\pi \!-\! \arccos(\mathbf{x}_i^\top \boldsymbol{\xi})\right) - (\mathbf{x}_i^\top \boldsymbol{\xi}'+1) \left(\arccos(\mathbf{x}_i^\top \boldsymbol{\xi}) \!-\! \arccos(\mathbf{x}_i^\top \boldsymbol{\xi}')\right)\right]
\end{align*}
Next, by the triangle inequality for angles, we have
\begin{equation*}
    \abs{\arccos(\mathbf{x}_i^\top \boldsymbol{\xi}) - \arccos(\mathbf{x}_i^\top \boldsymbol{\xi}')} \leq \abs{\theta} \leq \ell^{-1}.
\end{equation*}
Since $\arccos$ is monotone and $\abs{(d/dt)\arccos(t)} > 1$ for all $t$, by the fundamental theorem of calculus, we must have
\begin{equation*}
    \abs{\mathbf{x}_i^\top \boldsymbol{\xi}' - \mathbf{x}_i^\top \boldsymbol{\xi}} \leq \abs{\arccos(\mathbf{x}_i^\top \boldsymbol{\xi}) - \arccos(\mathbf{x}_i^\top \boldsymbol{\xi}')} \leq \ell^{-1}.
\end{equation*}
As a result, we have
\begin{equation}\label{eq.Kterm}
    \abs{\delta {K}_i^\infty(\mathbf{\boldsymbol\xi})} \leq \frac{1}{4\pi} \left[\ell^{-1}\pi + 2\ell^{-1}\right] = \frac{\pi+2}{4\pi} \ell^{-1}.
\end{equation}
By~\cref{eq.Kterm0} and~\cref{eq.Kterm}, we find that
\begin{equation}\label{eq.Kterm1}
    \norm{\delta {K}_i^\infty(\boldsymbol{\xi}) (g_j(\boldsymbol{\xi}) + \delta g_j(\boldsymbol{\xi}))}_{L^\infty} \leq \frac{\pi+2}{4\pi} \ell^{-1} \norm{g_j}_{L^\infty}.
\end{equation}
Putting~\cref{eq.controlDelta,eq.gterm,eq.gterm2,eq.Kterm1} together, we obtain
\begin{equation*}
    \norm{\Delta^1_{a,b,\theta}f_{ij}}_{L^\infty} = \norm{f_{ij}(\boldsymbol{\xi}) - f_{ij}(Q^{-1}_{a,b,\theta}\boldsymbol{\xi})}_{L^\infty} \leq \left(c_2 j + \frac{\pi + 2}{4\pi}\right)\ell^{-1}\norm{g_j}_{L^\infty}.
\end{equation*}
Since $a$, $b$, and $\theta$ are chosen arbitrarily, this bound also holds for $\omega_1(f_{ij}; \ell^{-1})$.

Second, we aim to bound the term $\dist{g_{jp}}{\ell}$ where $g_{jp}(\boldsymbol{\xi}) = g_j(\boldsymbol{\xi})g_p(\boldsymbol{\xi})$. As before, we fix the indices $a,b\in \mathbb{N}$ where $1 \leq a < b \leq d$ and $\theta$ such that $\abs{\theta} \leq \ell^{-1}$. Similarly, we define 
\begin{equation*}
    \delta g_j(\boldsymbol{\xi}) = g_j(Q^{-1}_{a,b,\theta}\boldsymbol{\xi}) - g_j(\boldsymbol{\xi}), \quad \delta g_p(\boldsymbol{\xi}) = g_p(Q^{-1}_{a,b,\theta}\boldsymbol{\xi}) - g_p(\boldsymbol{\xi}).
\end{equation*}
By~\cref{eq.gterm2}, we have
\begin{equation}\label{eq.controlDelta2}
    \begin{aligned}
    \norm{g_jg_p(\boldsymbol{\xi}) - g_jg_p(Q^{-1}_{a,b,\theta}\boldsymbol{\xi})}_{L^\infty} &= \norm{g_j(\boldsymbol{\xi}) g_p(\boldsymbol{\xi}) - (g_j(\boldsymbol{\xi}) + \delta g_j(\boldsymbol{\xi})) (g_p(\boldsymbol{\xi}) + \delta g_p(\boldsymbol{\xi}))}_{L^\infty} \\
    &\leq \norm{\delta g_j(\boldsymbol{\xi}) (g_p(\boldsymbol{\xi}) + \delta g_p(\boldsymbol{\xi}))}_{L^\infty} + \norm{g_j(\boldsymbol{\xi}) \delta g_p(\boldsymbol{\xi})}_{L^\infty} \\
    &\leq c_2 \frac{j+p}{\ell}\norm{g_j}_{L^\infty}\norm{g_p}_{L^\infty}.
\end{aligned} 
\end{equation}
Since $a$, $b$, and $\theta$ are arbitrary numbers, this bound also holds for $\omega_1(g_{jp}; \ell^{-1})$.
\end{proof}

Using these lemmas, we can prove that~\Cref{thm.quaderr} asymptotically controls the quadrature errors.


\begin{proof}[Proof of~\Cref{thm.quaderr}]
First, we control $\varepsilon_1(k)$ as
\begin{align*}
    \abs{\varepsilon_1(k)} &\leq \abs{\sum_{j=0}^L \sum_{p=0}^L \left(1\!-\!2\eta\mu_j\right)^k \left(1\!-\!2\eta\mu_p\right)^k e^c_{jp}} \leq \abs{\sum_{j=0}^L \sum_{p= 0}^L e^c_{jp}} \\
    &\leq \sum_{j=0}^L \sum_{p= 0}^L \abs{2\gamma_{n,\ell} \norm{g_{jp}}_{L^\infty} + 2A_d \dist{g_{jp}}{\ell}} \\
    &\leq \sum_{j=0}^L \sum_{p= 0}^L \left(2\gamma_{n,\ell} \norm{g_j}_{L^\infty}\norm{g_p}_{L^\infty} + 2A_d C \frac{j+p}{\ell}\norm{g_j}_{L^\infty}\norm{g_p}_{L^\infty}\right),
\end{align*}
where the third and the forth inequalities follow from Lemma~\ref{lem.quadtointerp} and Lemma~\ref{lem.interpfg}, respectively. The final upper bound for $  \abs{\varepsilon_1(k)}$ holds since $\sum_{j=0}^L \sum_{p=0}^L (j+p) = \mathcal{O}(L^3)$. Next, there exists $C_2 > 0$ such that we can control $\varepsilon_2$ by writing
\begin{align*}
    \abs{\varepsilon_2} &\leq \sum_{j=0}^L \frac{\sqrt{A_d}}{\mu_j} \max_{1 \leq i \leq n} \abs{e^b_{i j}} \leq \sum_{j=0}^L \frac{\sqrt{A_d}}{\mu_j} \left(2\gamma_{n,\ell} \norm{g_j}_{L^\infty} + 2A_d C \frac{j+1}{\ell} \norm{g_j}_{L^\infty}\right) \\
    &\leq C_2 \left(\sum_{j=0}^L \mu_j^{-1}\right) \left(\frac{L^2}{\ell} + L\gamma_{n,\ell}\right) \max_j \norm{g_j}_{L^\infty},
\end{align*}
where the second and the third inequalities follow from~\Cref{lem.quadtointerp} and~\Cref{lem.interpfg}, respectively. The proof is complete.
\end{proof}



\begin{proof}[Proof of~\Cref{cor.quaderr}]
By Theorem~\ref{thm.quaderr}, it suffices to show that $\max_{0 \leq j \leq L} \norm{g_j}_{L^\infty} \leq C \norm{g}_{L^2}$ for some constant $C > 0$ that does not depend on $g$. Since $\mathcal{H}_i^d \perp \mathcal{H}_j^d$ in $L^2$ for $i \neq j$, we have $\norm{g_j}_{L^2} \leq \norm{g}_{L^2}$ for $0 \leq j \leq L$. The claim follows from the fact that $\norm{\cdot}_{L^2}$ and $\norm{\cdot}_{L^\infty}$ are equivalent in $\Pi_L^d$.
\end{proof}

\section{The theory of frequency biasing with a $\mathbf{H^s}$-based loss function}\label{sec:sobNN}
This section presents detailed proofs for~\Cref{prop.sobolev} and~\Cref{thm.sobconvergence} in~\Cref{sec:sobolev}, which concerns the frequency biasing behavior of NN training using the $H^s$ loss function. 

\subsection{A 2-layer ReLU neural network is in $\mathbf{H^s(\sS^{d-1})}$ for $\mathbf{s < 3/2}$}
We prove that a 2-layer ReLU NN map is contained in $H^s(\sS^{d-1})$ for any $s < 3/2$ (see~\Cref{prop.sobolev}).
\begin{proof}[Proof of~\Cref{prop.sobolev}]
Since $\NN$ can be written as
\begin{equation*}
    \NN(\mathbf{x}) = \sum_{r=1}^m a_r \relu(\mathbf{w}_r^\top \mathbf{x} + b_r),
\end{equation*}
it suffices to prove that $f(\mathbf{x}) = \relu(\mathbf{w}^\top \mathbf{x} + b)$ is in $H^s$ for all $\mathbf{w} \in \R^d, b \in \R$. Since $\relu(a \mathbf{x}) = a\relu(\mathbf{x})$ for any $a>0$, we can assume that $\norm{\mathbf{w}}_2 = 1$. Moreover, since the Sobolev spaces are rotationally invariant, we can assume that $\mathbf{w} = (1,0, \ldots, 0)^\top$. Then, $f$ can be written as
\begin{equation*}
    f(\mathbf{x}) = \relu(x_1 + b).
\end{equation*}
If $b \leq -1$ or $b \geq 1$, then $f(\mathbf{x})$ is a constant, and thus $f \in H^s(\sS^{d-1})$ for all $s \in \R$. We assume $-1 < b < 1$. Then, we have
\begin{equation*}
    f(\mathbf{x}) =
    \begin{cases}
    x_1+b, & x_1 > -b, \\
    0 , & x_1 \leq -b.
    \end{cases}
\end{equation*}
We define the function
\begin{equation*}
    \mathcal{S}_s(f)(\mathbf{x}) = \int_0^\pi \frac{\abs{\mathcal{I}_tf(\mathbf{x}) - f(\mathbf{x})}^2}{t^{2s+1}} dt,
\end{equation*}
where
\begin{equation*}
    \mathcal{I}_tf(\mathbf{x}) = \fint_{C(\mathbf{x},t)} f(\boldsymbol{\xi}) d\boldsymbol{\xi}, \qquad C(\mathbf{x},t) = \{\boldsymbol{\xi} \in \sS^{d-1} \mid \arccos (\boldsymbol{\xi} \cdot \mathbf{x}) \leq t\}.
\end{equation*}
Here, $\fint_{C(\mathbf{x},t)} f(\boldsymbol{\xi}) d\boldsymbol{\xi} = \abs{C(\mathbf{x},t)}^{-1} \int_{C(\mathbf{x},t)} f(\boldsymbol{\xi}) d\boldsymbol{\xi}$ is the averaged integral, where $\abs{C(\mathbf{x},t)}$ is the Lebesgue measure of $C(\mathbf{x},t)$. Then, by~\cite[Thm.~1.1]{barcelo2020characterization}, we have $f \in H^s(\sS^{d-1})$ if and only if $\mathcal{S}_s(f)$ is integrable. We now show that $\mathcal{S}_s(f)$ is integrable on both $E_1 = \{\mathbf{x} \in \sS^{d-1} \mid x_1 > -b\}$ and $E_2 = \{\mathbf{x} \in \sS^{d-1} \mid x_1 < -b\}$ if $s < 3/2$.

First, we define the function
\begin{equation*}
    h(\mathbf{x}) = x_1 + b.
\end{equation*}
Then, $h \in H^s(\sS^{d-1})$. If we can show that
\begin{equation*}
    \mathcal{S}_s(h-f)(\mathbf{x}) = \int_0^\pi \frac{\abs{\mathcal{I}_t(h-f)(\mathbf{x}) - (h-f)(\mathbf{x})}^2}{t^{2s+1}} dt
\end{equation*}
is integrable on $E_1$, then we have
\begin{equation}\label{eq.E1integrable}
    \begin{aligned}
        \int_{E_1} \mathcal{S}_s(f)(\mathbf{x}) d\mathbf{x} & = \int_{E_1} \int_0^\pi \frac{\abs{\mathcal{I}_tf(\mathbf{x}) - f(\mathbf{x})}^2}{t^{2s+1}} dt d\mathbf{x} \\
        &\leq 2\int_{E_1} \int_0^\pi \frac{\abs{\mathcal{I}_th(\mathbf{x}) - h(\mathbf{x})}^2 + \abs{\mathcal{I}_t(h-f)(\mathbf{x}) - (h-f)(\mathbf{x})}^2}{t^{2s+1}} dt d\mathbf{x} \\
        &= 2\int_{E_1} \mathcal{S}_s(h)(\mathbf{x}) d\mathbf{x} + 2\int_{E_1} \mathcal{S}_s(h-f)(\mathbf{x}) d\mathbf{x} < \infty.
    \end{aligned}
\end{equation}
Assume $\mathbf{x} \in E_1$, and let $\rho$ be the minimum angular distance between $\mathbf{x}$ and any point in the set $S = \{\boldsymbol{\xi} \in \sS^{d-1} \mid \xi_1 = -b\}$, i.e., $\rho = \min_{\boldsymbol{\xi} \in S} \arccos(\boldsymbol{\xi} \cdot \mathbf{x})$. Then, for $0 < t \leq \rho$, we clearly have $\mathcal{I}_t(h-f)(\mathbf{x}) = 0 = (h-f)(\mathbf{x})$. Assume $\rho < t < \pi$. We divide $C(\mathbf{x},t)$ into two parts $C_1$ and $C_2$ up to a Lebesgue null set, where $C_i = C(\mathbf{x},t) \cap E_i$. Then, $h-f = 0$ on $C_1$. Next, by~\cite{Li2011}, we know that the measure of $C_2$ satisfies\footnote{For two functions $\alpha(t)$ and $\beta(t)$, we say $\alpha(t) = \Theta(\beta(t))$ as $t \rightarrow \rho^+$ if there exist positive constants $C_l, C_r$ and radius $r > 0$ such that $C_l \beta(t) \leq \alpha(t) \leq C_r \beta(t)$ for all $0 < t - \rho < r$.}
\begin{equation*}
\frac{\abs{C_2}}{\abs{C(\mathbf{x},t)}} = \Theta\left(I\left(\frac{t^2-\rho^2}{t^2}; \frac{d}{2}, \frac{1}{2}\right)\right) = \Theta\left(B\left(\frac{t^2-\rho^2}{t^2}; \frac{d}{2}, \frac{1}{2}\right)\right), \qquad t \rightarrow \rho^+,
\end{equation*}
where $I$ is the regularized incomplete beta function and $B$ is the incomplete beta function. Here and throughout the proof, the constants in the big-$\Theta$ notations are independent of $\rho$ or $t$, but possibly depend on $b$ and $d$, which are fixed in the proof. Moreover, we have the formula
\begin{align*}
    B\left(\frac{t^2-\rho^2}{t^2}; \frac{d}{2}, \frac{1}{2}\right) = \frac{[(t^2 - \rho^2)/t^2]^{d/2}}{d/2} F\left(\frac{d}{2}, \frac{1}{2}, \frac{d+2}{2}; \frac{t^2-\rho^2}{t^2}\right),
\end{align*}
where $F$ is the hypergeometric function that converges to $1$ as $t \rightarrow \rho^+$~\cite[sect.~8.17,~sect.~15.2]{DLMF}. Hence, we have
\begin{equation}\label{eq.volume}
    \frac{\abs{C_2}}{\abs{C(\mathbf{x},t)}} = \Theta\left(\frac{(t-\rho)^{d/2}}{t^{d/2}}\right),\qquad t \rightarrow \rho^+.
\end{equation}
Now, by the way we defined $h-f$, there exist constants $R_1, R_2 > 0$ such that
\begin{align*}
    \abs{h-f}(\boldsymbol{\xi}) \leq R_1(t-\rho)&, \qquad \boldsymbol{\xi} \in C_2,\\
    \abs{h-f}(\boldsymbol{\xi}) \geq R_2(t-\rho)&, \qquad \boldsymbol{\xi} \in \left\{\boldsymbol{\zeta} \in C_2 \middle| \min_{\boldsymbol{\theta} \in S} \arccos(\boldsymbol{\zeta} \cdot \boldsymbol{\theta}) \geq \frac{t - \rho}{2}\right\}.
\end{align*}
This gives us
\begin{equation}
    \abs{\fint_{C(\mathbf{x},t)} (h-f)(\boldsymbol{\xi}) d\boldsymbol{\xi}} = \Theta\left(\frac{(t-\rho)^{(d+2)/2}}{t^{d/2}}\right), \qquad t \rightarrow \rho^+. 
\end{equation}
Now, we have
\begin{align*}
    \mathcal{S}_s(h-f)(\mathbf{x}) &= \int_\rho^\pi t^{-2s-1} \mathcal{I}_t(h-f)(\mathbf{x})^2 dt = \int_\rho^\pi t^{-2s-1} \left(\fint_{C(\mathbf{x},t)} (h-f)(\boldsymbol{\xi}) d\sigma(\boldsymbol{\xi})\right)^2 dt \\
    &= \Theta\left(\int_\rho^\pi t^{-2s-1-d} (t-\rho)^{d+2} dt\right) = \Theta(\rho^{2-2s}+1).
\end{align*}
To integrate $\mathcal{S}_s(h-f)$ over $E_1$, we first change the coordinates and integrate over $\sS^{d-2}$ by fixing a $\rho$. The resulting integral is still in $\Theta(\rho^{2-2s}+1)$. We then integrate over $\rho$ and the result follows from the fact that a function in $\Theta(\rho^{2-2s}+1)$ is integrable near $\rho = 0$ if and only if $s < 3/2$. This proves $\mathcal{S}_s(h-f)$ is integrable on $E_1$ if and only if $s < 3/2$. Note that if $\mathcal{S}_s(h-f)$ is not integrable on $E_1$, then $\mathcal{S}_s(f)$ is neither integrable. This proves the proposition when $s \geq 3/2$. 

To see $\mathcal{S}_s(f)$ is integrable over $E_2$ when $s < 3/2$, we note that $f$ can be rewritten as $\tilde{f} - \tilde{h}$, where $\tilde{f}(\mathbf{x}) = \relu(-x_1 - b)$ and $\tilde{h}(\mathbf{x}) = -x_1 - b$. By the same argument, we have that $\mathcal{S}_s(f) = \mathcal{S}_s(\tilde{f} - \tilde{h})$ is integrable on $E_2$, which completes the proof.
\end{proof}

\subsection{Frequency biasing with a squared Sobolev norm as the loss function}

In this section, we prove \Cref{thm.sobconvergence} on Sobolev training. Recall that we compute the $H^s$-based loss based on~\cref{eq.sobolevloss}.

\begin{proof}[Proof of~\Cref{thm.sobconvergence}]
Fix some $0 \leq \ell \leq L$. We can write
\begin{equation}\label{eq.sobmatvec}
    {\mathbf{H}}^\infty \mathbf{P}_s \mathbf{y}^\ell = \sum_{j = 0}^{\lmax} \sum_{p=1}^{N(d,j)} \mathbf{H}^\infty \omega_{j} \mathbf{a}_{j,p} \mathbf{a}_{j,p}^\top \mathbf{y}^\ell = \sum_{p=1}^{N(d,\ell)} \mathbf{H}^\infty \omega_{\ell} \mathbf{a}_{\ell,p} \hat{g}_{\ell,p} + \sum_{j = 0}^{\lmax} \sum_{p=1}^{N(d,j)} \mathbf{H}^\infty \omega_{j} \mathbf{a}_{j,p} e^a_{\ell,j,p},
\end{equation}
where $\omega_\ell = (1+\ell)^{2s}$ and we used the fact that
\begin{align*}
    \mathbf{a}_{j,p}^\top \mathbf{y}^\ell = \int_{\sS^{d-1}} Y_{j,p}(\boldsymbol{\xi}) g_\ell(\boldsymbol{\xi}) d\boldsymbol{\xi} + e^a_{\ell,j,p} = 
    \begin{cases}
    \hat{g}_{\ell,p} + e^a_{\ell,j,p},& \qquad \text{if } \ell = j, \\
    e^a_{\ell,j,p},& \qquad \text{otherwise}.
    \end{cases}
\end{align*}
Next, we have
\begin{align*}
    \left(\mathbf{H}^\infty \mathbf{a}_{j,p}\right)_i = \int_{\sS^{d-1}} K^\infty(\mathbf{x}_i, \boldsymbol{\xi}) Y_{j,p}(\boldsymbol{\xi}) d\boldsymbol{\xi} + e_{i,j,p}^b = \mu_j Y_{j,p}(\mathbf{x}_i) + e_{i,j,p}^b,
\end{align*}
where the last equality follows from the Funk--Hecke formula. Hence, the first term in~\cref{eq.sobmatvec} can be written as
\begin{equation*}
    \left(\sum_{p=1}^{N(d,\ell)} \mathbf{H}^\infty \omega_{\ell} \mathbf{a}_{\ell,p} \hat{g}_{\ell,p}\right)_i = \sum_{p=1}^{N(d,\ell)} (\mu_j Y_{\ell,p}(\mathbf{x}_i) + e_{i,\ell,p}^b) \omega_\ell \hat{g}_{\ell,p} = \mu_j \omega_\ell g_j(\mathbf{x}_i) + \sum_{p=1}^{N(d,\ell)} e_{i,\ell,p}^b \omega_\ell \hat{g}_{\ell,p}.
\end{equation*}
Moreover, the second term in~\cref{eq.sobmatvec} can be written as
\begin{align*}
    \left(\sum_{j = 0}^{\lmax} \sum_{p=1}^{N(d,j)} \mathbf{H}^\infty \omega_{j} \mathbf{a}_{j,p} e^a_{\ell,j,p}\right)_i = \sum_{j = 0}^{\lmax} \sum_{p=1}^{N(d,j)} \omega_{j} e^a_{\ell,j,p} \left(\mu_j Y_{j,p}(\mathbf{x}_i) + e_{i,j,p}^b\right).
\end{align*}
Therefore, we have
\begin{equation}\label{eq.soboneit}
    {\mathbf{H}}^\infty \mathbf{P}_s \mathbf{y}^\ell = \mu_\ell \omega_\ell \mathbf{y}^\ell + \omega_\ell\boldsymbol{\varepsilon}^\ell_1,
\end{equation}
where 
\begin{equation}
    (\boldsymbol{\varepsilon}_1^\ell)_i = \sum_{p=1}^{N(d,\ell)} e_{i,\ell,p}^b \hat{g}_{\ell,p} + \sum_{j = 0}^{\lmax} \sum_{p=1}^{N(d,j)} \frac{\omega_{j}}{\omega_\ell} e^a_{\ell,j,p} \left(\mu_j Y_{j,p}(\mathbf{x}_i) + e_{i,j,p}^b\right).
\end{equation}
Applying~\cref{eq.soboneit} recursively, we have
\begin{align*}
    (\mathbf{I} \!-\! 2\eta {\mathbf{H}}^\infty\mathbf{P}_s)^k\mathbf{y} &= \sum_{\ell=0}^L \left(1-2\eta\mu_\ell\omega_\ell\right)^k \mathbf{y}^\ell \underbrace{- 2\eta \sum_{\ell=0}^{L} \sum_{t = 0}^{k-1} \left(1-2\eta\mu_\ell\omega_\ell\right)^t (\mathbf{I} \!-\! 2\eta {\mathbf{H}}^\infty\mathbf{P}_s)^{k-t-1} \omega_\ell\boldsymbol{\varepsilon}^\ell_1}_{\boldsymbol{\varepsilon}_1}.
\end{align*}
Now, since ${\mathbf{H}}^\infty \mathbf{P}_s$ is self-adjoint and positive definite in $(\R^n, \pairps{\cdot}{\cdot})$, by the way we pick $\eta$, we guarantee that $\mathbf{I} - 2\eta{\mathbf{H}}^\infty \mathbf{P}_s$ is positive definite in $(\R^n, \pairps{\cdot}{\cdot})$ and hence
\begin{equation*}
    \normps{\mathbf{I} - 2\eta{\mathbf{H}}^\infty \mathbf{P}_s} \leq 1 - 2\eta \lambda_0.
\end{equation*}
This gives us $\normps{\boldsymbol{\varepsilon}_1} \leq \sum_{\ell = 0}^L \mu_\ell^{-1} \normps{\boldsymbol{\varepsilon}_1^\ell}$, and the result follows from Theorem~\ref{thm.decoupledmain}.
\end{proof}

While~\Cref{thm.sobconvergence} captures the frequency biasing in squared $H^s$ loss training up to quadrature errors, analyzing the quadrature errors can be task-specific. Therefore, studying the quadrature rules could be a direction of future research.

\section{Experimental details}\label{sec:experiment_sup}
We now present the details of the three experiments in~\Cref{sec:experiments}.

\subsection{Learning trigonometric polynomials on the unit circle}
In~\cref{sec:test1}, we train a NN with data derived from sampling a trigonometric polynomial at nonuniform points on the unit circle.  The $n = 1140$ nonuniform data points $\{\mathbf{x}_i\}$ for this test are generated by taking the union of three sets of equally spaced points,
as shown in~\Cref{fig:weighted-1D}. The data set $\{(\cos(\theta_i), \sin(\theta_i))\}$ contains $100$ equally spaced nodes sampled from $\theta \in (0,2\pi]$, superimposed with $40$ equally spaced nodes sampled from $\theta \in [0,0.3\pi]$ and $1000$ equally spaced nodes sampled from $\theta \in [1.4\pi,1.8\pi]$ (see~\Cref{fig:weighted-1D}, left). We construct the quadrature weights $c_j$ by minimizing $\sum_{j=1}^n c_j^2$, under the constraints that the $c_i$'s are positive and the quadrature rule is exact on $\Pi^{2}_{55}$. Here, $55$ is selected as it is close to the maximum degree such that we can efficiently solve the optimization problem for the quadrature weights. 

The experiment consists of two parts. First, we compare the effects of training with the loss function $\Phi$ based on~\cref{eq:framework1} against the squared $L^2$ loss function $\widetilde \Phi$ in~\cref{eq:framework2}, where
\begin{equation*}
    \Phi(\mathbf{W}) = \frac{1}{2n}\sum_{i=1}^n \abs{\mathcal{N}(\mathbf{x}_i) - g(\mathbf{x}_i)}^2, \qquad \widetilde \Phi(\mathbf{W}) = \frac{1}{2}\sum_{i=1}^n c_i \abs{\mathcal{N}(\mathbf{x}_i) - g(\mathbf{x}_i)}^2.
\end{equation*}
We define the target function to be
\begin{equation*}
   g(\mathbf{x}) =  \tilde{g}(\theta) = \sum_{\ell=1}^9 \sin(\ell \theta), \qquad \mathbf{x} = \mathbf{x}(\theta) = (\cos\theta, \sin\theta) \in \sS^1,
\end{equation*}
where $\tilde{g}(\theta) = g( \mathbf{x}(\theta))$. We set up two 2-layer ReLU-activated NNs with $5\times 10^4$ hidden neurons in each layer and train them using the same training data and gradient descent procedure, except with different loss functions $\Phi$ and $\widetilde \Phi$.

To evaluate the frequency loss, we collect $100$ uniform samples from $\mathcal{N}(\mathbf{x})$ and $g(\mathbf{x})$ and compute the Fourier coefficients $\widehat{\mathcal N}(\ell)$ and $\hat{g}(\ell)$ such that functions
\begin{equation*}
  \mathcal{N}(\mathbf{x}) =   \widetilde{\mathcal{N}} (\theta) \approx \sum_{\ell=0}^{30} \widehat{\mathcal N}(\ell) e^{i\ell\theta}, \qquad   g(\mathbf{x}) =   \tilde{g} (\theta) \approx \sum_{\ell=0}^{30} \hat{g}(\ell) e^{i\ell\theta},
\end{equation*}
where $\widetilde{\mathcal{N}}(\theta) = \mathcal{N}(\mathbf{x}(\theta))$.
The frequency loss $|\widehat{\mathcal N}(\ell)-\hat{g}(\ell)|$ estimates how $g(\mathbf{x})$ is approximated by $\mathcal{N}(\mathbf{x})$ at the frequency $\ell$ when training with the different loss functions (see~\Cref{fig:weighted-1D}, middle). In addition, we also train the NN to learn each individual frequency with the training data coming from $g_\ell = \sin(\ell\theta)$ and count the number of iterations it takes to obtain $\widetilde \Phi(\mathbf{W}) < 1.0 \times 10^{-3}$ (see~\Cref{fig:weighted-1D}, right).

The second part of the experiment focuses on NN training with a discretized squared Sobolev norm as the loss function. More precisely, we fix some $s \in \R$ and consider the Sobolev loss function~\cref{eq.sobolevloss}.
For $\sS^1$, we have $N(d,\ell) = 2$ if $\ell > 0$. We take $Y_{\ell,1}(\mathbf{x}) = \sin(\ell\theta)/\sqrt{2\pi}$ and $Y_{\ell,2}(\mathbf{x}) = \cos(\ell\theta)/\sqrt{2\pi}$, where the $\sqrt{2\pi}$ factor is a normalization factor. 

We set $\lmax= 30$ in~\cref{eq.sobolevloss} and learn $g$ with different $s$ values ranging from $-1$ to $4$. For each $s$, we compute the frequency loss $|\widehat{\mathcal N}(\ell)-\hat{g}(\ell)|$  after
different numbers of epochs. As $s$ increases, the frequency loss for higher frequencies decays faster (see~\Cref{fig:sobolev-1D}). In particular, we see in~\Cref{fig:FrequencyBiasingRainbow}, the ``rainbow'' plot, that when $s = -1$, the lower-frequency losses are much smaller than the higher ones after $5000$ iterations, while when $s = 3$
the higher frequencies are learned faster than the lower ones.

\comment{
\begin{figure}
\centering
\begin{subfigure}{0.185\textwidth}
\includegraphics[width = \textwidth]{./figures-1D/nodes.eps}
\subcaption{}
\label{fig:nodes-1D}
\end{subfigure}
\hfill
\begin{subfigure}{0.24\textwidth}
\includegraphics[width = \textwidth]{./figures-1D/uniform_train.eps}
\subcaption{}
\label{fig:uniform_train-1D}
\end{subfigure}
\hfill
\begin{subfigure}{0.24\textwidth}
\includegraphics[width = \textwidth]{./figures-1D/weighted_train.eps}
\subcaption{}
\label{fig:weighted_train-1D}
\end{subfigure}
\hfill
\begin{subfigure}{0.24\textwidth}
\includegraphics[width = \textwidth]{./figures-1D/iterations.eps}
\subcaption{}
\label{fig:iterations-1D}
\end{subfigure}
\caption{(\subref{fig:nodes-1D}): The distribution of the training data. (\subref{fig:uniform_train-1D}): The change of frequency loss against the number of iterations for $\ell^2$-loss-based training, where $k$ is the frequency and both axes are on the log scale.~(\subref{fig:weighted_train-1D}): The change of frequency loss against the number of iterations for $L^2$-loss-based training, where $k$ is the frequency and both axes are on the log scale.~(\subref{fig:iterations-1D}): The number of iterations needed to achieve a $5.0$ unnormalized $L^2$-loss in learning $\sin(kx)$ for $k = 3, \ldots, 10$ with the weighted loss function. The $x$-axis corresponds to $\ln(k)$ and the $y$-axis is the natural log of the number of iterations. The red crosses correspond to multiple random initializations, the green dots correspond to the average, and the blue line is a reference line of slope $2$ corresponding to the theoretical rate (see~\cite{basri}).}
\label{fig:weighted-1D}
\end{figure}
}

\begin{figure}
\centering
\begin{minipage}{.32\textwidth}
\begin{overpic}[width=\textwidth]{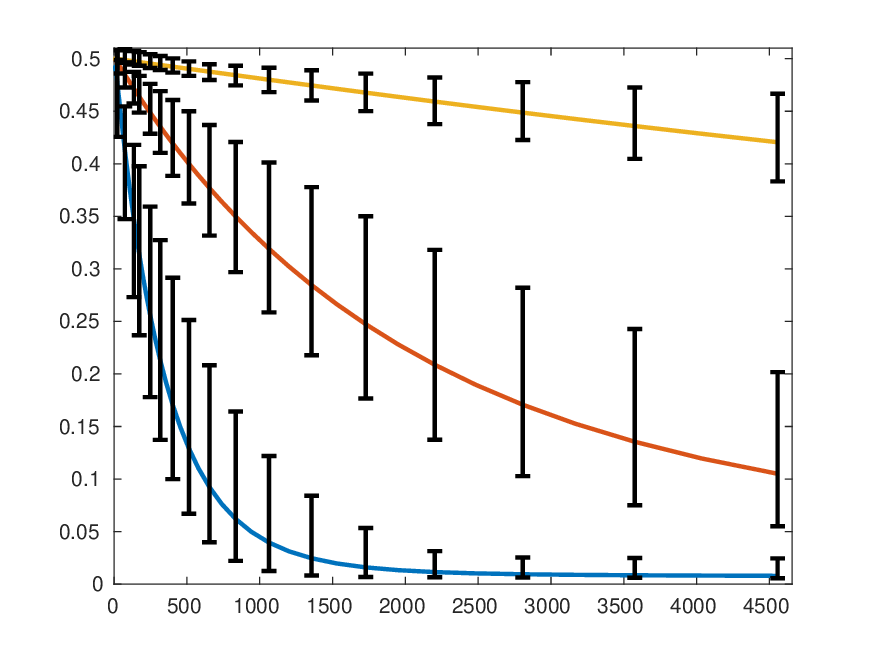} 
\put(0,30) {\rotatebox{90}{loss}}
\put(40,-2) {epochs}
\end{overpic} 
\end{minipage} 
\begin{minipage}{.32\textwidth}
\begin{overpic}[width=\textwidth]{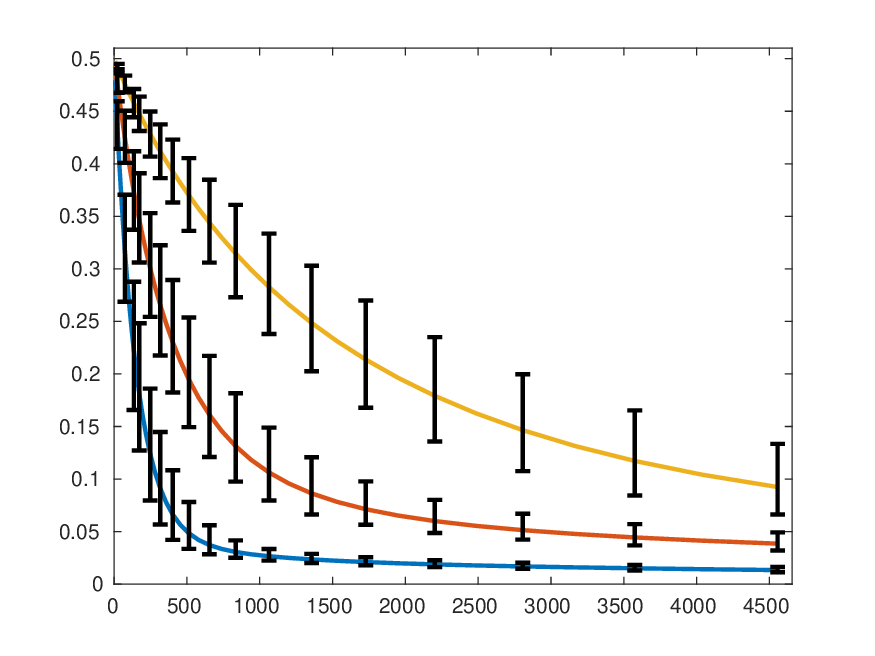} 
\put(0,30) {\rotatebox{90}{loss}}
\put(40,-2) {epochs}
\end{overpic} 
\end{minipage}
\begin{minipage}{.32\textwidth}
\begin{overpic}[width=\textwidth]{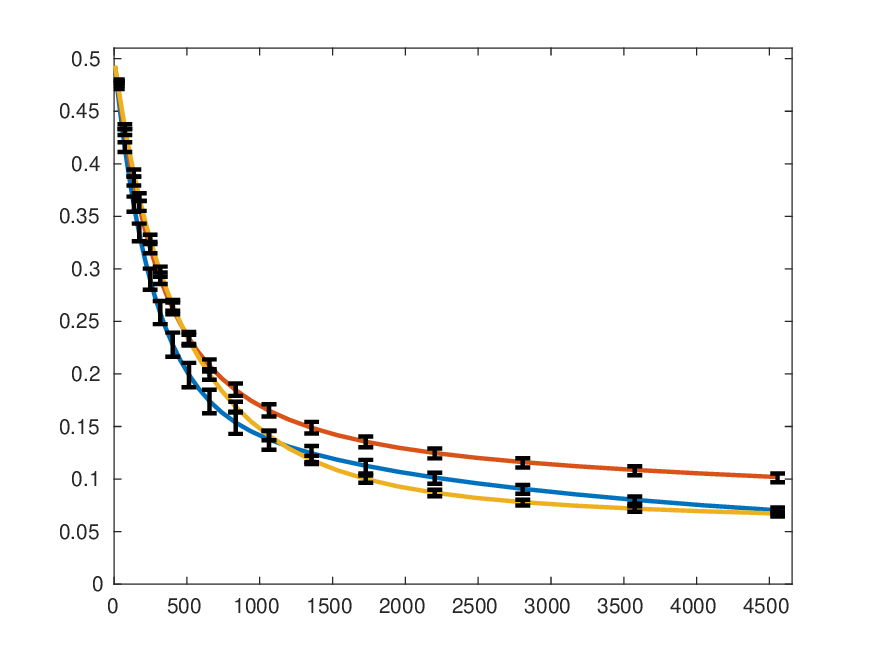} 
\put(0,30) {\rotatebox{90}{loss}}
\put(40,-2) {epochs}
\end{overpic} 
\end{minipage}
\caption{Frequency loss for $\ell=3$ (blue), $\ell=5$ (red), and $\ell=9$ (yellow) based on the squared $H^s$ norm as the loss function. Left: $s=-1$; Middle: $s = 0$; Right: $s=1$. The error bars are generated using the results obtained by executions with thirty different random seeds. In the left figure, the result of one of the thirty executions is omitted because the NN is trapped by a local minimizer that is not a global one, making the NN not converge to the target function. \label{fig:sobolev-1D}}
\end{figure}

\subsection{Learning spherical harmonics on the unit sphere}
In~\cref{sec:test2}, we train a NN with data derived from sampling function defined on $\mathbb{S}^2$ at nonuniform points. The training data is a maximum determinant set of 2500 points that comes from the so-called ``spherepts'' dataset~\citep{wright2015}. In this experiment, the target function is
\[
    g(\mathbf{x}) = \sum_{\ell=1}^{15} Y_{2\ell,0}(\mathbf{x}),
\]
where $Y_{2\ell, 0}$ is the normalized zonal spherical harmonic function of degree $2\ell$. Therefore, the spherical harmonic coefficients of $g$ defined in~\cref{eq:g_lp} satisfy $\hat{g}_{\ell,p} = 1$ if $p = 0$ and $\ell = 2, 4, \ldots, 30$, and $\hat{g}_{\ell,p} = 0$ otherwise. We then train an NN with the squared $H^s$ norm as the loss function (see~\cref{eq.sobolevloss}) for $s= -1, 0, 2.5$. By~\Cref{thm.sobconvergence}, we need $\lmax \geq L = 30$. We set $\lmax = 40$ in~\cref{eq.sobolevloss}, assuming that the bandwidth $L$ is not known a priori.
We observe frequency biasing in this experiment by considering $|\widehat{\NN}_{\ell,p} - \hat{g}_{\ell,p}|$ after each epoch for $\ell = 4,10,20$. We confirm that low frequencies of $g$ are captured earlier in training than high frequencies when $s = -1$ and $s = 0$ (see~\Cref{fig:2D}, left and middle), and that this frequency biasing phenomena can be counterbalanced by taking $s = 2.5$ (see~\Cref{fig:2D}, right).


\subsection{Test on Autoencoder}
Autoencoders can be used as a generative model to randomly generate new data that is similar to the training data. In this experiment, we use Sobolev-based norms to improve NN training for producing new images of digits that match the MNIST dataset.  In our final experiment, we use the same autoencoder architecture as in~\citep{autoencoder}, except we train the autoencoder with a different loss function (see~\cref{sec:test3}). 

Here, for training images $\{\mathbf{x}_i\}$, a standard loss function can be 
\[
    \Phi(\mathbf{W}) = \frac{1}{2}\int \text{dist} ( \mathcal{N}(\mathbf{x}) , \mathbf{x} ) d\mu(\mathbf{x}) \approx \frac{1}{2n} \sum_{i=1}^n \text{dist} ( \mathcal{N}(\mathbf{x}_i) , \mathbf{x}_i ),
\]
where $\mu$ is the distribution of the training images and $\text{dist} ( \mathcal{N}(\mathbf{x}_i) , \mathbf{x}_i )$ is a distance between the output of the NN given by $\mathcal{N}(\mathbf{x})$ and the image $\mathbf{x}$. 
We select the distance metric to measure the difference between $\mathcal{N}(\mathbf{x})$ and $\mathbf{x}$ as
\begin{equation}
\Phi(\mathbf{W}) =  \frac{1}{2n} \sum_{i=1}^n \|  \mathcal{N}(\mathbf{x}_i)- \mathbf{x}_i \|_F^2 , 
\label{eq:dist} 
\end{equation} 
where $\|\cdot \|_F$ denotes the matrix Frobenius norm. The distance metric in~\cref{eq:dist} can be viewed as  a discretization of the continuous $L^2$ norm. That is, if one imagines generating a continuous function $x: [0,1]^2 \rightarrow [0,\infty)$ that interpolates an image as well as a function that interpolates the NN, then 
\begin{equation*}
  \frac{1}{N_{\text{pixel}}}\|\mathcal{N}(\mathbf{x})- \mathbf{x} \|_F^2 \approx \iint | \mathcal{N}(x) (y_1, y_2) - x(y_1,y_2)|^2 dy_1 dy_2 = \|\mathcal{N}(x) - x\|_{L^2}^2,
\end{equation*}
where $N_{\text{pixel}}$ is the total number of pixels of the image $\mathbf{x}$. In this continuous viewpoint, the $H^s$ norm is given by (assuming that the continuous interpolating functions $x$ and $\mathcal{N}(x)$ are constructed with periodic boundary conditions)
\begin{equation}\label{eq.sobolevpicture}
    \| \mathcal{N}(x) - x \|^2_{H^s} = \int (1+|\xi|^2)^s | \widehat {\mathcal{N}(x)}(\xi) - \widehat{x}(\xi) |^2 d\xi \approx   \| \mathbf{S}_s \circ \left(\mathbf{F}_l \left( \mathcal{N}(\mathbf{x})- \mathbf{x}\right) \mathbf{F}_r^\top\right) \|_F^2,
\end{equation}
where $\mathbf{F}_l, \mathbf{F}_r$ are the left and right 2D-DFT matrices, respectively, $(S_s)_{j\ell} = (1+j^2+\ell^2)^{s/2}$, and `$\circ$' is the Hadamard product. Hence, if we define $\text{vec}(\mathbf{A})$ to be the vector obtained by reshaping a matrix $\mathbf{A}$ using the column major order, then we have
\begin{equation*}
    \| \mathbf{S}_s \circ \left(\mathbf{F}_l \left( \mathcal{N}(\mathbf{x})- \mathbf{x}\right) \mathbf{F}_r^\top\right) \|_F^2 = \norm{\text{diag}(\text{vec}(\mathbf{S}_s)) (\mathbf{F}_r \otimes \mathbf{F}_l) \text{vec}(\mathcal{N}(\mathbf{x})- \mathbf{x})}^2_2,
\end{equation*}
where $\otimes$ is the Kronecker product of two matrices. Setting $\mathbf{J}_s = \text{diag}(\text{vec}(\mathbf{S}_s)) (\mathbf{F}_r \otimes \mathbf{F}_l)$, the loss function in our NN training can be written as
\begin{equation} 
    \Phi_s(\mathbf{W}) = \frac{1}{2n} \sum_{i=1}^n \| \mathbf{J}_s \text{vec}\left(\mathcal{N}(\mathbf{x}_i)- \mathbf{x}_i\right) \|_2^2 = \frac{1}{2n} (\mathbf{u} - \mathbf{y})^\top (\mathbf{I} \otimes \mathbf{J}_s^\top \mathbf{J}_s) (\mathbf{u} - \mathbf{y}),
\label{eq:finalloss} 
\end{equation}
where $\mathbf{I}$ is the $n$-by-$n$ identity matrix and $\mathbf{u}, \mathbf{y}$ are the vectors of length $n \times N_{\text{pixel}}$ given by $\mathbf{u} = (\text{vec}(\mathcal{N}(\mathbf{x}_1))^\top, \ldots, \text{vec}(\mathcal{N}(\mathbf{x}_n))^\top)^\top$ and $\mathbf{y} = (\text{vec}(\mathbf{x}_1)^\top, \ldots, \text{vec}(\mathbf{x}_n)^\top)^\top$. Hence, the discrete NTK matrix is given by $n^{-1}\mathbf{H}^\infty (\mathbf{I} \otimes \mathbf{J}_s^\top \mathbf{J}_s)$, where the $(i,j)$th sub-block is $\mathbf{H}^\infty_{ij} = \big \langle \frac{\partial \text{vec}(\mathcal{N}(\mathbf{x}_i;\mathbf{W})) }{\partial \mathbf{W}},\frac{\partial \text{vec}(\mathcal{N}(\mathbf{x}_j;\mathbf{W}))}{\partial \mathbf{W}}  \big \rangle$ for $i, j = 1, \ldots, n$. Here, $\big \langle \frac{\partial \text{vec}(\mathcal{N}(\mathbf{x}_i;\mathbf{W})) }{\partial \mathbf{W}},\frac{\partial \text{vec}(\mathcal{N}(\mathbf{x}_j;\mathbf{W}))}{\partial \mathbf{W}}  \big \rangle$ is interpreted as the $N_{\text{pixel}}$-by-$N_{\text{pixel}}$ matrix whose $(i',j')$th entry is $\big \langle \frac{\partial [\text{vec}(\mathcal{N}(\mathbf{x}_i;\mathbf{W}))_{i'}] }{\partial \mathbf{W}},\frac{\partial [\text{vec}(\mathcal{N}(\mathbf{x}_j;\mathbf{W}))_{j'}]}{\partial \mathbf{W}}  \big \rangle$. This means that the frequency biasing behavior during NN training is directly affected by the choice of $s$. We remark that while~\cref{eq:finalloss} is a mathematically equivalent expression for the loss function that allows us to easily express the NTK, in practice, we implement the loss function based on~\cref{eq.sobolevpicture} using a 2D FFT protocol.

We use the same autoencoder architecture as in~\citep{autoencoder}, except with the loss function in~\cref{eq:finalloss} for $s = -1, 0, 1$. \revise{We train the autoencoder using mini-batch gradient descent with batch size equal to $256$.} We first pollute the training images with low-frequency noise and train the NN, hoping that the trained NN will act as filter for the noise. We see that training the NN with~\cref{eq:finalloss} for $s = 1$ gives us the best results due to the high-frequency biasing induced by the choice of the loss function. Although $\mathbf{H}^\infty$ is low frequency biasing, the high-frequency biasing of $\mathbf{J}_s^\top \mathbf{J}_s$ dominates for sufficiently large $s$. In that case, the low-frequency noise barely changes the training in the earlier epochs as the low-frequency components of the residual correspond to small eigenvalues of $\mathbf{H}^\infty (\mathbf{I} \otimes \mathbf{J}_s^\top \mathbf{J}_s)$.  Similar results are discussed in~\citep{engquist2020quadratic} and \citep{yang2021implicit} in the inverse problem and image processing contexts, respectively.

The opposite phenomenon occurs when we add high-frequency noise (see~\Cref{fig:autoencoder}, bottom row). Since $\mathbf{H}^\infty$ by itself makes the NN training procedure bias towards low-frequencies, the output for $s=0$ does already filter high-frequency noise. Since $\mathbf{J}_s^\top \mathbf{J}_s$ for $s<0$ further biases towards low-frequencies, one can obtain better high-frequency filters. We observe that the best denoising results for the autoencoder comes from selecting $s=-1$ (see~\Cref{fig:autoencoder}, bottom row).

\section{Computation of Quadrature Weights}\label{sec:computeweight}

\revise{
In practice, the training dataset usually does not come with a carefully designed quadrature rule. Hence, we inevitably need to compute a set of quadrature weights before training the NN. In this section, we briefly discuss methods for computing positive quadrature weights.}

\revise{Given a set of points $\{\mathbf{x}_i\}_{i=1}^n$ on $\sS^{d-1}$, we wish to construct a quadrature rule so that
\begin{equation*}
    I_n(f) := \sum_{i=1}^n c_i f(\mathbf{x}_i) \approx \int_{\sS^{d-1}} f(\mathbf{x}) d\mathbf{x}
\end{equation*}
for sufficiently smooth $f$, where $c_i > 0$ are positive quadrature weights. One approach that could give us a very accurate quadrature rule is to guarantee that
\begin{equation}\label{eq.exactquadrule}
    I_n(f) = \int_{\sS^{d-1}} f(\mathbf{x}) d\mathbf{x}, \qquad f \in \Pi_\ell^d,
\end{equation}
where $\text{dim}(\Pi_\ell^d) \leq n$. The one-dimensional case of such quadrature rules was studied in~\citep{austin2017trig,yu2022stability} and the general higher-dimensional case was analyzed in~\citep{mhaskar,dai}. Given any dataset $\{\mathbf{x}_i\}_{i=1}^n$ and $\ell$ so that $\text{dim}(\Pi_\ell^d) \leq n$, one cannot guarantee the existence of a positive quadrature rule satisfying~\cref{eq.exactquadrule}~\citep{mhaskar,dai}, even when the distribution of $\{\mathbf{x}_i\}_{i=1}^n$ is very regular~\citep{yu2022stability}. On the other hand, by choosing $\ell$ to be small, we can eventually find an $\ell$ for which~\cref{eq.exactquadrule} holds. When such a positive quadrature rule exists,~\cite{mhaskar} proposed to solve the following feasible constrained quadratic program
\begin{equation*}
\begin{aligned}
\min_{c_i} \quad & \sum_{i=1}^n c_i^2\\
\textrm{s.t.} \quad & c_i > 0 \quad \forall 1 \leq i \leq n,\\
  & \sum_{i=1}^n c_i Y_{j,p}(\mathbf{x}_i) = \int_{\sS^{d-1}} Y_{j,p}(\mathbf{x}) d\mathbf{x} \quad \forall 0 \leq j \leq \ell, 1 \leq p \leq N(d,j).
\end{aligned}
\end{equation*}
}

\revise{
While~\cref{eq.exactquadrule} gives us guarantee on the accuracy of the quadrature rule (provided $\ell$ is not too small), it is not always practical to compute the quadrature weights in this way. Indeed, if $d$ is large, then we need tremendous amount of points to guarantee that $\text{dim}(\Pi_\ell^d) \leq n$ even for a small $\ell$. Also, if we have too many data points, then the quadratic program can get infeasible to solve. Hence, we need some other methods for computing quadrature weights that, albeit less accurate, can be applied more cheaply to general datasets. One of the many possible approaches is to do kernel density estimation~\citep{rosenblatt1956remarks,parzen1962estimation}. To do so, we fix a positive kernel $\mathcal{K}(\mathbf{x},\mathbf{y}) = \mathcal{K}(\arccos(\mathbf{x}^\top \mathbf{y}))$ defined on $\sS^{d-1} \times \sS^{d-1}$. A common choice of $\mathcal{K}$ can be the Gaussian density function of standard deviation $1$ centered at $0$. For each $h > 0$, we then define a function $p_h$ on $\sS^{d-1}$ by
\begin{equation*}
    p_h(\mathbf{x}) = \sum_{i=1}^n \mathcal{K}\left(\frac{\arccos(\mathbf{x}^\top \mathbf{x}_i)}{h}\right).
\end{equation*}
The bandwidth $h$ is a hyperparameter, and with an appropriate $h$, the function $p_h$ is an (unnormalized) estimate of the density function of the distribution of nodes. Hence, by setting
\begin{equation}
    c_i = A_d \frac{p_h^{-1}(\mathbf{x}_i)}{\sum_{j=1}^n p_h^{-1}(\mathbf{x}_j)},
\end{equation}
we obtain a positive quadrature rule that approximates the integral of smooth functions on $\sS^{d-1}$.
}



\end{document}